\documentclass[11pt]{article}

\usepackage[preprint]{acl}
\usepackage{adjustbox}
\usepackage[table]{xcolor}
\usepackage{times}
\usepackage{latexsym}
\usepackage[T1]{fontenc}

\usepackage[utf8]{inputenc}

\usepackage{microtype}

\usepackage{inconsolata}
\usepackage{algorithm}
\usepackage{algorithmic}

\usepackage{booktabs}     
\usepackage{tabularx}     
\usepackage{array}        
\usepackage{ragged2e}     
\usepackage{makecell}     

\usepackage{xcolor}       
\definecolor{myblue}{RGB}{85,38,121} 
\usepackage{subcaption}

\usepackage[most]{tcolorbox} 

\usepackage{lipsum} 
\usepackage{enumitem} 

\usepackage{multirow}   
\usepackage{bm}

\usepackage{fix-cm}
\usepackage{tabularx}
\usepackage{array}

\newcolumntype{Y}{>{\centering\arraybackslash}X}
\newcolumntype{M}[1]{>{\centering\arraybackslash}p{#1}}
\newcolumntype{L}[1]{>{\raggedright\arraybackslash}p{#1}}
\usepackage{amsthm}
\usepackage{graphicx}
\usepackage{amssymb}
\newtheorem{lemma}{Lemma}
\newtheorem{theorem}{Theorem}

\newtheorem{proposition}{Proposition}
\newtheorem{corollary}{Corollary}

\definecolor{metablue}{RGB}{231,243,255}

\usepackage{amsmath}
\usepackage{mathtools}

\title{Revisiting Ripple Effects in Knowledge Editing through Pressure-Aware Joint Neighborhood Optimization}

\author{
  \textbf{Haoben Huang\textsuperscript{1,2,$\dagger$,$\ddagger$}},
  \textbf{Shuxin Liu\textsuperscript{1,$\dagger$}},
  \textbf{Ou Wu\textsuperscript{1,$\ast$}},
  \textbf{Di Gao\textsuperscript{1}}
\\
\\
  \textsuperscript{1}Hangzhou Institute for Advanced Study, \\University of Chinese Academy of Sciences, Hangzhou, China,\\
  \textsuperscript{2}College of Information Engineering, Zhejiang University of Technology, Hangzhou, China\\
  \vspace{0.1cm}
  \texttt{liushuxin25@ucas.ac.cn},
\texttt{wuou@ucas.ac.cn}
}

\begin{document}

\maketitle

\maketitle
\begingroup
\renewcommand{\thefootnote}{\fnsymbol{footnote}}
\footnotetext[2]{Equal contributions.}
\footnotetext[1]{Corresponding Author.}
\footnotetext[3]{Work done at HIAS, UCAS.}

\endgroup

\begin{abstract}
Single-edit updates in large language models can trigger ripple effects across local knowledge neighborhoods: desirable propagation to related facts and unintended perturbation of preserved ones. Existing methods address these two effects separately, without explicitly modeling their coupling. We challenge this separation through an analysis of ripple responses across typical baselines, identifying two coupled design pressures: editable-side coordination and preserved-side leakage. We propose \textbf{Joint Neighborhood Optimization (JNO)}, a new knowledge-editing framework to formalize and jointly address both pressures at the target-planning stage. JNO instantiates this principle through \textbf{Pressure-Aware Coordination (PAC)}, which jointly optimizes neighborhood target representations under coupled constraints, and a semantic pre-execution gate that rejects high-risk target plans before parameter execution. Experiments on RippleEdits show JNO improves propagation and preservation metrics by at least 7.0\% while preserving cross-backbone editing stability.
Code at: \url{https://anonymous.4open.science/r/JNO-F4CB}.
\end{abstract}

\section{Introduction}
Large Language Models (LLMs) encode factual knowledge in parameters~\cite{yao-etal-2023-editing,petroni2019language,roberts2020much,rogers2020primer}. 
As knowledge ages, full retraining is often prohibitively expensive~\cite{strubell2019energy,kaplan2020scaling,hoffmann2022training}. 
Knowledge editing (KE) addresses this by revising specific facts without costly retraining~\cite{de2021editing,mitchellfast}. 
Locate-then-edit methods, such as ROME~\cite{meng2022locating} and MEMIT~\cite{mengmass}, identify factual memories in feed-forward layers and modify local weights. 
However, real-world edits are rarely isolated: as illustrated in Fig.~\ref{fig:example}, modifying one fact may require updating logically related neighbors (e.g., team-dependent attributes) and preserving co-located but independent facts (e.g., subject's nationality), inducing \emph{ripple effects} over its local knowledge neighborhood~\cite{cohen2024evaluating}.

\begin{figure}[t]
    \centering
    \vspace{-0.15in}
\includegraphics[width=1.0\linewidth]{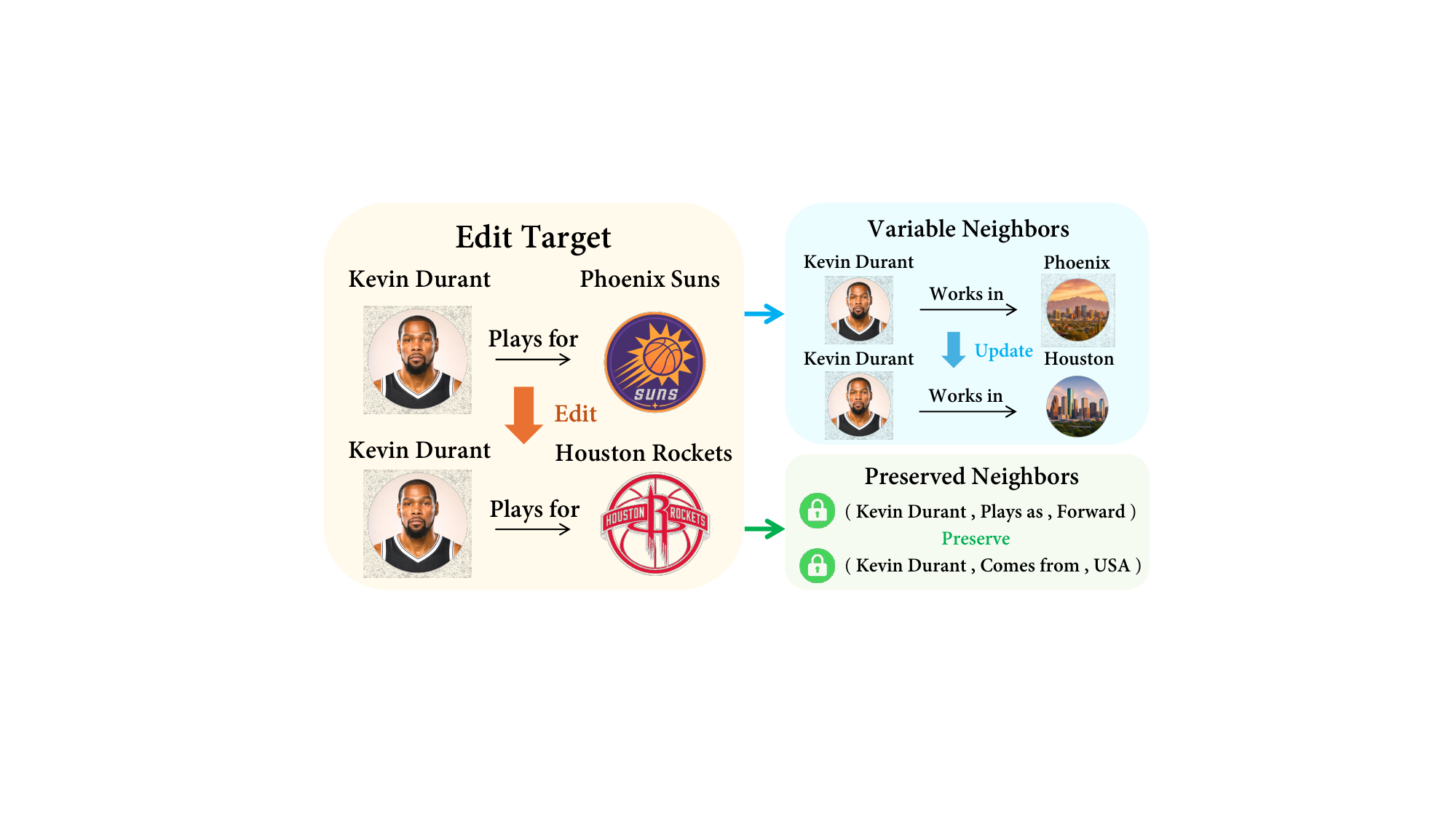}
    \vspace{-0.25in}
    \caption{Ripple effects in neighborhood-aware KE.
    }
    \label{fig:example}
    \vspace{-0.27in}
\end{figure}

These effects can be divided into \emph{positive ripple effects}, where variable facts should propagate consistently with the edit, and \emph{negative ripple effects}, where preserved ones should remain unchanged but are unintentionally perturbed~\cite{wang2025missing}. Existing methods address these effects through two largely separate lines. Propagation-oriented methods promote desirable propagation to related facts through knowledge graphs, memory association, or logical rules~\cite{zhang2024knowledge,park2025make,dong2025chainedit}, while preservation-oriented methods suppress harmful perturbation via neighboring-answer interference, same-subject preservation, knowledge conflict, and hidden-space ripple~\cite{ma2024neighboring,zhang2026disentangling,liunveiling,wang2024efficiently,duan2025related,dong2025memit}. 
However, this separation leaves a coupling unmodeled: propagation and preservation are two responses of the parameter update, so their targets cannot always be planned independently.

To this end, we revisit ripple effects through a quantitative analysis of how ripple responses depend on local neighborhood structure across edits. 
Our analysis identifies three factors and distills their patterns into two coupled pressures. Editable-side coordination pressure arises because related neighbors should move consistently, yet overly coupled or semantically mismatched targets can be hard to realize jointly. Preserved-side leakage pressure captures that dense clustering or edit proximity makes modest target shifts corrupt preserved facts.

To address these pressures, we propose \textbf{Joint Neighborhood Optimization (JNO)}, a pressure-aware editing framework that constructs structured neighborhoods and applies \textbf{Pressure-Aware Coordination (PAC)} to jointly optimize target representations under a pressure-aware objective, using pairwise coordination weights to coordinate variable-neighbor representations and preserved-side leakage weights to control leakage over preserved ones. 
For neighborhoods whose optimized targets fail a semantic-loss check, a pre-execution gate abstains rather than committing an unsafe edit.

Our contributions are summarized as follows:

\begin{itemize}[leftmargin=*,nosep]
\item We revisit ripple effects through a quantitative analysis, revealing two pressures: editable-side coordination and preserved-side leakage.
\item We propose JNO, a pressure-aware editing framework that constructs structured neighborhoods and uses PAC to jointly optimize representations. We further introduce a semantic pre-execution gate abstaining from semantically failed edits.
\item We evaluate JNO on KE benchmarks across Qwen2.5-1.5B-Instruct, GPT-J, and LLaMA3. Results show that JNO improves propagation and preservation metrics on RippleEdits by at least 7.0\% while preserving backbone stability.
\end{itemize}

\section{Related Work}
\subsection{Neighborhood-aware Knowledge Editing}

A challenge in KE is updating target facts while controlling their effects on neighborhoods. Locate-and-edit methods such as ROME and MEMIT infer targets for edited facts~\cite{meng2022locating,mengmass}, but RippleEdits shows that target rewriting does not ensure consistent updates to related knowledge~\cite{cohen2024evaluating}. This motivates neighborhood-aware editing: GLAME, MAKE, ChainEdit, and PropMEND promote propagation through knowledge graphs, internal memory, logical rules, or meta-trained updates~\cite{zhang2024knowledge,park2025make,dong2025chainedit,liu2025propmend}; PEAK, SSS, and DiKE mitigate perturbation to neighboring answers, irrelevant neighborhood examples, or same-subject knowledge~\cite{ma2024neighboring,wang-etal-2024-sss,zhang2026disentangling}. JNO jointly infers executable value targets over neighborhoods containing both editable and preserved samples.

\subsection{Selective Execution and Gating}
Selective decision making has been studied through reject-option and selective-prediction frameworks, which abstain on uncertain inputs to trade coverage for lower risk~\cite{chow1957optimum,el2010foundations,geifman2017selective,geifman2019selectivenet}. However, such selectivity remains underexplored in KE. Most methods execute requested updates without checking semantic feasibility, even when edits are difficult or incompatible with preserved neighborhood knowledge. Since editing reliability depends on robust keys~\cite{yan2025keys}, facts differ in editability~\cite{bi-etal-2025-decoding,wang2025revealing}, and aggressive edits may degrade broader abilities~\cite{gu2024model}. JNO bridges this gap by rejecting high-risk parameter updates before execution.

\section{Quantitative Analysis of Ripple Effects}
\label{sec:revisiting}
Recent work links ripple effects to gradient similarity or representational entanglement~\cite{qin2024does,baser2026clare}, and a recent benchmark measures how model interventions affect performance across semantic-distance neighborhoods~\cite{rinberg2025ripplebench}. Yet they mainly characterize ripple responses, not why propagation and perturbation co-occur. We revisit ripple effects with a unified view: positive propagation and negative perturbation are two responses of the same neighborhood structure to a parameter update.

\begin{table}[t]
\centering
\vspace{-0.1in}
\caption{PRR and NRR results~(\%) on GPT-J (6B).}
\label{tab:prr-nrr}
\vspace{-0.1in}
\scriptsize
\renewcommand{\arraystretch}{0.9}
\setlength{\tabcolsep}{2pt}
\resizebox{0.98\columnwidth}{!}{
\begin{tabular}{c|ccccc|c}
\toprule
\textbf{Metric} 
& \textbf{ROME} 
& \textbf{MEMIT} 
& \textbf{PMET} 
& \textbf{AlphaEdit} 
& \textbf{GLAME} 
& \textbf{Avg.} \\
\midrule
PRR & 60.47 & 58.17 & 58.93 & 66.83 & 68.23 & 62.53 \\
NRR & 65.92 & 65.85 & 66.28 & 71.19 & 54.46 & 64.74 \\
\bottomrule
\end{tabular}
}
\vspace{-0.25in}
\end{table}

\subsection{Positive and Negative Ripples}
\label{subsec:ripple_magnitude}

For an edit request $\boldsymbol{e}=(s,r,o \rightarrow o^*)$ with subject $s$, relation $r$, original object $o$, and target object $o^*$, we construct a local neighborhood with two disjoint subsets: the variable set $\mathcal{C}_c$ contains facts $(s_i,r_i,o_i \rightarrow o_i^*)$ to be updated consistently with $\boldsymbol{e}$, while the preserved set $\mathcal{C}_s$ contains facts $(s_j,r_j,o_j)$ to remain unchanged. Given original and edited models $f$ and $f^{+}$, let $\chi_i^{+}=\mathbf{1}\{f^{+}(s_i,r_i)=o_i^{*}\}$ and $\chi_j^{-}=\mathbf{1}\{f(s_j,r_j)=o_j \wedge f^{+}(s_j,r_j)\neq o_j\}$. We quantify desirable propagation by the Positive Ripple Rate (PRR) and harmful perturbation by the Negative Ripple Rate (NRR):
\vspace{-0.35em}
\begin{equation}
\small
\mathrm{PRR}(\boldsymbol{e})
=
\frac{1}{|\mathcal{C}_c|}
\sum _{i\in\mathcal{C}_c}\chi_i^{+},
\quad\mathrm{NRR}(\boldsymbol{e})
=
\frac{1}{|\mathcal{C}_s|}
\sum_{j\in\mathcal{C}_s}\chi_j^{-}.
\label{eq:ripple_rate}
\vspace{-0.35em}
\end{equation}

{Table~\ref{tab:prr-nrr} shows that across typical baselines on RippleEdits, average PRR remains around $62.53\%$ while average NRR reaches $64.74\%$} (details in Appendix~\ref{subsec:app_datasets}). Neither propagation nor preservation is reliable, which motivates examining which local structures co-vary with such failures.

\subsection{Factors Behind Ripple Variation}
\label{subsec:ripple_factors}
To examine why ripple responses vary across edits, we group requests by three structural probes of the local neighborhood: \emph{key-space coupling} ($d_k$), measuring edited-layer proximity between the edit and its neighbors; \emph{pre-trained entanglement} ($\eta$), measuring parameter-space co-movement from pre-training; and \emph{local neighborhood sensitivity} ($\lambda$), measuring preserved-sample density around the edit. Since probes are measured on naturally occurring neighborhoods without randomized interventions, we interpret patterns as correlational; details appear in Appendix~\ref{subsec:app_metrics} and~\ref{subsec:app_stratified_protocol}. 

Fig.~\ref{fig:ripple-factors} stratifies edits by probe and reports PRR and NRR for five editing methods. Three asymmetric patterns emerge. (i) PRR is non-monotonic in $d_k$: moderate proximity tracks higher propagation, whereas strong coupling can lower PRR when neighboring facts require different target updates. By contrast, NRR increases with $d_k$, suggesting that key proximity also exposes preserved facts to the edit-induced update. (ii) $\eta$ is positively associated with PRR for most methods, indicating that pre-trained co-movement can support propagation to related facts, while its association with NRR is weaker. (iii) $\lambda$ gives the clearest preservation signal: denser preserved regions track higher NRR without a comparable PRR gain.

These patterns reveal two separation modes: editable-side target incompatibility and preserved-side leakage. Notably, the same $d_k$ can help propagation yet harm preservation.

\subsection{Design Pressures from Ripple Diagnostics}
\label{sec:design-principles}
The above diagnostics motivate two pre-execution pressures on neighborhood target planning. They are not treated as causal laws, but as design pressures that a neighborhood-aware editor should account for before a parameter update.

\textbf{Editable-side coordination pressure.}
Variable facts should move consistently with the main edit, but their targets cannot be planned independently when keys are strongly coupled or pre-trained entanglement induces co-movement. The non-monotonic PRR over $d_k$ indicates that coupling helps only when targets are compatible; otherwise it creates coordination tension. The association between \(\eta\) and PRR further indicates that such co-movement is not merely geometric, but also reflected in the model's learned parameterization. This motivates the first design requirement:
\emph{coupled editable targets should avoid excessive separation while preserving distinct target semantics.}

\textbf{Preserved-side leakage pressure.}
Preserved facts should remain unchanged, yet they become vulnerable when the editable update is close to dense or aligned preserved regions. The monotone NRR trend over \(\lambda\) indicates aggregate fragility of dense preserved neighborhoods, while the residual association between \(d_k\) and NRR suggests directional exposure to the edit-induced update. This motivates the second design requirement:
\emph{editable residuals should be more tightly budgeted when their keys are more exposed to preserved keys.}

\begin{figure}[t]
    \centering
        \vspace{-0.15in}
\includegraphics[width=\linewidth]{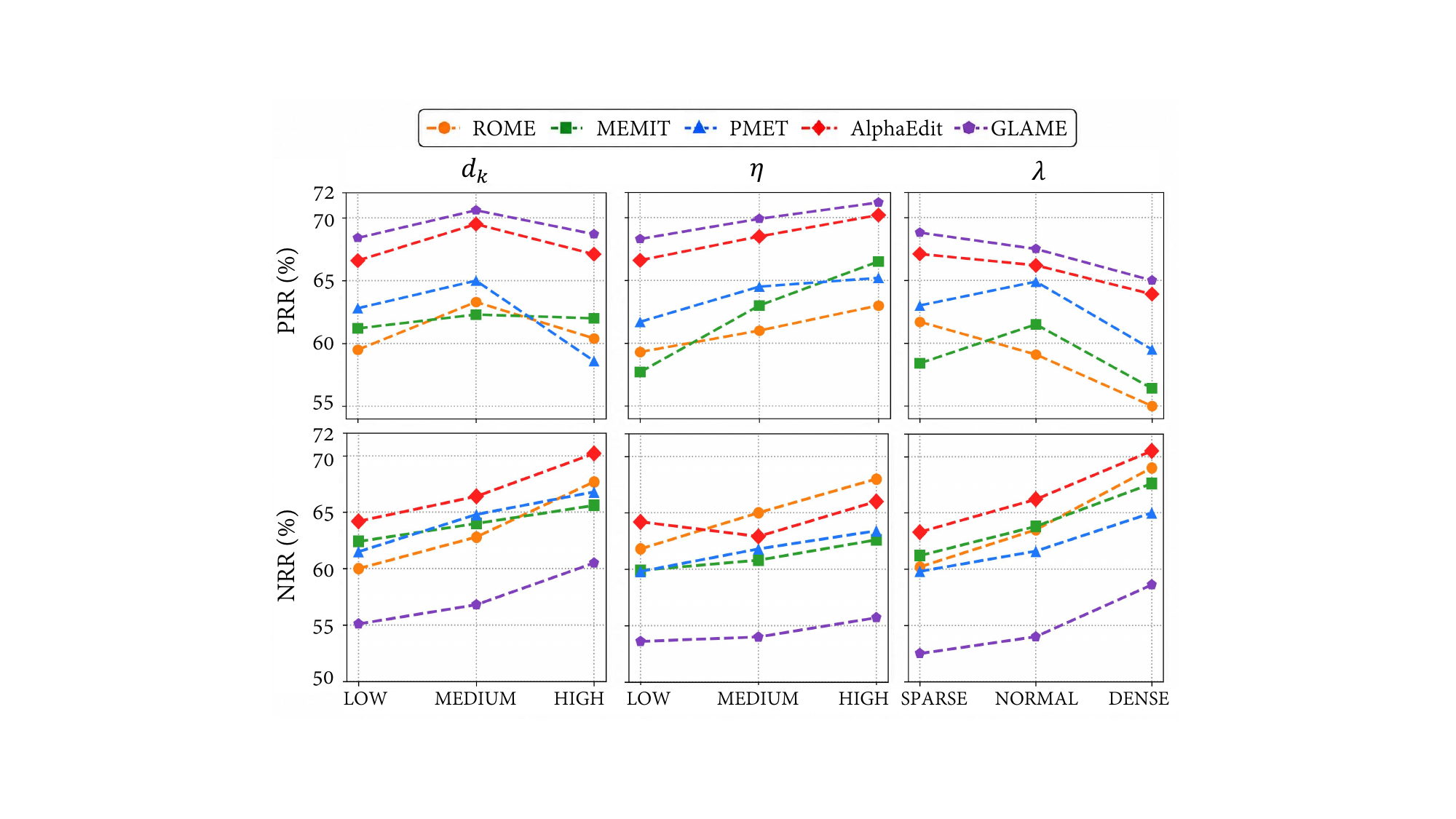}
    \vspace{-0.25in}
\caption{PRR and NRR stratified by structural probes ($d_k$, $\eta$, $\lambda$) across five editing methods on GPT-J (6B).}
    \label{fig:ripple-factors}
    \vspace{-0.3in}
\end{figure}

These requirements connect diagnostics to PAC. On the editable side, \(d_k\) diagnoses proximity over edited-layer keys \(\boldsymbol{k}_i\); JNO instantiates this coupling as weights \(s_{ij}\), yielding a target-separation penalty. The diagnostic probe uses cosine proximity for key-norm-invariant stratification, whereas PAC uses an inverse squared-distance kernel for weighting. For unit-normalized keys, \(\|\tilde{\boldsymbol{k}}_i-\tilde{\boldsymbol{k}}_j\|_2^2=2(1-\cos(\boldsymbol{k}_i,\boldsymbol{k}_j))\), where \(\tilde{\boldsymbol{k}}_i=\boldsymbol{k}_i/\|\boldsymbol{k}_i\|_2\). Thus, \(d_k\) and \(s_{ij}\) are diagnostic and optimization views of the same coupling pressure. The entanglement probe \(\eta\) validates co-movement but is not separately regularized because it depends on the layer, objective, and editor. On the preserved side, \(\lambda\) diagnoses neighborhood-level fragility, while residual budgeting requires per-sample exposure. JNO uses the upper-tail alignment score \(d_i\) to capture the preserved direction most exposed to spillover. Thus, \(s_{ij}\) and \(d_i\) are pressure-specific actionable surrogates derived from diagnostics, not replacements.

\begin{figure*}[t]
	\centering
\includegraphics[width=0.98\linewidth]{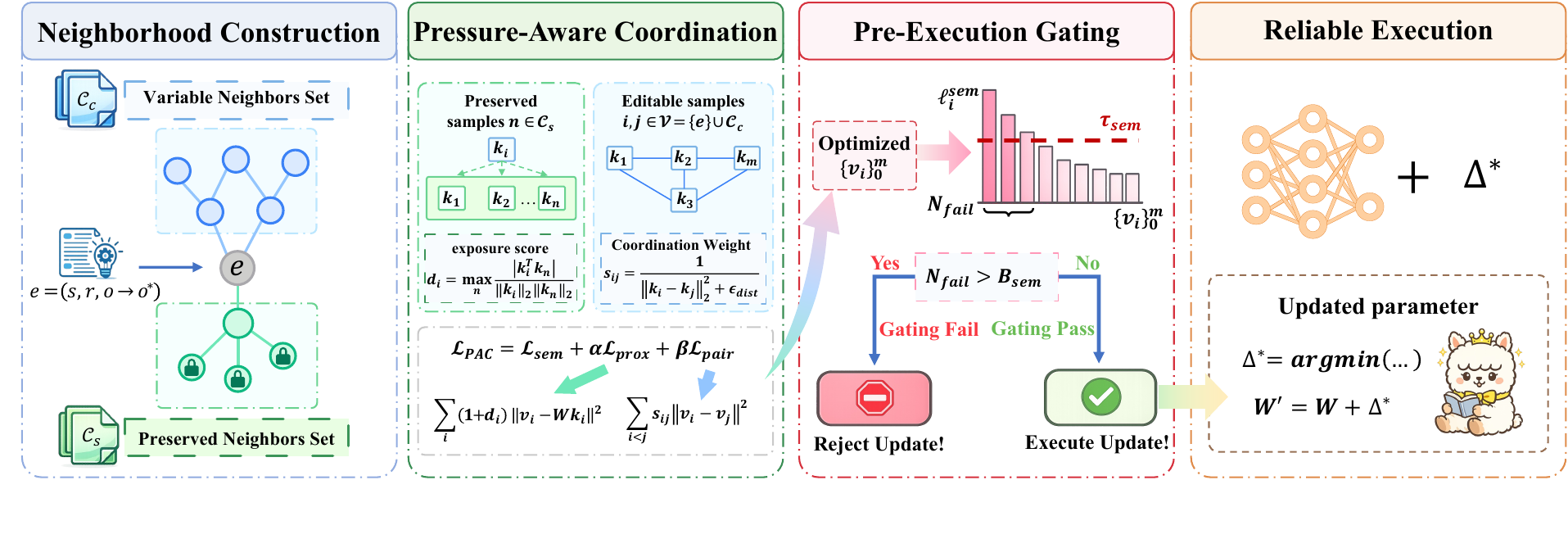}\vspace{-0.3in}
\caption{Overall framework of Joint Neighborhood Optimization (JNO).}\vspace{-0.2in}
 \label{fig:framework}
\end{figure*}

\section{Methodology}
\label{sec:method}
Guided by these two coupled pressures, JNO plans neighborhood targets by balancing editable-side compatibility and preserved-side exposure. As shown in Fig.~\ref{fig:framework}, JNO proceeds in four stages:
\begin{enumerate}[leftmargin=*,nosep]
    \item \emph{Structured Neighborhood Construction}: assemble the local neighborhood $\mathcal{C}=\{\boldsymbol{e}\}\cup\mathcal{C}_c\cup\mathcal{C}_s$.
    \item \emph{Pressure-Aware Coordination (PAC)}: jointly optimize target representations under a pressure-aware objective.
    \item \emph{Semantic Pre-execution Gating}: reject high-risk parametric updates before parameter execution.
    \item \emph{Reliable Execution}: instantiate gate-passed updates through a preservation-aware local update.
\end{enumerate}

This separates neighborhood-level target planning from parameter execution, in contrast to editing pipelines that commit a weight update directly after optimizing a single target fact.

\subsection{Structured Neighborhood Construction}
\label{sec:neighbor_construction}

For each edit request $\boldsymbol{e}$, JNO constructs $\mathcal{C}=\{\boldsymbol{e}\}\cup\mathcal{C}_c\cup\mathcal{C}_s$, where $\mathcal{C}_c$ contains editable consistency samples expected to co-update and $\mathcal{C}_s$ contains preserved specificity samples to remain unchanged. The editable subset is $\mathcal{V}=\{\boldsymbol{e}\}\cup\mathcal{C}_c$ with $|\mathcal{V}|=m+1$ and $\boldsymbol{e}$ indexed as $0$. Each neighborhood pairs the target edit $\boldsymbol{e}$ with variable neighbors generated through logical-rule-guided chained reasoning and preserved neighbors extracted from locality-related but non-target relations. Rule mining and prompting are detailed in Appendix~\ref{sec:neighborhood_construction}.

\subsection{Local Formalization of the Two Pressures}
\label{sec:motivation}
Under the local linear view, we derive two design regularizers for neighborhood target planning: editable-key compatibility and preserved-key residual budgeting.  As mapped in Sec.~\ref{sec:design-principles}, editable-side coordination is implemented through pairwise weights \(s_{ij}\), which instantiate key-space coupling via a distance-kernel form, while preserved-side leakage uses an exposure score \(d_i\) that captures the maximum alignment with preserved keys.

For each editable sample \(i\in\mathcal{V}\), let \(\boldsymbol{k}_i\), \(\boldsymbol{v}_i\), and \(\boldsymbol{r}_i=\boldsymbol{v}_i-\boldsymbol{W}\boldsymbol{k}_i\) denote its edited-layer key, planned target, and induced residual under the current weight \(\boldsymbol{W}\). Stacking them gives \(\boldsymbol{K}_{\mathcal{V}}=[\boldsymbol{k}_0,\dots,\boldsymbol{k}_m]\) and \(\boldsymbol{V}_{\mathcal{V}}=[\boldsymbol{v}_0,\dots,\boldsymbol{v}_m]\). For editable-side coupling, we define the coordination weight
\vspace{-0.7em}
\begin{equation}
s_{ij}=s_{ji}=\frac{1}{\|\boldsymbol{k}_i-\boldsymbol{k}_j\|_2^2+\epsilon_{\mathrm{dist}}},
\label{eq:sij}
\vspace{-0.3em}
\end{equation}
and let \(\boldsymbol{L}_s\) be the unnormalized graph Laplacian induced by \(s_{ij}\) for \(i\ne j\) with \(s_{ii}=0\), so \(\boldsymbol{L}_s\succeq 0\).

\begin{theorem}[Target-space compatibility under a shared local map]
\label{thm:editable_tension}
Suppose there exists a local update $\boldsymbol{\Delta}$ such that $(\boldsymbol{W}+\boldsymbol{\Delta})\boldsymbol{K}_{\mathcal{V}}=\boldsymbol{V}_{\mathcal{V}}$. Then
\vspace{-0.31em}
\begin{equation}
\small
\operatorname{tr}\!\big(\boldsymbol{V}_{\mathcal{V}}\boldsymbol{L}_s\boldsymbol{V}_{\mathcal{V}}^T\big)
\le
\|\boldsymbol{W}+\boldsymbol{\Delta}\|_{\mathrm{op}}^2
\operatorname{tr}\!\big(\boldsymbol{K}_{\mathcal{V}}\boldsymbol{L}_s\boldsymbol{K}_{\mathcal{V}}^T\big).
\label{eq:graph_executability_bound}
\vspace{-0.31em}
\end{equation}
\end{theorem}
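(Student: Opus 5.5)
The plan is to write both traces as weighted sums of pairwise squared differences and compare them term by term. The standard Laplacian identity does this. For any matrix $\boldsymbol{X}=[\boldsymbol{x}_0,\dots,\boldsymbol{x}_m]$ and $\boldsymbol{L}_s=\boldsymbol{D}-\boldsymbol{S}$, with $\boldsymbol{S}=(s_{ij})$, $s_{ii}=0$, and $\boldsymbol{D}=\operatorname{diag}(\sum_j s_{ij})$, we have
\begin{equation*}
\operatorname{tr}\big(\boldsymbol{X}\boldsymbol{L}_s\boldsymbol{X}^T\big)=\tfrac12\sum_{i,j} s_{ij}\,\|\boldsymbol{x}_i-\boldsymbol{x}_j\|_2^2 .
\end{equation*}
We first verify this identity. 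Expand $\operatorname{tr}(\boldsymbol{X}\boldsymbol{L}_s\boldsymbol{X}^T)=\sum_{i,j}(\boldsymbol{L}_s)_{ij}\boldsymbol{x}_i^T\boldsymbol{x}_j$. Then substitute the degree and off-diagonal entries and use the symmetry $s_{ij}=s_{ji}$ from Eq.~\eqref{eq:sij}. We apply the identity once with $\boldsymbol{X}=\boldsymbol{V}_{\mathcal{V}}$ and once with $\boldsymbol{X}=\boldsymbol{K}_{\mathcal{V}}$.

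Next we use the hypothesis. Set $\boldsymbol{M}=\boldsymbol{W}+\boldsymbol{\Delta}$. The assumption $(\boldsymbol{W}+\boldsymbol{\Delta})\boldsymbol{K}_{\mathcal{V}}=\boldsymbol{V}_{\mathcal{V}}$ read column by column gives $\boldsymbol{v}_i=\boldsymbol{M}\boldsymbol{k}_i$ for every $i\in\mathcal{V}$. Hence $\boldsymbol{v}_i-\boldsymbol{v}_j=\boldsymbol{M}(\boldsymbol{k}_i-\boldsymbol{k}_j)$, and the definition of the operator norm gives
\begin{equation*}
\|\boldsymbol{v}_i-\boldsymbol{v}_j\|_2^2\le\|\boldsymbol{M}\|_{\mathrm{op}}^2\,\|\boldsymbol{k}_i-\boldsymbol{k}_j\|_2^2 .
\end{equation*}

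Finally we sum this inequality with the weights $\tfrac12 s_{ij}$. These weights are nonnegative because $\epsilon_{\mathrm{dist}}>0$, so the inequality is preserved under summation. Reading the identity from the first step backwards on both sides gives Eq.~\eqref{eq:graph_executability_bound}.

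An alternative route writes $\operatorname{tr}(\boldsymbol{M}\boldsymbol{K}\boldsymbol{L}_s\boldsymbol{K}^T\boldsymbol{M}^T)$ and uses $\boldsymbol{L}_s\succeq0$ to factor $\boldsymbol{K}\boldsymbol{L}_s\boldsymbol{K}^T=\boldsymbol{B}\boldsymbol{B}^T$. Then $\|\boldsymbol{M}\boldsymbol{B}\|_F^2\le\|\boldsymbol{M}\|_{\mathrm{op}}^2\|\boldsymbol{B}\|_F^2$ gives the same bound. This confirms the result independently of the pairwise expansion.

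There is no real obstacle here. The only care needed is bookkeeping in the Laplacian identity: the factor $\tfrac12$, the convention $s_{ii}=0$, and the fact that the nonnegativity of the weights is exactly what lets the pointwise inequality pass to the sum.
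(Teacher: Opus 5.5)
Your proof is correct, but your main argument is not the paper's. The paper's proof is essentially your ``alternative route.'' It writes $\boldsymbol{V}_{\mathcal{V}}\boldsymbol{L}_s\boldsymbol{V}_{\mathcal{V}}^T=(\boldsymbol{W}+\boldsymbol{\Delta})\boldsymbol{K}_{\mathcal{V}}\boldsymbol{L}_s\boldsymbol{K}_{\mathcal{V}}^T(\boldsymbol{W}+\boldsymbol{\Delta})^T$ and notes that $\boldsymbol{K}_{\mathcal{V}}\boldsymbol{L}_s\boldsymbol{K}_{\mathcal{V}}^T\succeq 0$. It then applies $\operatorname{tr}(\boldsymbol{A}\boldsymbol{M}\boldsymbol{A}^T)\le\|\boldsymbol{A}\|_{\mathrm{op}}^2\operatorname{tr}(\boldsymbol{M})$ with $\boldsymbol{A}=\boldsymbol{W}+\boldsymbol{\Delta}$ and $\boldsymbol{M}=\boldsymbol{K}_{\mathcal{V}}\boldsymbol{L}_s\boldsymbol{K}_{\mathcal{V}}^T$ (the paper's $\boldsymbol{M}$ is not yours). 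The paper states this inequality without proof; your factorization $\boldsymbol{K}_{\mathcal{V}}\boldsymbol{L}_s\boldsymbol{K}_{\mathcal{V}}^T=\boldsymbol{B}\boldsymbol{B}^T$, together with $\|\boldsymbol{A}\boldsymbol{B}\|_F\le\|\boldsymbol{A}\|_{\mathrm{op}}\|\boldsymbol{B}\|_F$, supplies it.

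Your primary argument instead expands both traces through the Laplacian identity and compares them pair by pair. This buys three things:
\begin{itemize}
\item It needs only nonnegative symmetric weights $s_{ij}$, with no spectral facts.
\item It proves something stronger: every pair individually satisfies $\|\boldsymbol{v}_i-\boldsymbol{v}_j\|_2\le\|\boldsymbol{W}+\boldsymbol{\Delta}\|_{\mathrm{op}}\|\boldsymbol{k}_i-\boldsymbol{k}_j\|_2$. This is exactly the paper's informal reading (large target separation over tightly coupled keys forces a larger shared map), and it matches the pairwise form of $\mathcal{L}_{\mathrm{pair}}$ in Eq.~(\ref{eq:jno_pac}).
\item It shows that $\|\boldsymbol{W}+\boldsymbol{\Delta}\|_{\mathrm{op}}$ may be replaced by the largest ratio $\|(\boldsymbol{W}+\boldsymbol{\Delta})(\boldsymbol{k}_i-\boldsymbol{k}_j)\|_2/\|\boldsymbol{k}_i-\boldsymbol{k}_j\|_2$ over pairs with $\boldsymbol{k}_i\neq\boldsymbol{k}_j$.
\end{itemize}

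The paper's matrix-level route is shorter. It also holds verbatim with any positive semidefinite matrix in place of $\boldsymbol{L}_s$, such as normalized Laplacians, kernel matrices, or signed weightings that remain PSD. Your termwise comparison would fail there as soon as some weight is negative.

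Both arguments carry over unchanged to Corollary~\ref{cor:residual_compatibility}, with $\boldsymbol{\Delta}$ in place of $\boldsymbol{W}+\boldsymbol{\Delta}$.
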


Theorem~\ref{thm:editable_tension} formalizes editable-side coordination pressure: large target separation over tightly coupled editable keys requires a larger shared local map. PAC relaxes this into the differentiable pairwise term $\mathcal{L}_{\mathrm{pair}}=\operatorname{tr}(\boldsymbol{V}_{\mathcal{V}}\boldsymbol{L}_s\boldsymbol{V}_{\mathcal{V}}^T)$, penalizing such separation over tightly coupled keys. This pairwise penalty is a soft executability regularizer rather than a hard clustering constraint. Balanced by semantic likelihood, it coordinates coupled samples only when their target semantics remain satisfiable. Appendix~\ref{subsec:proof_editable_tension} proves the theorem and gives Corollary~\ref{cor:residual_compatibility} relating coupled residual variation to $\|\boldsymbol{\Delta}\|_{\mathrm{op}}$.

For preserved-side exposure, fix an editable $i\in\mathcal{V}$ and a preserved one $n\in\mathcal{C}_s$ with key $\boldsymbol{k}_n$. Split
$\boldsymbol{k}_n=a_{ni}\boldsymbol{k}_i+\boldsymbol{e}_{ni},
\boldsymbol{e}_{ni}\perp \boldsymbol{k}_i,
a_{ni}=\frac{\boldsymbol{k}_n^T\boldsymbol{k}_i}{\|\boldsymbol{k}_i\|_2^2}$.

\begin{proposition}[Preserved-side exposure and residual budget]
\label{prop:preserved_leakage}
Suppose a local update $\boldsymbol{\Delta}$ satisfies $\boldsymbol{\Delta}\boldsymbol{k}_i=\boldsymbol{r}_i$, $\|\boldsymbol{\Delta}\|_{\mathrm{op}}\le B$, and $\|\boldsymbol{\Delta}\boldsymbol{k}_n\|_2\le \tau_n$. For any $n$ with $a_{ni}\neq 0$,
$\|\boldsymbol{r}_i\|_2 \le \frac{\tau_n+B\|\boldsymbol{e}_{ni}\|_2}{|a_{ni}|}$.
Define the preserved-side exposure score
\vspace{-0.5em}
\begin{equation}
\small
d_i=\max_{n\in\mathcal{C}_s}\frac{|\boldsymbol{k}_i^T\boldsymbol{k}_n|}{\|\boldsymbol{k}_i\|_2\,\|\boldsymbol{k}_n\|_2}.
\label{eq:di_new}
\vspace{-0.5em}
\end{equation}
\end{proposition}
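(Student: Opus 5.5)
The plan is to apply $\boldsymbol{\Delta}$ to the decomposition of the preserved key and then isolate the editable residual using the triangle inequality. The definition of $d_i$ in \eqref{eq:di_new} is only a definition, so the sole claim needing proof is the residual bound $\|\boldsymbol{r}_i\|_2 \le (\tau_n+B\|\boldsymbol{e}_{ni}\|_2)/|a_{ni}|$. After proving it, we will rewrite it in terms of the cosine alignment to explain why $d_i$ is the natural exposure score.

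\textbf{Step 1 (linearity).} Start from the split $\boldsymbol{k}_n=a_{ni}\boldsymbol{k}_i+\boldsymbol{e}_{ni}$. Using $\boldsymbol{\Delta}\boldsymbol{k}_i=\boldsymbol{r}_i$, linearity gives
\[
\boldsymbol{\Delta}\boldsymbol{k}_n=a_{ni}\boldsymbol{r}_i+\boldsymbol{\Delta}\boldsymbol{e}_{ni},
\qquad\text{so}\qquad
a_{ni}\boldsymbol{r}_i=\boldsymbol{\Delta}\boldsymbol{k}_n-\boldsymbol{\Delta}\boldsymbol{e}_{ni}.
\]

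\textbf{Step 2 (bounding the two terms).} Take norms and apply the triangle inequality. The assumption $\|\boldsymbol{\Delta}\boldsymbol{k}_n\|_2\le\tau_n$ controls the first term. The operator-norm bound controls the second: $\|\boldsymbol{\Delta}\boldsymbol{e}_{ni}\|_2\le\|\boldsymbol{\Delta}\|_{\mathrm{op}}\|\boldsymbol{e}_{ni}\|_2\le B\|\boldsymbol{e}_{ni}\|_2$. Together these give $|a_{ni}|\,\|\boldsymbol{r}_i\|_2\le\tau_n+B\|\boldsymbol{e}_{ni}\|_2$. Since $a_{ni}\neq0$, dividing by $|a_{ni}|$ yields the claim.

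\textbf{Step 3 (link to $d_i$).} Let $c_{ni}=\boldsymbol{k}_i^T\boldsymbol{k}_n/(\|\boldsymbol{k}_i\|_2\|\boldsymbol{k}_n\|_2)$ denote the cosine alignment. Because $\boldsymbol{e}_{ni}\perp\boldsymbol{k}_i$, we have $|a_{ni}|=|c_{ni}|\,\|\boldsymbol{k}_n\|_2/\|\boldsymbol{k}_i\|_2$ and $\|\boldsymbol{e}_{ni}\|_2=\|\boldsymbol{k}_n\|_2\sqrt{1-c_{ni}^2}$. Substituting,
\[
\|\boldsymbol{r}_i\|_2\le\|\boldsymbol{k}_i\|_2\Big(\frac{\tau_n}{|c_{ni}|\,\|\boldsymbol{k}_n\|_2}+B\frac{\sqrt{1-c_{ni}^2}}{|c_{ni}|}\Big).
\]
The right-hand side decreases as $|c_{ni}|$ increases. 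Taking the minimum over $n\in\mathcal{C}_s$, the tightest budget comes from the most aligned preserved key, i.e.\ the one attaining $d_i$. This is the reason PAC budgets residuals through $d_i$.

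None of these steps is a real obstacle; the argument is essentially a two-line triangle inequality. The only care needed is in Step 3: the orthogonality of $\boldsymbol{e}_{ni}$ must be used correctly, and the interpretive monotonicity in $|c_{ni}|$ should be stated only heuristically, since $\tau_n$ and $\|\boldsymbol{k}_n\|_2$ also vary with $n$.
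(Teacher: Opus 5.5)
Your proof is correct and follows the paper's argument: Steps 1--2 are the same linearity-plus-triangle-inequality argument, giving $|a_{ni}|\,\|\boldsymbol{r}_i\|_2\le\tau_n+B\|\boldsymbol{e}_{ni}\|_2$, and then you divide by $|a_{ni}|\neq 0$. Your Step 3 is a correct non-normalized version of the paper's follow-up remarks, namely the norm-ratio comparison $c_{\min}d_i\le A_i\le c_{\max}d_i$ and the unit-norm corollary with bound $(\tau+B\sqrt{1-d_i^2})/d_i$, which your formula recovers when $\|\boldsymbol{k}_i\|_2=\|\boldsymbol{k}_n\|_2=1$; you are also right to state the monotonicity only heuristically, since $\tau_n$ and $\|\boldsymbol{k}_n\|_2$ vary with $n$.
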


Proposition~\ref{prop:preserved_leakage} formalizes preserved-side leakage: stronger editable-key alignment with a preserved direction tightens the admissible residual budget under comparable spillover tolerances and orthogonal terms, motivating a monotone exposure-weighted residual penalty. PAC adopts the stable form \((1+d_i)\|\boldsymbol{r}_i\|_2^2\), whose constant term imposes a base proximal constraint while \(d_i\) strengthens control for samples more exposed to preserved keys. Appendix~\ref{subsec:proof_preserved_exposure} proves it and gives Corollary~\ref{cor:normalized_exposure_budget}, a normalized-key monotonic exposure bound.

\subsection{Pressure-Aware Coordination}
\label{sec:pac}
For each editable sample $i$, PAC optimizes a target representation $\boldsymbol{v}_i$, where $\mathbb{P}(o_i^*\mid \boldsymbol{v}_i)$ is the target-token probability obtained by replacing the hidden representation at the last edited layer and edited position with $\boldsymbol{v}_i$.
Guided by Theorem~\ref{thm:editable_tension} and Proposition~\ref{prop:preserved_leakage}, PAC minimizes the pressure-aware objective
\vspace{-0.5em}
\begin{equation}
\label{eq:jno_pac}
\small
\begin{aligned}
\mathcal{L}_{\mathrm{PAC}}
&=
\mathcal{L}_{\mathrm{sem}} + \alpha \mathcal{L}_{\mathrm{prox}} + \beta \mathcal{L}_{\mathrm{pair}} \\
&= \sum\nolimits_{i=0}^{m}-\log \mathbb{P}(o_i^*\mid \boldsymbol{v}_i) \\
&\quad+ \alpha \sum\nolimits_{i=0}^{m}(1+d_i)\|\boldsymbol{v}_i-\boldsymbol{W}\boldsymbol{k}_i\|_2^2 \\
&\quad+ \beta \sum\nolimits_{0\le i<j\le m}s_{ij}\|\boldsymbol{v}_i-\boldsymbol{v}_j\|_2^2.
\end{aligned}
\vspace{-0.3em}
\end{equation}

The semantic term $\mathcal{L}_{\mathrm{sem}}$ encourages each editable sample to induce its target object. The exposure-weighted residual term $\mathcal{L}_{\mathrm{prox}}$ uses the preserved-side exposure score $d_i$ to control $\boldsymbol{r}_i=\boldsymbol{v}_i-\boldsymbol{W}\boldsymbol{k}_i$, assigning larger penalties to samples more exposed to preserved keys. The pairwise term $\mathcal{L}_{\mathrm{pair}}$ uses the editable-side coordination weights $s_{ij}$, equaling $\operatorname{tr}(\boldsymbol{V}_{\mathcal V}\boldsymbol{L}_s\boldsymbol{V}_{\mathcal V}^{\top})$ to penalize target separation over tightly coupled editable keys.

\subsection{Semantic Pre-execution Gating}
\label{sec:diagnostics}

PAC regularizes target geometry, but structural compatibility alone does not guarantee that the optimized representations still induce the intended target tokens. JNO therefore adds a semantic pre-execution gate as a conservative safety check:
\vspace{-0.3em}
\begin{equation}
\small
\ell_i^{\mathrm{sem}}=-\log \mathbb{P}(o_i^*\mid \hat{\boldsymbol{v}}_i),
\label{eq:semantic_loss_gate}
\vspace{-0.3em}
\end{equation}
where \(i=0\) indexes the edit request and \(i\in\{1,\ldots,m\}\) the variable neighbors. We count semantic failures as
\vspace{-0.3em}
\begin{equation}
\small
N_{\mathrm{fail}}=\sum\nolimits_{i=1}^{m}\mathbf{1}\!\left[\ell_i^{\mathrm{sem}}>\tau_{\mathrm{sem}}\right],
\label{eq:nfail}
\vspace{-0.3em}
\end{equation}
where $\tau_{\mathrm{sem}}$ is the semantic-loss threshold and $B_{\mathrm{sem}}\le m$ the failure budget. If \(\ell_0^{\mathrm{sem}}>\tau_{\mathrm{sem}}\) or \(N_{\mathrm{fail}} > B_{\mathrm{sem}}\), JNO deems the optimized target representations high-risk and abstains from parameter execution; otherwise it proceeds to execution. Gate-failed requests are discussed in Appendix~\ref{app:gate_failed_handling}.

\begin{algorithm}[t]
\centering
\small
\caption{JNO}
\label{alg:jno_execution}
\begin{algorithmic}[1]
\REQUIRE $\boldsymbol{e}$, layer weight $\boldsymbol{W}$, $\tau_{\mathrm{sem}}$, $B_{\mathrm{sem}}$
\ENSURE Updated $\boldsymbol{W}'$ or \texttt{HIGH\_RISK\_PARAMETRIC}
\STATE Construct neighborhood $\mathcal{C}=\{\boldsymbol{e}\}\cup\mathcal{C}_c\cup\mathcal{C}_s$
\STATE Extract editable and preserved keys $\{\boldsymbol{k}_i\}_{i=0}^{m}$, $\{\boldsymbol{k}_n\}_{n\in\mathcal{C}_s}$
\STATE Compute $\{d_i\}$ by Eq.~(\ref{eq:di_new}) and $\{s_{ij}\}$ by Eq.~(\ref{eq:sij})
\STATE Obtain optimized targets $\{\hat{\boldsymbol{v}}_i\}_{i=0}^{m}$ by minimizing Eq.~(\ref{eq:jno_pac})
\STATE Compute $\ell_0^{\mathrm{sem}}$ and semantic failures $N_{\mathrm{fail}}$ by Eqs.~(\ref{eq:semantic_loss_gate})--(\ref{eq:nfail})\IF{$\ell_0^{\mathrm{sem}}>\tau_{\mathrm{sem}}$ \textbf{or} $N_{\mathrm{fail}} > B_{\mathrm{sem}}$}
    \STATE \textbf{return} \texttt{HIGH\_RISK\_PARAMETRIC}
\ENDIF
\STATE Form residual matrix $\hat{\boldsymbol{R}}_{\mathcal{V}}$
\STATE Compute $\boldsymbol{\Delta}^*$ by Eq.~(\ref{eq:final_update})
\STATE \textbf{return} $\boldsymbol{W}'=\boldsymbol{W}+\boldsymbol{\Delta}^*$
\end{algorithmic}
\end{algorithm}
\setlength{\textfloatsep}{0.8em} 

\subsection{Reliable Execution}
\label{sec:execution}

After PAC and semantic gating, let \(\boldsymbol K_0\) stack preserved keys, \(\boldsymbol P\) be its orthogonal null-space projector, and \(\boldsymbol K_p\) denote previous keys for stabilizing sequential editing. Any executed perturbation \(\boldsymbol\Delta=\widetilde{\boldsymbol\Delta}\boldsymbol P\) satisfies \(\boldsymbol\Delta\boldsymbol K_0=\boldsymbol 0\), preserving the protected associations. Given the accepted PAC residual matrix
$\hat{\boldsymbol R}_{\mathcal V}
=
[\hat{\boldsymbol v}_i-\boldsymbol W\boldsymbol k_i]_{i=0}^{m}$,
JNO solves
\vspace{-0.5em}
\begin{equation}
\small
\mathop{\arg\min}\nolimits_{\widetilde{\boldsymbol\Delta}}
\|
\widetilde{\boldsymbol\Delta}\boldsymbol P\boldsymbol K_{\mathcal V}
-
\hat{\boldsymbol R}_{\mathcal V}
\|^2
+
\|
\widetilde{\boldsymbol\Delta}\boldsymbol P\boldsymbol K_p
\|^2
+
\|
\widetilde{\boldsymbol\Delta}\boldsymbol P
\|^2 .
\label{eq:final_update}
\vspace{-0.5em}
\end{equation}
Solving Eq.~\ref{eq:final_update} yields the update
\(\boldsymbol\Delta^{*}=\widetilde{\boldsymbol\Delta}^{*}\boldsymbol P\), and the final weight is
\(\boldsymbol W'=\boldsymbol W+\boldsymbol\Delta^{*}\).

Algorithm~\ref{alg:jno_execution} summarizes JNO. Given an edit request, JNO constructs editable--preserved neighborhoods, computes PAC coordination and exposure weights, optimizes neighborhood target representations, filters high-risk plans with the pre-execution gate, and executes gate-passed updates. Thus, JNO separates what should be changed from how it is realized in parameters, handling neighborhood conflicts before an irreversible model update.

\begin{table*}[t]
\caption{\label{tab:ripple-results}
  Comparison of JNO with existing methods on RippleEdits. All metrics are computed over full requested-edit set, including gate-failed requests. The best results are highlighted in \textcolor{red}{red}, while the second-best results are \underline{underlined}. Improve reports the relative change of JNO compared with the strongest baseline in each metric. 
}
\vspace{-0.1in}
\renewcommand{\arraystretch}{0.9}
\centering
\scriptsize
\setlength{\tabcolsep}{8.0pt}{
\begin{tabular}{cc|cccccc}
\toprule
\textbf{Model} & \textbf{Method} 
& \textbf{Rel.}$\uparrow$ 
& \textbf{LG}$\uparrow$ 
& \textbf{RE}$\uparrow$ 
& \textbf{SA}$\uparrow$ 
& \textbf{RS}$\uparrow$ 
& \textbf{FF}$\uparrow$ \\
\midrule

\multirow{11}{*}[-0.7ex]{\rotatebox[origin=c]{90}{Qwen2.5-1.5B-Instruct}}
& ROME & $97.4_{\pm 0.16}$ & $51.4_{\pm 0.24}$ & $28.4_{\pm 0.22}$ & $\underline{67.7}_{\pm 0.33}$ & $39.0_{\pm 0.31}$ & $32.2_{\pm 0.36}$ \\
& MEMIT & $\textcolor{red}{98.9}_{\pm 0.12}$ & $54.8_{\pm 0.21}$ & $32.6_{\pm 0.26}$ & $53.2_{\pm 0.29}$ & $39.7_{\pm 0.34}$ & $31.4_{\pm 0.30}$ \\
& PMET & $93.4_{\pm 0.20}$ & $53.6_{\pm 0.27}$ & $29.2_{\pm 0.24}$ & $60.1_{\pm 0.36}$ & $36.3_{\pm 0.37}$ & $34.3_{\pm 0.41}$ \\
& AlphaEdit & $\textcolor{red}{98.9}_{\pm 0.10}$ & $60.4_{\pm 0.26}$ & $43.8_{\pm 0.31}$ & $62.4_{\pm 0.34}$ & $31.8_{\pm 0.38}$ & $28.7_{\pm 0.35}$ \\
& $\text{AlphaEdit}_{\text{BLUE}}$ & $98.3_{\pm 0.15}$ & $59.7_{\pm 0.28}$ & $47.2_{\pm 0.33}$ & $65.8_{\pm 0.37}$ & $43.6_{\pm 0.41}$ & $43.1_{\pm 0.44}$ \\
& $\text{AlphaEdit}_{\text{Chain}}$ & $\underline{98.8}_{\pm 0.13}$ & $\underline{61.2}_{\pm 0.30}$ & $46.8_{\pm 0.29}$ & $63.3_{\pm 0.35}$ & $46.1_{\pm 0.43}$ & $45.8_{\pm 0.46}$ \\
& GLAME & $98.7_{\pm 0.17}$ & $60.1_{\pm 0.34}$ & $46.5_{\pm 0.36}$ & $64.2_{\pm 0.40}$ & $47.3_{\pm 0.47}$ & $46.9_{\pm 0.49}$ \\
& SAKE & $98.5_{\pm 0.14}$ & $59.3_{\pm 0.32}$ & $\underline{47.3}_{\pm 0.35}$ & $67.3_{\pm 0.39}$ & $49.2_{\pm 0.45}$ & $\underline{50.2}_{\pm 0.48}$ \\
& SUIT & $98.6_{\pm 0.18}$ & $60.2_{\pm 0.31}$ & $47.1_{\pm 0.37}$ & $63.7_{\pm 0.42}$ & $\underline{50.4}_{\pm 0.46}$ & $44.3_{\pm 0.50}$ \\
& \cellcolor{metablue}JNO 
& \cellcolor{metablue}$95.9_{\pm 0.24}$ 
& \cellcolor{metablue}$\textcolor{red}{65.6}_{\pm 0.36}$ 
& \cellcolor{metablue}$\textcolor{red}{56.4}_{\pm 0.39}$ 
& \cellcolor{metablue}$\textcolor{red}{74.1}_{\pm 0.43}$ 
& \cellcolor{metablue}$\textcolor{red}{58.8}_{\pm 0.50}$ 
& \cellcolor{metablue}$\textcolor{red}{61.8}_{\pm 0.52}$ \\
& \cellcolor{gray!15}\rule[-0.55ex]{0pt}{2.35ex}Improve 
& \cellcolor{gray!15}$-3.1\%$ 
& \cellcolor{gray!15}$7.2\%$ 
& \cellcolor{gray!15}$19.2\%$ 
& \cellcolor{gray!15}$9.5\%$ 
& \cellcolor{gray!15}$16.7\%$ 
& \cellcolor{gray!15}$23.1\%$ \\
\midrule \midrule

\multirow{11}{*}{\rotatebox{90}{GPT-J~(6B)}} 
& ROME & $96.8_{\pm 0.18}$ & $52.6_{\pm 0.25}$ & $32.8_{\pm 0.28}$ & $62.9_{\pm 0.34}$ & $37.5_{\pm 0.35}$ & $34.2_{\pm 0.39}$ \\
& MEMIT & $95.1_{\pm 0.21}$ & $56.5_{\pm 0.29}$ & $35.7_{\pm 0.30}$ & $57.1_{\pm 0.32}$ & $41.6_{\pm 0.38}$ & $35.6_{\pm 0.42}$ \\
& PMET & $95.8_{\pm 0.19}$ & $56.2_{\pm 0.27}$ & $35.4_{\pm 0.31}$ & $59.6_{\pm 0.35}$ & $41.3_{\pm 0.40}$ & $38.2_{\pm 0.44}$ \\
& AlphaEdit & $\underline{99.1}_{\pm 0.09}$ & $60.4_{\pm 0.26}$ & $43.5_{\pm 0.33}$ & $62.5_{\pm 0.38}$ & $43.6_{\pm 0.42}$ & $33.2_{\pm 0.36}$ \\
& $\text{AlphaEdit}_{\text{BLUE}}$ & $98.3_{\pm 0.16}$ & $58.3_{\pm 0.30}$ & $45.8_{\pm 0.35}$ & $65.1_{\pm 0.39}$ & $48.8_{\pm 0.46}$ & $38.9_{\pm 0.43}$ \\
& $\text{AlphaEdit}_{\text{Chain}}$ & $98.6_{\pm 0.14}$ & $60.9_{\pm 0.28}$ & $\underline{48.2}_{\pm 0.34}$ & $66.7_{\pm 0.41}$ & $52.4_{\pm 0.48}$ & $43.8_{\pm 0.45}$ \\
& GLAME & $98.4_{\pm 0.17}$ & $\underline{62.4}_{\pm 0.33}$ & $47.3_{\pm 0.38}$ & $67.8_{\pm 0.44}$ & $52.6_{\pm 0.49}$ & $43.5_{\pm 0.47}$ \\
& SAKE & $\textcolor{red}{99.2}_{\pm 0.08}$ & $60.7_{\pm 0.31}$ & $47.1_{\pm 0.36}$ & $\underline{68.4}_{\pm 0.40}$ & $\underline{56.2}_{\pm 0.50}$ & $\underline{45.9}_{\pm 0.46}$ \\
& SUIT & $98.7_{\pm 0.15}$ & $60.5_{\pm 0.34}$ & $47.8_{\pm 0.37}$ & $67.3_{\pm 0.42}$ & $54.1_{\pm 0.47}$ & $44.2_{\pm 0.49}$ \\
& \cellcolor{metablue}JNO 
& \cellcolor{metablue}$96.3_{\pm 0.23}$ 
& \cellcolor{metablue}$\textcolor{red}{67.3}_{\pm 0.38}$ 
& \cellcolor{metablue}$\textcolor{red}{55.3}_{\pm 0.41}$ 
& \cellcolor{metablue}$\textcolor{red}{76.5}_{\pm 0.45}$ 
& \cellcolor{metablue}$\textcolor{red}{65.6}_{\pm 0.53}$ 
& \cellcolor{metablue}$\textcolor{red}{60.1}_{\pm 0.51}$ \\
& \cellcolor{gray!15}\rule[-0.55ex]{0pt}{2.35ex}Improve
& \cellcolor{gray!15}$-3.0\%$
& \cellcolor{gray!15}$7.9\%$
& \cellcolor{gray!15}$14.7\%$
& \cellcolor{gray!15}$11.8\%$
& \cellcolor{gray!15}$16.7\%$
& \cellcolor{gray!15}$30.9\%$ \\
\midrule \midrule

\multirow{11}{*}{\rotatebox{90}{LLaMA3~(8B)}} 
& ROME & $96.3_{\pm 0.19}$ & $53.8_{\pm 0.28}$ & $37.3_{\pm 0.29}$ & $58.2_{\pm 0.36}$ & $36.1_{\pm 0.37}$ & $36.1_{\pm 0.40}$ \\
& MEMIT & $95.2_{\pm 0.22}$ & $58.3_{\pm 0.31}$ & $38.9_{\pm 0.32}$ & $61.1_{\pm 0.34}$ & $43.5_{\pm 0.41}$ & $37.6_{\pm 0.43}$ \\
& PMET & $94.5_{\pm 0.23}$ & $59.2_{\pm 0.30}$ & $40.9_{\pm 0.34}$ & $59.3_{\pm 0.37}$ & $47.1_{\pm 0.44}$ & $42.8_{\pm 0.47}$ \\
& AlphaEdit & $\textcolor{red}{99.2}_{\pm 0.09}$ & $60.5_{\pm 0.27}$ & $43.2_{\pm 0.32}$ & $62.6_{\pm 0.36}$ & $55.7_{\pm 0.45}$ & $41.4_{\pm 0.42}$ \\
& $\text{AlphaEdit}_{\text{BLUE}}$ & $98.4_{\pm 0.17}$ & $56.9_{\pm 0.29}$ & $44.6_{\pm 0.35}$ & $64.4_{\pm 0.40}$ & $54.3_{\pm 0.43}$ & $34.5_{\pm 0.41}$ \\
& $\text{AlphaEdit}_{\text{Chain}}$ & $98.9_{\pm 0.13}$ & $62.7_{\pm 0.33}$ & $47.5_{\pm 0.36}$ & $67.1_{\pm 0.42}$ & $55.6_{\pm 0.47}$ & $47.3_{\pm 0.50}$ \\
& GLAME & $98.7_{\pm 0.16}$ & $\underline{63.1}_{\pm 0.35}$ & $48.6_{\pm 0.39}$ & $65.3_{\pm 0.43}$ & $56.3_{\pm 0.48}$ & $48.4_{\pm 0.52}$ \\
& SAKE & $\underline{99.1}_{\pm 0.10}$ & $62.5_{\pm 0.32}$ & $\underline{49.3}_{\pm 0.37}$ & $66.4_{\pm 0.41}$ & $\underline{58.7}_{\pm 0.51}$ & $\underline{49.7}_{\pm 0.49}$ \\
& SUIT & $98.7_{\pm 0.15}$ & $61.6_{\pm 0.34}$ & $48.4_{\pm 0.40}$ & $\underline{68.5}_{\pm 0.44}$ & $58.2_{\pm 0.50}$ & $47.6_{\pm 0.48}$ \\
& \cellcolor{metablue}JNO 
& \cellcolor{metablue}$96.2_{\pm 0.25}$ 
& \cellcolor{metablue}$\textcolor{red}{72.1}_{\pm 0.40}$ 
& \cellcolor{metablue}$\textcolor{red}{58.2}_{\pm 0.43}$ 
& \cellcolor{metablue}$\textcolor{red}{73.3}_{\pm 0.46}$ 
& \cellcolor{metablue}$\textcolor{red}{72.4}_{\pm 0.54}$ 
& \cellcolor{metablue}$\textcolor{red}{63.8}_{\pm 0.53}$ \\
& \cellcolor{gray!15}\rule[-0.55ex]{0pt}{2.35ex}Improve
& \cellcolor{gray!15}$-3.1\%$
& \cellcolor{gray!15}$14.3\%$
& \cellcolor{gray!15}$18.1\%$
& \cellcolor{gray!15}$7.0\%$
& \cellcolor{gray!15}$23.3\%$
& \cellcolor{gray!15}$28.4\%$ \\
\bottomrule
\end{tabular}
}
\vspace{-0.15in}
\end{table*}

\section{Experiments}
\label{sec:experiments}

We evaluate JNO aiming to answer the following questions.
Q1: Does JNO improve positive ripple propagation while suppressing negative ripple effects?
Q2: Does JNO remain effective on widely used KE benchmarks?
Q3: Which components of JNO contribute to the gains?
Q4: Does the pre-execution gate provide a controllable execution trade-off?
Q5: Are the constructed editable and preserved neighborhoods reliable?

\begin{table*}[t]
\caption{\label{main-results}
  Comparison of JNO with methods on CounterFact and ZsRE. All metrics are computed over full requested-edit set, including gate-failed requests. The best results are highlighted in \textcolor{red}{red}, while the second-best results are \underline{underlined}. Improve reports the relative change of JNO compared with the strongest
baseline in each metric.
}
\vspace{-0.1in}
\renewcommand{\arraystretch}{0.95}
\centering
\small
\resizebox{\textwidth}{!}{
\begin{tabular}{cc|ccccc|ccc}
\toprule
\multirow{2}{*}{\textbf{Model}} & \multirow{2}{*}{\textbf{Method}} 
& \multicolumn{5}{c|}{\textbf{CounterFact}} 
& \multicolumn{3}{c}{\textbf{ZsRE}} \\
\cmidrule(lr){3-7} \cmidrule(lr){8-10}
& & \textbf{Eff.}$\uparrow$ & \textbf{Gen.}$\uparrow$ & \textbf{Spe.}$\uparrow$ & \textbf{Flu.}$\uparrow$ & \textbf{Consis.}$\uparrow$ 
& \textbf{Eff.}$\uparrow$ & \textbf{Gen.}$\uparrow$ & \textbf{Spe.}$\uparrow$ \\
\midrule

\multirow{11}{*}[-0.4ex]{\rotatebox[origin=c]{90}{Qwen2.5-1.5B-Instruct}}
& ROME & $52.86_{\pm 0.42}$ & $50.35_{\pm 0.23}$ & $51.21_{\pm 0.63}$ & $327.50_{\pm 0.21}$ & $3.05_{\pm 0.08}$ & $28.41_{\pm 0.12}$ & $27.33_{\pm 0.23}$ & $6.01_{\pm 0.11}$ \\
& MEMIT & $87.47_{\pm 0.12}$ & $83.11_{\pm 0.41}$ & $58.24_{\pm 0.31}$ & $505.01_{\pm 0.76}$ & $23.27_{\pm 0.16}$ & $65.44_{\pm 0.26}$ & $63.90_{\pm 0.13}$ & $24.77_{\pm 0.17}$ \\
& PMET & $88.36_{\pm 0.24}$ & $82.58_{\pm 0.32}$ & $59.12_{\pm 0.28}$ & $511.42_{\pm 0.63}$ & $24.06_{\pm 0.15}$ & $67.28_{\pm 0.25}$ & $65.12_{\pm 0.18}$ & $25.36_{\pm 0.22}$ \\
& AlphaEdit & $98.13_{\pm 0.21}$ & $96.29_{\pm 0.72}$ & $70.54_{\pm 0.34}$ & $613.35_{\pm 0.35}$ & $34.38_{\pm 0.15}$ & $97.78_{\pm 0.35}$ & $92.94_{\pm 0.13}$ & $28.36_{\pm 0.13}$ \\
& $\text{AlphaEdit}_{\text{BLUE}}$ & $98.35_{\pm 0.36}$ & $97.13_{\pm 0.51}$ & $73.24_{\pm 0.23}$ & $621.48_{\pm 0.32}$ & $36.64_{\pm 0.14}$ & $98.60_{\pm 0.21}$ & $93.75_{\pm 0.26}$ & $\underline{30.54}_{\pm 0.31}$ \\
& $\text{AlphaEdit}_{\text{Chain}}$ & $98.27_{\pm 0.18}$ & $96.75_{\pm 0.37}$ & $73.05_{\pm 0.29}$ & $617.62_{\pm 0.44}$ & $37.10_{\pm 0.19}$ & $98.78_{\pm 0.19}$ & $93.83_{\pm 0.22}$ & $30.06_{\pm 0.27}$ \\
& GLAME & $98.18_{\pm 0.11}$ & $97.16_{\pm 0.31}$ & $72.85_{\pm 0.27}$ & $615.23_{\pm 0.31}$ & $36.91_{\pm 0.13}$ & $\textcolor{red}{98.92}_{\pm 0.24}$ & $94.13_{\pm 0.28}$ & $29.72_{\pm 0.36}$ \\
& SAKE & $\textcolor{red}{98.47}_{\pm 0.26}$ & $\underline{97.25}_{\pm 0.18}$ & $\underline{73.60}_{\pm 0.24}$ & $\underline{622.09}_{\pm 0.38}$ & $37.05_{\pm 0.10}$ & $\underline{98.85}_{\pm 0.22}$ & $\underline{94.57}_{\pm 0.24}$ & $30.41_{\pm 0.31}$ \\
& SUIT & $\underline{98.38}_{\pm 0.15}$ & $97.19_{\pm 0.21}$ & $73.51_{\pm 0.31}$ & $620.42_{\pm 0.34}$ & $\underline{38.12}_{\pm 0.15}$ & $97.94_{\pm 0.15}$ & $94.26_{\pm 0.21}$ & $30.13_{\pm 0.23}$ \\
& \cellcolor{metablue}JNO 
& \cellcolor{metablue}$97.63_{\pm 0.18}$ 
& \cellcolor{metablue}$\textcolor{red}{97.68}_{\pm 0.34}$ 
& \cellcolor{metablue}$\textcolor{red}{75.52}_{\pm 0.35}$ 
& \cellcolor{metablue}$\textcolor{red}{624.31}_{\pm 0.41}$ 
& \cellcolor{metablue}$\textcolor{red}{41.26}_{\pm 0.12}$ 
& \cellcolor{metablue}$97.93_{\pm 0.24}$ 
& \cellcolor{metablue}$\textcolor{red}{96.19}_{\pm 0.31}$ 
& \cellcolor{metablue}$\textcolor{red}{32.46}_{\pm 0.48}$ \\
& \cellcolor{gray!15}\rule[-0.55ex]{0pt}{2.35ex}Improve
& \cellcolor{gray!15}$-0.9\%$
& \cellcolor{gray!15}$0.4\%$
& \cellcolor{gray!15}$2.6\%$
& \cellcolor{gray!15}$0.4\%$
& \cellcolor{gray!15}$8.2\%$
& \cellcolor{gray!15}$-1.0\%$
& \cellcolor{gray!15}$1.7\%$
& \cellcolor{gray!15}$6.3\%$ \\
\midrule \midrule

\multirow{11}{*}{\rotatebox{90}{GPT-J~(6B)}} 
& ROME & $56.68_{\pm 0.41}$ & $52.42_{\pm 0.27}$ & $51.77_{\pm 0.12}$ & $576.78_{\pm 0.12}$ & $3.06_{\pm 0.04}$ & $54.88_{\pm 0.32}$ & $51.82_{\pm 0.12}$ & $8.45_{\pm 0.11}$ \\
& MEMIT & $97.84_{\pm 0.13}$ & $94.86_{\pm 0.46}$ & $61.92_{\pm 0.41}$ & $538.75_{\pm 0.48}$ & $32.74_{\pm 0.12}$ & $93.73_{\pm 0.16}$ & $89.13_{\pm 0.23}$ & $28.59_{\pm 0.37}$ \\
& PMET & $98.62_{\pm 0.18}$ & $94.21_{\pm 0.38}$ & $63.85_{\pm 0.36}$ & $547.64_{\pm 0.44}$ & $34.36_{\pm 0.15}$ & $96.18_{\pm 0.18}$ & $91.36_{\pm 0.25}$ & $28.02_{\pm 0.34}$ \\
& AlphaEdit & $99.23_{\pm 0.28}$ & $95.94_{\pm 0.26}$ & $73.63_{\pm 0.23}$ & $611.85_{\pm 0.27}$ & $40.63_{\pm 0.24}$ & $99.17_{\pm 0.14}$ & $95.61_{\pm 0.12}$ & $27.52_{\pm 0.21}$ \\
& $\text{AlphaEdit}_{\text{BLUE}}$ & $\underline{99.62}_{\pm 0.12}$ & $96.42_{\pm 0.18}$ & $75.58_{\pm 0.55}$ & $618.29_{\pm 0.32}$ & $42.21_{\pm 0.32}$ & $\underline{99.73}_{\pm 0.13}$ & $96.28_{\pm 0.29}$ & $28.36_{\pm 0.34}$ \\
& $\text{AlphaEdit}_{\text{Chain}}$ & $99.53_{\pm 0.16}$ & $96.34_{\pm 0.22}$ & $74.45_{\pm 0.38}$ & $616.72_{\pm 0.35}$ & $42.09_{\pm 0.27}$ & $99.58_{\pm 0.15}$ & $96.05_{\pm 0.24}$ & $28.42_{\pm 0.29}$ \\
& GLAME & $99.15_{\pm 0.15}$ & $95.78_{\pm 0.26}$ & $74.92_{\pm 0.43}$ & $614.10_{\pm 0.38}$ & $41.45_{\pm 0.33}$ & $99.64_{\pm 0.13}$ & $95.85_{\pm 0.28}$ & $27.96_{\pm 0.40}$ \\
& SAKE & $\textcolor{red}{99.71}_{\pm 0.17}$ & $\underline{96.47}_{\pm 0.28}$ & $\underline{76.29}_{\pm 0.41}$ & $\underline{620.25}_{\pm 0.42}$ & $42.12_{\pm 0.38}$ & $\textcolor{red}{99.75}_{\pm 0.12}$ & $\underline{96.31}_{\pm 0.30}$ & $\underline{28.77}_{\pm 0.44}$ \\
& SUIT & $99.21_{\pm 0.13}$ & $95.25_{\pm 0.21}$ & $75.98_{\pm 0.32}$ & $617.32_{\pm 0.26}$ & $\underline{42.34}_{\pm 0.16}$ & $99.72_{\pm 0.15}$ & $95.13_{\pm 0.28}$ & $26.36_{\pm 0.35}$ \\
& \cellcolor{metablue}JNO 
& \cellcolor{metablue}$98.24_{\pm 0.21}$ 
& \cellcolor{metablue}$\textcolor{red}{97.72}_{\pm 0.34}$ 
& \cellcolor{metablue}$\textcolor{red}{78.03}_{\pm 0.37}$ 
& \cellcolor{metablue}$\textcolor{red}{623.14}_{\pm 0.51}$ 
& \cellcolor{metablue}$\textcolor{red}{43.85}_{\pm 0.62}$ 
& \cellcolor{metablue}$98.31_{\pm 0.27}$ 
& \cellcolor{metablue}$\textcolor{red}{97.62}_{\pm 0.31}$ 
& \cellcolor{metablue}$\textcolor{red}{31.24}_{\pm 0.53}$ \\
& \cellcolor{gray!15}\rule[-0.55ex]{0pt}{2.35ex}Improve
& \cellcolor{gray!15}$-1.5\%$
& \cellcolor{gray!15}$1.3\%$
& \cellcolor{gray!15}$2.3\%$
& \cellcolor{gray!15}$0.5\%$
& \cellcolor{gray!15}$3.6\%$
& \cellcolor{gray!15}$-1.5\%$
& \cellcolor{gray!15}$1.4\%$
& \cellcolor{gray!15}$8.6\%$ \\
\midrule \midrule

\multirow{11}{*}{\rotatebox{90}{LLaMA3~(8B)}} 
& ROME & $61.54_{\pm 0.25}$ & $57.62_{\pm 0.52}$ & $45.42_{\pm 0.38}$ & $425.86_{\pm 0.42}$ & $3.02_{\pm 0.04}$ & $1.86_{\pm 0.03}$ & $1.76_{\pm 0.02}$ & $0.64_{\pm 0.09}$ \\
& MEMIT & $62.60_{\pm 0.41}$ & $61.45_{\pm 0.12}$ & $47.41_{\pm 0.42}$ & $421.66_{\pm 0.33}$ & $6.16_{\pm 0.11}$ & $32.88_{\pm 0.16}$ & $28.85_{\pm 0.24}$ & $17.21_{\pm 0.18}$ \\
& PMET & $71.82_{\pm 0.38}$ & $66.26_{\pm 0.18}$ & $51.12_{\pm 0.39}$ & $458.74_{\pm 0.36}$ & $7.12_{\pm 0.12}$ & $36.46_{\pm 0.19}$ & $31.69_{\pm 0.22}$ & $20.16_{\pm 0.20}$ \\
& AlphaEdit & $97.78_{\pm 0.13}$ & $93.63_{\pm 0.14}$ & $65.57_{\pm 0.29}$ & $615.84_{\pm 0.14}$ & $31.67_{\pm 0.10}$ & $94.04_{\pm 0.13}$ & $90.24_{\pm 0.13}$ & $31.06_{\pm 0.23}$ \\
& $\text{AlphaEdit}_{\text{BLUE}}$ & $98.34_{\pm 0.02}$ & $95.31_{\pm 0.42}$ & $\underline{74.98}_{\pm 0.48}$ & $625.62_{\pm 0.29}$ & $33.19_{\pm 0.21}$ & $\textcolor{red}{98.11}_{\pm 0.31}$ & $92.91_{\pm 0.25}$ & $32.95_{\pm 0.41}$ \\
& $\text{AlphaEdit}_{\text{Chain}}$ & $98.27_{\pm 0.10}$ & $95.48_{\pm 0.36}$ & $73.15_{\pm 0.44}$ & $623.86_{\pm 0.31}$ & $32.36_{\pm 0.18}$ & $97.48_{\pm 0.26}$ & $92.18_{\pm 0.28}$ & $32.72_{\pm 0.39}$ \\
& GLAME & $97.81_{\pm 0.12}$ & $95.82_{\pm 0.49}$ & $74.30_{\pm 0.36}$ & $621.05_{\pm 0.27}$ & $33.48_{\pm 0.23}$ & $97.23_{\pm 0.32}$ & $91.72_{\pm 0.30}$ & $31.52_{\pm 0.50}$ \\
& SAKE & $\underline{98.65}_{\pm 0.15}$ & $\underline{96.54}_{\pm 0.52}$ & $74.68_{\pm 0.31}$ & $\underline{626.42}_{\pm 0.21}$ & $34.13_{\pm 0.25}$ & $\underline{97.89}_{\pm 0.34}$ & $\underline{93.12}_{\pm 0.32}$ & $33.15_{\pm 0.58}$ \\
& SUIT & $\textcolor{red}{98.79}_{\pm 0.21}$ & $94.14_{\pm 0.17}$ & $73.94_{\pm 0.25}$ & $622.81_{\pm 0.31}$ & $\underline{35.02}_{\pm 0.12}$ & $97.62_{\pm 0.25}$ & $91.93_{\pm 0.26}$ & $\underline{33.84}_{\pm 0.31}$ \\
& \cellcolor{metablue}JNO 
& \cellcolor{metablue}$97.93_{\pm 0.24}$ 
& \cellcolor{metablue}$\textcolor{red}{97.73}_{\pm 0.61}$ 
& \cellcolor{metablue}$\textcolor{red}{76.62}_{\pm 0.14}$ 
& \cellcolor{metablue}$\textcolor{red}{628.94}_{\pm 0.15}$ 
& \cellcolor{metablue}$\textcolor{red}{36.21}_{\pm 0.26}$ 
& \cellcolor{metablue}$97.12_{\pm 0.31}$ 
& \cellcolor{metablue}$\textcolor{red}{94.81}_{\pm 0.34}$ 
& \cellcolor{metablue}$\textcolor{red}{35.63}_{\pm 0.71}$ \\
& \cellcolor{gray!15}\rule[-0.55ex]{0pt}{2.35ex}Improve
& \cellcolor{gray!15}$-0.9\%$
& \cellcolor{gray!15}$1.2\%$
& \cellcolor{gray!15}$2.2\%$
& \cellcolor{gray!15}$0.4\%$
& \cellcolor{gray!15}$3.4\%$
& \cellcolor{gray!15}$-1.0\%$
& \cellcolor{gray!15}$1.8\%$
& \cellcolor{gray!15}$5.3\%$ \\
\bottomrule
\end{tabular}
}
\vspace{-0.15in}
\end{table*}

\subsection{Experimental Setup}
\label{sec:exp_setup}

\paragraph{Datasets \& Evaluation Metrics.}
We utilize three benchmarks. For RippleEdits~\citep{cohen2024evaluating}, we follow ChainEdit~\citep{dong2025chainedit} and report Reliability (Rel.), LG, RE, SA, RS, and FF. Rel. assesses KE success~\citep{wang-etal-2024-easyedit}; LG, RE, and SA measure positive ripple propagation, while RS and FF measure negative-ripple prevention. On ZsRE~\citep{levy-etal-2017-zero} and CounterFact~\citep{meng2022locating}, we report Efficacy, Generalization, Specificity, Fluency, and Consistency. Metrics are computed over the full requested-edit set, including gate-failed requests.
For JNO, execution coverage denotes the semantic-gate pass rate over requested edits (reported in Appendix~\ref{app:coverage}).

\vspace{-0.05in}
\paragraph{Baselines \& Implementation Details.}
We compare JNO with KE methods: ROME \citep{meng2022locating}, MEMIT \citep{mengmass}, PMET \citep{li2024pmet}, AlphaEdit \citep{fang2025alphaedit}, BLUE \citep{li2025rethinking}, ChainEdit \citep{dong2025chainedit}, GLAME \citep{zhang2024knowledge}, SAKE \citep{scialanga2025sake}, and SUIT \citep{park2026suit}, and evaluate three architectures of varying scales, including Qwen2.5-1.5B-Instruct \citep{qwen2.5,yang2024qwen2technicalreport}, GPT-J (6B) \citep{gpt-j}, and LLaMA3 (8B) \citep{llama3modelcard}. For JNO, we set $\alpha=1.2$, $\beta=0.9$, $\tau_{\mathrm{sem}}=1.4$, and $B_{\mathrm{sem}}=1$ unless otherwise noted. All methods use the same constructed-neighborhood protocol: each edit request is augmented with variable and preserved neighborhood samples in Sec.~\ref{sec:neighbor_construction}, and baselines edit the resulting neighborhood facts as related batches following their original procedures. The results are averaged over three random seeds, and experiments use one NVIDIA L40 GPU (48 GB).

\subsection{Main Results on RippleEdits (Q1)}
\label{sec:exp_ripple}
To assess knowledge update and retention, we evaluate methods on RippleEdits-POPULAR under sequential editing, with an edit-request batch size of 10. As shown in Table~\ref{tab:ripple-results}, JNO improves ripple-aware metrics. On LLaMA3, JNO improves RE by 18.1\% over SAKE and SA by
7.0\% over SUIT, indicating better propagation to editable neighbors. On GPT-J, it improves RS and FF by 16.7\%
and 30.9\% over SAKE, reflecting stronger prevention of negative ripple effects
on preserved ones. Compared with the strongest Rel. baseline, JNO incurs a 3.0\% relative Rel. drop across backbones while maintaining Rel. \(\ge 95.9\%\). This reflects a reliability--ripple trade-off: JNO trades a small amount of single-edit reliability for better neighborhood-level propagation and preservation. Appendix~\ref{app:efficiency} shows that JNO's target-planning overhead remains moderate.

\subsection{Results on CounterFact and ZsRE (Q2)}
\label{sec:exp_standard}
To examine whether JNO remains effective on two widely used benchmarks, we evaluate all methods by sampling 2,000 edit requests and augmenting them with variable and preserved neighborhood samples.
As shown in Table~\ref{main-results}, JNO shows consistent gains across datasets for three models. On CounterFact with Qwen2.5-1.5B-Instruct, JNO improves Specificity by 2.6\% over SAKE and Consistency by 8.2\% over SUIT, suggesting better retention of unrelated neighborhood knowledge and more stable generations. On ZsRE with GPT-J, JNO improves Specificity by 8.6\% over SAKE, indicating stronger neighborhood preservation. Compared with the strongest Eff. baseline, JNO incurs modest Eff. drops, at most
1.5\%. Results show that JNO preserves strong edit efficacy
while improving generalization, specificity, and generation stability.

\subsection{Ablation Studies (Q3)}
\label{sec:exp_ablation}
To assess JNO components, we conduct ablations on LLaMA3 with RippleEdits: \textit{w/o $\mathcal{L}_\mathrm{prox}$} removes the exposure-weighted residual penalty; \textit{w/o $\mathcal{L}_\mathrm{pair}$} removes the pairwise coordination penalty; and \textit{w/o gate} removes semantic pre-execution filtering. As shown in Table~\ref{tab:ripple-results-ablation}, removing $\mathcal{L}_\mathrm{prox}$ mainly hurts preservation-oriented metrics, verifying its role in limiting preserved-side leakage. Removing $\mathcal{L}_\mathrm{pair}$ mainly degrades positive-ripple metrics, showing that pairwise coordination supports coupled editable neighbors. Removing the gate raises Rel. but lowers ripple-aware scores, indicating that unconditional execution accepts more edits but commits semantically unreliable plans. Overall, three components play complementary roles in balancing edit acceptance, propagation and prevention.

\begin{table}[t]
\caption{\label{tab:ripple-results-ablation}
  Ablation results on RippleEdits in JNO.
}
\vspace{-0.1in}
\scriptsize
\renewcommand{\arraystretch}{0.8}
\setlength{\tabcolsep}{3pt}
\centering
\resizebox{\columnwidth}{!}{
\begin{tabular}{c|cccccc}
\toprule
\textbf{Method}
& \textbf{Rel.}$\uparrow$
& \textbf{LG}$\uparrow$
& \textbf{RE}$\uparrow$
& \textbf{SA}$\uparrow$
& \textbf{RS}$\uparrow$
& \textbf{FF}$\uparrow$ \\
\midrule

JNO
& $96.2$
& $72.1$
& $58.2$
& $73.3$
& $72.4$
& $63.8$ \\

JNO \textit{w/o $\mathcal{L}_\mathrm{prox}$}
& $94.8$
& $70.8$
& $56.3$
& $71.5$
& $56.1$
& $44.6$ \\

JNO \textit{w/o $\mathcal{L}_\mathrm{pair}$}
& $95.0$
& $61.2$
& $47.4$
& $62.7$
& $68.3$
& $59.1$ \\

JNO \textit{w/o $\mathrm{gate}$}
& $98.9$
& $66.4$
& $52.5$
& $67.7$
& $66.5$
& $57.2$ \\

\bottomrule
\end{tabular}
}
\vspace{-0.1in}
\end{table}

\subsection{Analysis of Semantic Pre-execution Gating (Q4)}
\label{sec:exp:gate_surface}
\begin{figure*}[t]
\centering
\includegraphics[width=0.95\linewidth]{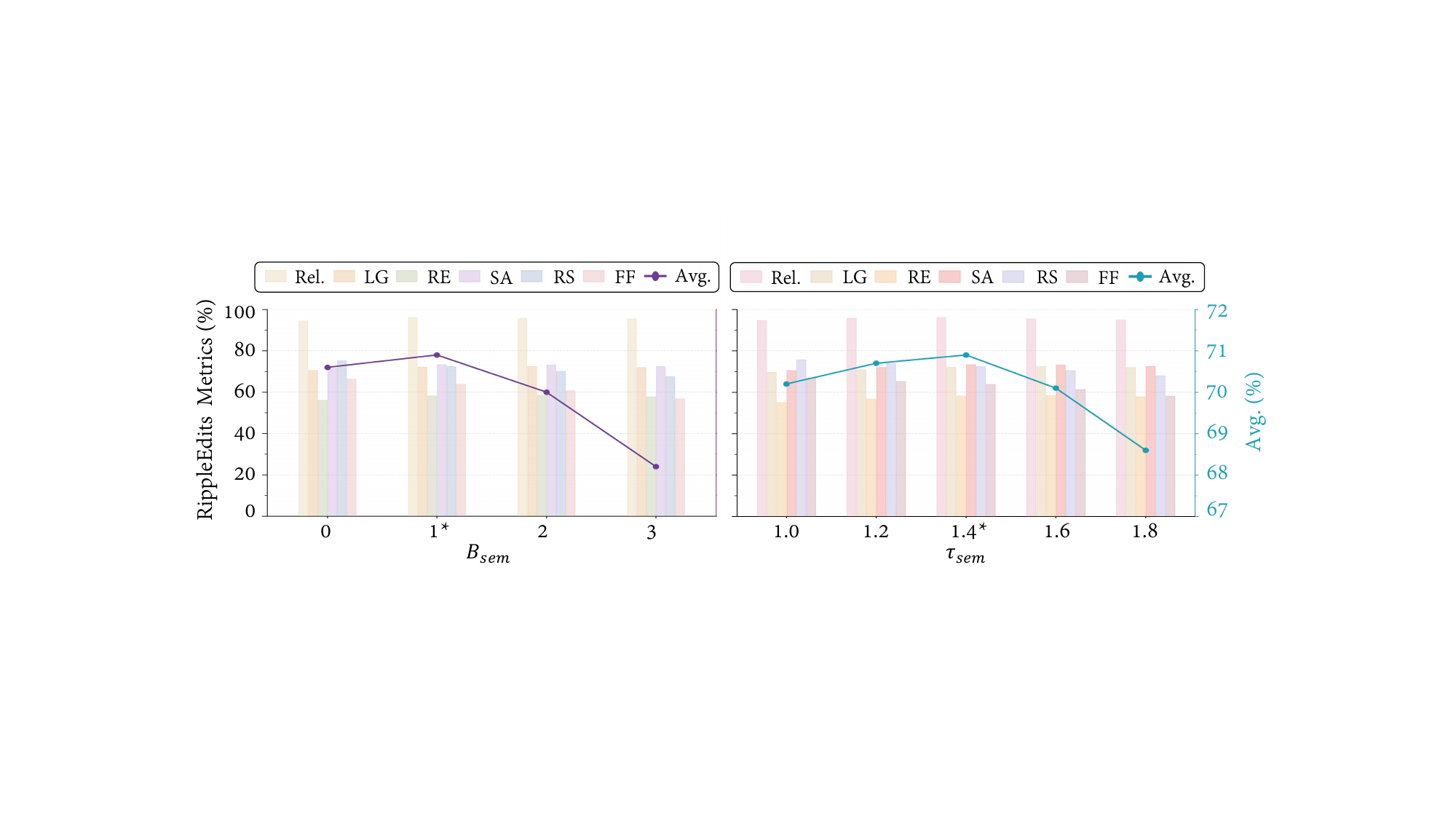}
\vspace{-0.1in}
\caption{
    Sensitivity analysis of semantic pre-execution gating. The Avg. is the harmonic mean of the metrics.
}
\vspace{-0.2in}
\label{fig:gate_surface}
\end{figure*}

\paragraph{Sensitivity Analysis of Semantic Pre-execution Gating.} 
We examine whether the semantic gate provides a controllable execution trade-off. Fig.~\ref{fig:gate_surface} sweeps the failure budget $B_{\mathrm{sem}}$ and semantic-loss threshold $\tau_{\mathrm{sem}}$ on RippleEdits with LLaMA3. For $B_{\mathrm{sem}}$, strict budgets over-reject neighborhood plans, whereas overly loose ones admit unreliable plans. The best trade-off appears at $B_{\mathrm{sem}}=1$, where propagation- and preservation-oriented metrics are jointly strong. Varying $\tau_{\mathrm{sem}}$ shows a similar pattern: relaxing
it from $1.0$ to $1.4$ improves Avg., while further relaxation weakens the
propagation--preservation balance.
Thus, the default $B_{\mathrm{sem}}=1$ and $\tau_{\mathrm{sem}}=1.4$ avoids
both over-rejection and over-execution. These trends indicate that semantic
gating filters target plans that fail to induce intended neighborhood semantics before a parameter update. PAC sensitivity appears in Appendix~\ref{app:hyperparameter_sensitivity}.

\begin{table}[t]
\centering
\caption{
Coverage-matched selective execution on RippleEdits. * indicates +gate (semantic pre-execution).}
\vspace{-0.1in}
\label{tab:gated_baseline_rippleedits_main}
\scriptsize
\renewcommand{\arraystretch}{0.7}
\setlength{\tabcolsep}{2pt}
\resizebox{\columnwidth}{!}{
\begin{tabular}{l|c|cccccc}
\toprule
\textbf{Method} & \textbf{Cov.} & \textbf{Rel.}$\uparrow$ & \textbf{LG}$\uparrow$ & \textbf{RE}$\uparrow$ & \textbf{SA}$\uparrow$ & \textbf{RS}$\uparrow$ & \textbf{FF}$\uparrow$ \\
\midrule
AlphaEdit & 100.0 & 99.2 & 60.5 & 43.2 & 62.6 & 55.7 & 41.4 \\
AlphaEdit * & 96.3  & 95.2 & 63.4 & 47.0 & 65.8 & 59.8 & 47.0 \\
SAKE & 100.0 & 99.1 & 62.5 & 49.3 & 66.4 & 58.7 & 49.7 \\
SAKE * & 96.3  & 95.7 & 65.5 & 53.6 & 69.8 & 63.0 & 53.5 \\
\midrule
JNO w/o gate & 100.0 & 98.9 & 66.4 & 52.5 & 67.7 & 66.5 & 57.2 \\
JNO & 96.3  & 96.2 & 72.1 & 58.2 & 73.3 & 72.4 & 63.8 \\
\bottomrule
\end{tabular}
}
\vspace{-0.05in}
\end{table}

\paragraph{Coverage-matched selective execution.}
We examine whether JNO's gains depend on selective rejection. As shown in
Table~\ref{tab:gated_baseline_rippleedits_main}, we compare standard
100\%-coverage baselines, JNO w/o gate, which also executes every request, and
baseline+gate variants matched to JNO's 96.3\% coverage. When every request is
applied, JNO w/o gate retains 100\% coverage and outperforms 100\%-coverage
baselines across ripple-aware metrics. Compared with SAKE, it improves RE by
6.5\% and FF by 15.1\% while maintaining comparable Rel., indicating that the
improvement does not rely on selective rejection. At matched coverage, JNO
outperforms gated AlphaEdit and SAKE, so its advantage is not explained by
coverage reduction alone. Compared with its accept-all variant, JNO achieves a
stronger propagation--preservation balance, indicating that semantic gating
acts as a complementary execution-level safeguard rather than the primary
source of JNO's gains. More results are in Appendix~\ref{app:coverage_matched_gate}.

\subsection{Auditing Neighborhood Reliability (Q5)}
\label{sec:exp:neighborhood_quality}
To audit the quality of constructed editable and preserved neighborhoods, we randomly sample 200 edit requests per dataset and inspect
up to three editable and three preserved neighbors per request. Table~\ref{tab:neighborhood_quality} reports editable-neighbor
precision ($\mathrm{Prec}_c$), preserved-neighbor precision
($\mathrm{Prec}_s$), conflict rate (Conf.), and valid-neighborhood rate
(Valid), where Valid denotes neighborhoods whose audited
candidates are valid and conflict-free. The constructed neighborhoods achieve
high average $\mathrm{Prec}_c$ and $\mathrm{Prec}_s$, with only a 2.9\% conflict rate and an 89.7\% Valid rate. The results indicate that most
constructed neighborhoods provide reliable editable--preserved constraints,
supporting JNO's neighborhood-guided optimization rather than arbitrary
neighborhood expansion. The detailed judging protocol and prompts are provided in Appendix~\ref{subsec:Neighborhood_Verification}, with neighborhood-noise sensitivity reported in Appendix~\ref{subsec:neighborhood_quality_sensitivity}.

\begin{table}[t]
\centering
\renewcommand{\arraystretch}{0.6}
\caption{
Quality audit of constructed neighborhoods. All values are reported as percentages.
}\vspace{-0.05in}
\label{tab:neighborhood_quality}
\setlength{\tabcolsep}{6pt}
\footnotesize
\begin{tabular}{lcccc}
\toprule
\textbf{Dataset} &
\(\textbf{Prec}_{c}\uparrow\) &
\(\textbf{Prec}_{s}\uparrow\) &
\(\textbf{Conf.}\downarrow\) &
\(\textbf{Valid}\uparrow\) \\
\midrule
RippleEdits & 95.1 & 96.8 & 1.8 & 92.5 \\
CounterFact & 92.8 & 95.2 & 3.0 & 89.5 \\
ZsRE        & 90.6 & 94.1 & 3.8 & 87.0 \\
\midrule
Average     & 92.8 & 95.4 & 2.9 & 89.7 \\
\bottomrule
\end{tabular}
\vspace{-0.05in}
\end{table}

\section{Conclusion}
In this work, we revisited ripple effects in knowledge editing as coupled positive and negative responses of local neighborhood structure. Our analysis identified two design pressures: editable-side coordination and preserved-side leakage, motivating Joint Neighborhood Optimization (JNO). JNO constructs structured neighborhoods, applies Pressure-Aware Coordination (PAC) to jointly optimize target representations, and adopts a semantic pre-execution gate to avoid risky updates. Experiments show that JNO improves positive propagation, suppresses negative perturbation, and preserves editing stability. Future work will extend pressure-aware neighborhood planning to multilingual, multi-hop, and continual editing settings.

\section*{Limitations}

JNO focuses on pressure-aware target planning for neighborhood-aware knowledge editing. In our experiments, editable and preserved neighborhoods are constructed from rule-guided propagation and locality-related preserved facts. Their quality may vary across domains, relation types, and context-dependent edits, which can affect the constraints available to PAC. Stronger mining, retrieval-based construction, or externally verified constraints can be incorporated without modifying the framework.
The semantic pre-execution gate trades full execution coverage for safer updates by rejecting target plans that fail the semantic-loss check. For mandatory-execution settings, JNO can use its accept-all variant or be paired with external fallback strategies. PAC also introduces activation-space optimization overhead, which can be mitigated through caching, neighborhood pruning, or approximate scoring.

\bibliography{custom}

\appendix
\section*{Appendix}

\section{Experimental Details for Quantitative Analysis of Ripple Effects in Sec.~\ref{sec:revisiting}}
\label{sec:appendix_empirical_setup}

This section provides the experimental details, probe definitions, grouping strategies, and aggregation protocol used for the empirical study in Sec.~\ref{sec:revisiting}.

\subsection{PRR and NRR Diagnostic Evaluation}
\label{subsec:app_datasets}

For each edit $e=(s,r,o\rightarrow o^*)$, we construct a local neighborhood $\mathcal{C}=\{e\}\cup \mathcal{C}_c\cup \mathcal{C}_s$, where $\mathcal{C}_c$ contains editable facts that should co-update with the target edit, and $\mathcal{C}_s$ contains preserved facts that should remain unchanged after editing (details are provided in Appendix~\ref{sec:neighborhood_construction}). In the diagnostic analysis, these neighborhoods are used only for post-edit evaluation of baseline editors. We evaluate GPT-J with five representative methods: ROME, MEMIT, PMET, AlphaEdit, and GLAME. For each edit request, each editor is applied only to the target edit $e$, yielding an edited model $f^+$. We then evaluate $f^+$ on the constructed neighborhood, compute PRR and NRR for each edit instance according to Eq.~\ref{eq:ripple_rate}, and average them over the edit pool. Table~\ref{tab:prr-nrr} reports the averaged diagnostic rates. Thus, the analysis uses RippleEdits edit requests but re-aggregates outcomes over our constructed editable and preserved neighborhoods.

\subsection{Structural Factors and Probe Definitions}
\label{subsec:app_metrics}

To analyze why ripple responses vary across edit instances, we define three structural probes of the local neighborhood: key-space coupling, pre-trained entanglement, and local neighborhood sensitivity, which are used for explanatory analysis.

\paragraph{Key-space coupling.}
Key-space coupling measures edited-layer proximity between the target edit and a neighboring fact. For the edited fact $(s,r,o)$ and a neighboring fact $(s',r',o')$, we extract their key vectors $k_{(s,r)}$ and $k_{(s',r')}$ at the editing layer and define
\begin{equation}
\small
\mathrm{Score}_k\big((s,r),(s',r')\big)
=
\cos\big(k_{(s,r)},k_{(s',r')}\big).
\end{equation}
Higher values indicate stronger key-space proximity, suggesting that the edit-induced update is more likely to transfer to the neighbor. Such transfer can support positive propagation when the neighboring target is compatible with the edit, but can also create coordination tension when tightly coupled facts require different target movements.

To interpret the non-monotonic PRR pattern over key-space coupling, we also compute an auxiliary target-side discrepancy:
\begin{equation}
\small
\mathrm{Score}_{\tau}\big((s,r),(s',r')\big)
=
\|v_{(s,r)}^*-v_{(s',r')}^*\|_2.
\end{equation}
Here, $v_{(s,r)}^*$ and $v_{(s',r')}^*$ denote the desired post-edit target representations associated with the edited fact and the neighboring fact. This discrepancy is used only as an auxiliary diagnostic to explain when strong key coupling supports propagation or induces editable-side coordination tension. It is not used as an independent grouping axis in Fig.~\ref{fig:ripple-factors}.

\paragraph{Pre-trained entanglement.}
Pre-trained entanglement measures parameter-space co-movement inherited from pre-training. Following gradient-based diagnostics, we use gradient cosine similarity as an operational proxy. Given an edited fact $(s,r,o)$ and a related fact $(s',r',o')$, we compute
\begin{equation}
\begin{aligned}
\mathrm{Score}_{\eta}\big((s,r,o),(s',r',o')\big)
&\\=
\cos\big(
\nabla_\theta \log P_\theta(o \mid s,r),\,&\\
\nabla_\theta \log P_\theta(o' \mid s',r')
\big).
\end{aligned}
\end{equation}
The gradients are computed on the pre-edit model with respect to the edited-layer parameters, using the same parameter scope for all facts in the analysis. Higher values indicate stronger parameter-level coupling, meaning that the two facts tend to move together under local parameter updates.

\paragraph{Local neighborhood sensitivity.}
Local neighborhood sensitivity measures the density of the preserved region associated with each edit. For an edit $e$, let $\{k_j\}_{j=1}^{N_s}$ denote the key vectors of preserved samples in $\mathcal{C}_s(e)$. We define
\begin{equation}
\small
\lambda_e
=
\left(
\frac{1}{N_s}\sum_{j=1}^{N_s}\|k_j-\bar{k}_s\|^2
\right)^{-1},
\end{equation}
where $\bar{k}_s$ is the mean key vector of preserved samples in $\mathcal{C}_s(e)$. Larger values indicate tighter clustering of preserved keys and higher local sensitivity. A dense preserved region is more likely to suffer collateral perturbation under local editing.

\subsection{Stratified Factor Analysis}
\label{subsec:app_stratified_protocol}
The stratified factor analysis examines how each structural probe co-varies with PRR and NRR. All probe values are computed on the pre-edit model and pre-edit neighborhoods before applying any editing method. For each probe, we compute the grouping thresholds once over the full edit pool under GPT-J and apply the same thresholds to all five editors. This yields a shared structural stratification, making the factor-wise response curves directly comparable across editors.

For pairwise probes, including key-space coupling and pre-trained entanglement, we first compute an edit--neighbor score for each neighbor. Let $z_{e,j}$ denote the probe score between edit $e$ and neighbor $j$, where $z_{e,j}$ is instantiated as $\mathrm{Score}_k(e,j)$ for key-space coupling and $\mathrm{Score}_{\eta}(e,j)$ for pre-trained entanglement. Since PRR and NRR are measured on different neighborhood subsets, we aggregate the pairwise scores separately for the editable and preserved sides:
\begin{equation}
\small
z_e^{c}
=
\frac{1}{|\mathcal{C}_c(e)|}
\sum_{j\in \mathcal{C}_c(e)} z_{e,j},
z_e^{s}
=
\frac{1}{|\mathcal{C}_s(e)|}
\sum_{j\in \mathcal{C}_s(e)} z_{e,j}.
\end{equation}
Here, $z_e^{c}$ is used to stratify PRR because PRR is measured on editable neighbors in $\mathcal{C}_c(e)$, while $z_e^{s}$ is used to stratify NRR because NRR is measured on preserved neighbors in $\mathcal{C}_s(e)$. For each pairwise probe, we rank all edit instances by the corresponding edit-level score and split them into three equally sized tercile groups. The bottom third is labeled \textsc{Low}, the middle third is labeled \textsc{Medium}, and the top third is labeled \textsc{High}. This procedure is applied separately to $z_e^{c}$ for PRR stratification and to $z_e^{s}$ for NRR stratification.
For local neighborhood sensitivity, each edit has a preserved-region density score $\lambda_e$. We rank all edit instances by $\lambda_e$ and split them into three equally sized tercile groups: the bottom third is labeled \textsc{Sparse}, the middle third is labeled \textsc{Normal}, and the top third is labeled \textsc{Dense}. We average PRR and NRR across edit instances. For an editor $m$ and a regime $\mathcal{R}$, the averaged diagnostic rates are
\begin{equation}
\small
\begin{aligned}
   \overline{\mathrm{PRR}}_{m,\mathcal{R}}
=
\frac{1}{|\mathcal{R}|}
\sum_{e\in\mathcal{R}}
\mathrm{PRR}_{m}(e),&\\
\overline{\mathrm{NRR}}_{m,\mathcal{R}}
=
\frac{1}{|\mathcal{R}|}
\sum_{e\in\mathcal{R}}
\mathrm{NRR}_{m}(e).
\end{aligned}
\end{equation}
The resulting factor-wise response curves are reported in Fig.~\ref{fig:ripple-factors}. This protocol fixes the edit pool, the pre-edit structural grouping, and the editor set, allowing us to inspect which local neighborhood properties co-vary with positive propagation and negative perturbation.

\section{Proofs for Theoretical Analysis}
\label{sec:appendix_proofs}

This appendix provides the proofs deferred from the main text and auxiliary results connecting PAC to residual variation and Reliable Execution.

\subsection{Proof of Theorem~\ref{thm:editable_tension}}
\label{subsec:proof_editable_tension}

\begin{proof}
Since
\begin{equation}
\boldsymbol{V}_{\mathcal{V}}
=
(\boldsymbol{W}+\boldsymbol{\Delta})\boldsymbol{K}_{\mathcal{V}},
\end{equation}
we have
\begin{equation}
\small
\boldsymbol{V}_{\mathcal{V}}\boldsymbol{L}_s\boldsymbol{V}_{\mathcal{V}}^T
=
(\boldsymbol{W}+\boldsymbol{\Delta})
\boldsymbol{K}_{\mathcal{V}}\boldsymbol{L}_s\boldsymbol{K}_{\mathcal{V}}^T
(\boldsymbol{W}+\boldsymbol{\Delta})^T.
\end{equation}
Because $\boldsymbol{L}_s\succeq 0$, the matrix
$\boldsymbol{M}=\boldsymbol{K}_{\mathcal{V}}\boldsymbol{L}_s\boldsymbol{K}_{\mathcal{V}}^T$
is positive semidefinite. For any positive semidefinite matrix $\boldsymbol{M}$ and any matrix $\boldsymbol{A}$,
\begin{equation}
\operatorname{tr}(\boldsymbol{A}\boldsymbol{M}\boldsymbol{A}^T)
\le
\|\boldsymbol{A}\|_{\mathrm{op}}^2 \operatorname{tr}(\boldsymbol{M}).
\end{equation}
Applying this inequality with
$\boldsymbol{A}=\boldsymbol{W}+\boldsymbol{\Delta}$ gives
\begin{equation}
\small
\operatorname{tr}\!\big(\boldsymbol{V}_{\mathcal{V}}\boldsymbol{L}_s\boldsymbol{V}_{\mathcal{V}}^T\big)
\le
\|\boldsymbol{W}+\boldsymbol{\Delta}\|_{\mathrm{op}}^2
\operatorname{tr}\!\big(\boldsymbol{K}_{\mathcal{V}}\boldsymbol{L}_s\boldsymbol{K}_{\mathcal{V}}^T\big),
\end{equation}
which proves Eq.~(\ref{eq:graph_executability_bound}).

Let
\begin{equation}
t=\operatorname{tr}\!\big(\boldsymbol{K}_{\mathcal{V}}\boldsymbol{L}_s\boldsymbol{K}_{\mathcal{V}}^T\big).
\end{equation}
If $t>0$, Eq.~(\ref{eq:graph_executability_bound}) implies
\begin{equation}
\small
\|\boldsymbol{W}+\boldsymbol{\Delta}\|_{\mathrm{op}}^2
\ge
\frac{\operatorname{tr}\!\big(\boldsymbol{V}_{\mathcal{V}}\boldsymbol{L}_s\boldsymbol{V}_{\mathcal{V}}^T\big)}{t}
\ge
\frac{\operatorname{tr}\!\big(\boldsymbol{V}_{\mathcal{V}}\boldsymbol{L}_s\boldsymbol{V}_{\mathcal{V}}^T\big)}{t+\epsilon_{\mathrm{num}}}.
\end{equation}
If $t=0$, then Eq.~(\ref{eq:graph_executability_bound}) gives
$\operatorname{tr}(\boldsymbol{V}_{\mathcal{V}}\boldsymbol{L}_s\boldsymbol{V}_{\mathcal{V}}^T)=0$, so the stabilized ratio is zero. Therefore a numerically stabilized ratio form
$\|\boldsymbol{W}+\boldsymbol{\Delta}\|_{\mathrm{op}}^2
\ge
\frac{\operatorname{tr}\!\big(\boldsymbol{V}_{\mathcal{V}}\boldsymbol{L}_s\boldsymbol{V}_{\mathcal{V}}^T\big)}{\operatorname{tr}\!\big(\boldsymbol{K}_{\mathcal{V}}\boldsymbol{L}_s\boldsymbol{K}_{\mathcal{V}}^T\big)+\epsilon_{\mathrm{num}}}, \epsilon_{\mathrm{num}}>0$
holds in all cases.
\end{proof}

\begin{corollary}[Residual compatibility under a shared update]
\label{cor:residual_compatibility}
Let
\begin{equation}
\boldsymbol{R}_{\mathcal{V}}
=
\boldsymbol{V}_{\mathcal{V}}-\boldsymbol{W}\boldsymbol{K}_{\mathcal{V}}.
\end{equation}
Suppose there exists a local update $\boldsymbol{\Delta}$ such that
$\boldsymbol{\Delta}\boldsymbol{K}_{\mathcal{V}}=\boldsymbol{R}_{\mathcal{V}}$. Then
\begin{equation}
\small
\operatorname{tr}\!\big(\boldsymbol{R}_{\mathcal{V}}\boldsymbol{L}_s\boldsymbol{R}_{\mathcal{V}}^T\big)
\le
\|\boldsymbol{\Delta}\|_{\mathrm{op}}^2
\operatorname{tr}\!\big(\boldsymbol{K}_{\mathcal{V}}\boldsymbol{L}_s\boldsymbol{K}_{\mathcal{V}}^T\big).
\label{eq:residual_executability_bound}
\end{equation}
\end{corollary}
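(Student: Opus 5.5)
The plan is to repeat the argument from the proof of Theorem~\ref{thm:editable_tension}, now with the residual matrix in place of the target matrix and the update $\boldsymbol{\Delta}$ in place of the full map $\boldsymbol{W}+\boldsymbol{\Delta}$.

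First, I substitute the hypothesis $\boldsymbol{R}_{\mathcal{V}}=\boldsymbol{\Delta}\boldsymbol{K}_{\mathcal{V}}$ into the quadratic form. This gives
\begin{equation*}
\boldsymbol{R}_{\mathcal{V}}\boldsymbol{L}_s\boldsymbol{R}_{\mathcal{V}}^T=\boldsymbol{\Delta}\,\boldsymbol{M}\,\boldsymbol{\Delta}^T,
\qquad
\boldsymbol{M}=\boldsymbol{K}_{\mathcal{V}}\boldsymbol{L}_s\boldsymbol{K}_{\mathcal{V}}^T .
\end{equation*}
Since $\boldsymbol{L}_s\succeq 0$, we have $\boldsymbol{x}^T\boldsymbol{M}\boldsymbol{x}=(\boldsymbol{K}_{\mathcal{V}}^T\boldsymbol{x})^T\boldsymbol{L}_s(\boldsymbol{K}_{\mathcal{V}}^T\boldsymbol{x})\ge 0$ for every $\boldsymbol{x}$, so $\boldsymbol{M}\succeq 0$.

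Next, I invoke the trace inequality already used in the proof of Theorem~\ref{thm:editable_tension}: $\operatorname{tr}(\boldsymbol{A}\boldsymbol{M}\boldsymbol{A}^T)\le\|\boldsymbol{A}\|_{\mathrm{op}}^2\operatorname{tr}(\boldsymbol{M})$ for $\boldsymbol{M}\succeq 0$. To keep the argument self-contained, I would justify it briefly. Write $\boldsymbol{M}=\sum_k \mu_k \boldsymbol{u}_k\boldsymbol{u}_k^T$ with $\mu_k\ge 0$ and unit $\boldsymbol{u}_k$. Then
\begin{equation*}
\operatorname{tr}(\boldsymbol{A}\boldsymbol{M}\boldsymbol{A}^T)=\sum_k\mu_k\|\boldsymbol{A}\boldsymbol{u}_k\|_2^2\le\|\boldsymbol{A}\|_{\mathrm{op}}^2\sum_k\mu_k .
\end{equation*}
Applying this with $\boldsymbol{A}=\boldsymbol{\Delta}$ yields Eq.~(\ref{eq:residual_executability_bound}) directly.

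An alternative route is to observe that $\boldsymbol{R}_{\mathcal{V}}=\boldsymbol{V}_{\mathcal{V}}-\boldsymbol{W}\boldsymbol{K}_{\mathcal{V}}$ is exactly the target matrix of Theorem~\ref{thm:editable_tension} with $\boldsymbol{W}$ replaced by $\boldsymbol{0}$ and targets $\boldsymbol{R}_{\mathcal{V}}$. The theorem then applies verbatim, since its proof never used any property of $\boldsymbol{W}$.

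I expect no real obstacle. The only point needing care is confirming positive semidefiniteness of $\boldsymbol{M}$, which follows from $\boldsymbol{L}_s\succeq 0$ as an unnormalized Laplacian with nonnegative weights $s_{ij}$. If desired, the same numerically stabilized ratio form as in the theorem can be appended, handling the case $\operatorname{tr}(\boldsymbol{M})=0$ identically.
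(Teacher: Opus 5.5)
Your proposal is correct and matches the paper's proof essentially verbatim: substitute $\boldsymbol{R}_{\mathcal{V}}=\boldsymbol{\Delta}\boldsymbol{K}_{\mathcal{V}}$, note that $\boldsymbol{K}_{\mathcal{V}}\boldsymbol{L}_s\boldsymbol{K}_{\mathcal{V}}^T\succeq 0$, and apply $\operatorname{tr}(\boldsymbol{A}\boldsymbol{M}\boldsymbol{A}^T)\le\|\boldsymbol{A}\|_{\mathrm{op}}^2\operatorname{tr}(\boldsymbol{M})$ with $\boldsymbol{A}=\boldsymbol{\Delta}$. Two of your additions are also valid: the eigendecomposition justification of that trace inequality, which the paper only asserts, and the observation that the corollary is Theorem~\ref{thm:editable_tension} with $\boldsymbol{W}=\boldsymbol{0}$ (this works because $\boldsymbol{L}_s$ depends only on the keys).
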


\begin{proof}
Since $\boldsymbol{R}_{\mathcal{V}}=\boldsymbol{\Delta}\boldsymbol{K}_{\mathcal{V}}$,
\begin{equation}
\boldsymbol{R}_{\mathcal{V}}\boldsymbol{L}_s\boldsymbol{R}_{\mathcal{V}}^T
=
\boldsymbol{\Delta}
\boldsymbol{K}_{\mathcal{V}}\boldsymbol{L}_s\boldsymbol{K}_{\mathcal{V}}^T
\boldsymbol{\Delta}^T.
\end{equation}
As above, $\boldsymbol{K}_{\mathcal{V}}\boldsymbol{L}_s\boldsymbol{K}_{\mathcal{V}}^T\succeq 0$. Applying
$\operatorname{tr}(\boldsymbol{A}\boldsymbol{M}\boldsymbol{A}^T)
\le
\|\boldsymbol{A}\|_{\mathrm{op}}^2\operatorname{tr}(\boldsymbol{M})$
with $\boldsymbol{A}=\boldsymbol{\Delta}$ proves Eq.~(\ref{eq:residual_executability_bound}).
\end{proof}

\begin{lemma}[Pairwise target penalty controls residual variation up to fixed geometry]
\label{lem:pairwise_residual_variation}
Let
\begin{equation}
\boldsymbol{r}_i=\boldsymbol{v}_i-\boldsymbol{W}\boldsymbol{k}_i.
\end{equation}
If
\begin{equation}
\small
\mathcal{L}_{\mathrm{pair}}
=
\sum_{0\le i<j\le m}
s_{ij}\|\boldsymbol{v}_i-\boldsymbol{v}_j\|_2^2
=
\operatorname{tr}(\boldsymbol{V}_{\mathcal V}\boldsymbol{L}_s\boldsymbol{V}_{\mathcal V}^{\top}),
\end{equation}
then
\begin{equation}
\sum_{0\le i<j\le m}
s_{ij}\|\boldsymbol{r}_i-\boldsymbol{r}_j\|_2^2
\le
2\mathcal{L}_{\mathrm{pair}}
+
2C_{W,K},
\label{eq:pairwise_to_residual_bound}
\end{equation}
where
\begin{equation}
C_{W,K}
:=
\sum_{0\le i<j\le m}
s_{ij}\|\boldsymbol{W}(\boldsymbol{k}_i-\boldsymbol{k}_j)\|_2^2.
\end{equation}
\end{lemma}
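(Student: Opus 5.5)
The plan is to argue term by term, using only the definition of the residual and the elementary inequality $\|\boldsymbol{a}+\boldsymbol{b}\|_2^2\le 2\|\boldsymbol{a}\|_2^2+2\|\boldsymbol{b}\|_2^2$. No operator-norm machinery from Theorem~\ref{thm:editable_tension} or Corollary~\ref{cor:residual_compatibility} is needed.

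\textbf{Step 1: express the residual difference.} For each pair $0\le i<j\le m$, the definition $\boldsymbol{r}_i=\boldsymbol{v}_i-\boldsymbol{W}\boldsymbol{k}_i$ and linearity of $\boldsymbol{W}$ give
\begin{equation*}
\boldsymbol{r}_i-\boldsymbol{r}_j=(\boldsymbol{v}_i-\boldsymbol{v}_j)-\boldsymbol{W}(\boldsymbol{k}_i-\boldsymbol{k}_j).
\end{equation*}

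\textbf{Step 2: bound each pair.} Apply the parallelogram-type bound with $\boldsymbol{a}=\boldsymbol{v}_i-\boldsymbol{v}_j$ and $\boldsymbol{b}=-\boldsymbol{W}(\boldsymbol{k}_i-\boldsymbol{k}_j)$. This yields
\begin{equation*}
\|\boldsymbol{r}_i-\boldsymbol{r}_j\|_2^2\le 2\|\boldsymbol{v}_i-\boldsymbol{v}_j\|_2^2+2\|\boldsymbol{W}(\boldsymbol{k}_i-\boldsymbol{k}_j)\|_2^2 .
\end{equation*}
The inequality itself follows from $0\le\|\boldsymbol{a}-\boldsymbol{b}\|_2^2$, or equivalently from convexity of $\|\cdot\|_2^2$.

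\textbf{Step 3: sum with the weights.} By Eq.~(\ref{eq:sij}), $s_{ij}>0$ for $\epsilon_{\mathrm{dist}}>0$, so multiplying each pairwise bound by $s_{ij}$ preserves the inequality. Summing over $0\le i<j\le m$ produces $2\sum_{i<j}s_{ij}\|\boldsymbol{v}_i-\boldsymbol{v}_j\|_2^2+2C_{W,K}$. The first sum is $\mathcal{L}_{\mathrm{pair}}$ by definition, which gives Eq.~(\ref{eq:pairwise_to_residual_bound}).

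\textbf{Step 4 (optional): the trace identity.} For completeness I would also check the identity $\mathcal{L}_{\mathrm{pair}}=\operatorname{tr}(\boldsymbol{V}_{\mathcal V}\boldsymbol{L}_s\boldsymbol{V}_{\mathcal V}^{\top})$ stated in the lemma's hypothesis. This is the standard Laplacian quadratic form identity, applied row by row to $\boldsymbol{V}_{\mathcal V}$: each row $\boldsymbol{x}$ satisfies $\boldsymbol{x}^{\top}\boldsymbol{L}_s\boldsymbol{x}=\sum_{i<j}s_{ij}(x_i-x_j)^2$, and summing over rows gives the claim.

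\textbf{Main obstacle.} None of these steps is hard. The only point needing care is the nonnegativity of the weights used in Step 3, which is guaranteed by Eq.~(\ref{eq:sij}) because $\epsilon_{\mathrm{dist}}>0$.
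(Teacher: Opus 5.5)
Your proposal is correct and matches the paper's proof essentially step for step: both write $\boldsymbol{r}_i-\boldsymbol{r}_j=(\boldsymbol{v}_i-\boldsymbol{v}_j)-\boldsymbol{W}(\boldsymbol{k}_i-\boldsymbol{k}_j)$, apply the bound $\|\boldsymbol{a}-\boldsymbol{b}\|_2^2\le 2\|\boldsymbol{a}\|_2^2+2\|\boldsymbol{b}\|_2^2$, multiply by $s_{ij}$, and sum over pairs. Your remarks on the positivity of $s_{ij}$ and your check of the Laplacian trace identity are both correct; the paper leaves these points implicit.
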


\begin{proof}
For every pair $(i,j)$,
\begin{equation}
\boldsymbol{r}_i-\boldsymbol{r}_j
=
(\boldsymbol{v}_i-\boldsymbol{v}_j)
-
\boldsymbol{W}(\boldsymbol{k}_i-\boldsymbol{k}_j).
\end{equation}
Using $\|\boldsymbol{a}-\boldsymbol{b}\|_2^2\le 2\|\boldsymbol{a}\|_2^2+2\|\boldsymbol{b}\|_2^2$, we obtain
\begin{equation}
\small
\|\boldsymbol{r}_i-\boldsymbol{r}_j\|_2^2
\le
2\|\boldsymbol{v}_i-\boldsymbol{v}_j\|_2^2
+
2\|\boldsymbol{W}(\boldsymbol{k}_i-\boldsymbol{k}_j)\|_2^2.
\end{equation}
Multiplying by $s_{ij}$ and summing over $0\le i<j\le m$ proves Eq.~(\ref{eq:pairwise_to_residual_bound}).
\end{proof}

The term $C_{W,K}$ depends only on the current edited-layer weight and editable keys, not on the optimized targets. Hence the pairwise target penalty also controls the spatial variation of the induced residual plan up to a fixed geometry term.

\subsection{Proof of Proposition~\ref{prop:preserved_leakage}}
\label{subsec:proof_preserved_exposure}

\begin{proof}
From
\begin{equation}
\boldsymbol{k}_n
=
a_{ni}\boldsymbol{k}_i+\boldsymbol{e}_{ni},
\end{equation}
we obtain
\begin{equation}
\boldsymbol{\Delta}\boldsymbol{k}_n
=
a_{ni}\boldsymbol{\Delta}\boldsymbol{k}_i+\boldsymbol{\Delta}\boldsymbol{e}_{ni}
=
a_{ni}\boldsymbol{r}_i+\boldsymbol{\Delta}\boldsymbol{e}_{ni}.
\end{equation}
Thus
\begin{equation}
a_{ni}\boldsymbol{r}_i
=
\boldsymbol{\Delta}\boldsymbol{k}_n-\boldsymbol{\Delta}\boldsymbol{e}_{ni}.
\end{equation}
Taking norms and using the triangle inequality gives
\begin{equation}
|a_{ni}|\,\|\boldsymbol{r}_i\|_2
\le
\|\boldsymbol{\Delta}\boldsymbol{k}_n\|_2+\|\boldsymbol{\Delta}\boldsymbol{e}_{ni}\|_2.
\end{equation}
By assumption,
\begin{equation}
\begin{aligned}
\|\boldsymbol{\Delta}\boldsymbol{k}_n\|_2
&\le \tau_n, \\
\|\boldsymbol{\Delta}\boldsymbol{e}_{ni}\|_2
&\le \|\boldsymbol{\Delta}\|_{\mathrm{op}}\|\boldsymbol{e}_{ni}\|_2
\le B\|\boldsymbol{e}_{ni}\|_2.
\end{aligned}
\end{equation}
Therefore,
\begin{equation}
|a_{ni}|\,\|\boldsymbol{r}_i\|_2
\le
\tau_n+B\|\boldsymbol{e}_{ni}\|_2,
\end{equation}
which yields
\begin{equation}
\|\boldsymbol{r}_i\|_2
\le
\frac{\tau_n+B\|\boldsymbol{e}_{ni}\|_2}{|a_{ni}|}.
\end{equation}
\end{proof}

\paragraph{Relation between $d_i$ and the largest residual-amplifying coefficient.}
Define
\begin{equation}
A_i:=\max_{n\in\mathcal{C}_s}|a_{ni}|.
\end{equation}
For every preserved sample $n$,
\begin{equation}
|a_{ni}|
=
\frac{|\boldsymbol{k}_n^T\boldsymbol{k}_i|}{\|\boldsymbol{k}_i\|_2^2}
=
\frac{\|\boldsymbol{k}_n\|_2}{\|\boldsymbol{k}_i\|_2}
\cdot
\frac{|\boldsymbol{k}_n^T\boldsymbol{k}_i|}
{\|\boldsymbol{k}_n\|_2\|\boldsymbol{k}_i\|_2}.
\end{equation}
If the key-norm ratios are bounded as
\begin{equation}
c_{\min}\|\boldsymbol{k}_i\|_2
\le
\|\boldsymbol{k}_n\|_2
\le
c_{\max}\|\boldsymbol{k}_i\|_2,
\end{equation}
then
\begin{equation}
c_{\min}d_i\le A_i\le c_{\max}d_i.
\end{equation}
Thus $d_i$ is equivalent, up to bounded norm-ratio constants, to the largest coefficient by which an editable residual can be transferred to a preserved direction. Combined with
\(\|\boldsymbol{r}_i\|_2 \le \frac{\tau_n+B\|\boldsymbol{e}_{ni}\|_2}{|a_{ni}|}\),
this justifies using $d_i$ as an exposure surrogate when spillover tolerances and orthogonal spillover terms are comparable across preserved neighbors.

\begin{corollary}[Normalized-key monotonic exposure bound]
\label{cor:normalized_exposure_budget}
Assume $\|\boldsymbol{k}_i\|_2=\|\boldsymbol{k}_n\|_2=1$ for all preserved samples $n\in\mathcal{C}_s$. Let
\begin{equation}
n^\star\in\arg\max_{n\in\mathcal{C}_s}
\frac{|\boldsymbol{k}_i^T\boldsymbol{k}_n|}
{\|\boldsymbol{k}_i\|_2\|\boldsymbol{k}_n\|_2},
\end{equation}
and assume $\tau_{n^\star}\le \tau$. If $d_i>0$, then
\begin{equation}
\small
\|\boldsymbol{r}_i\|_2
\le
\frac{\tau+B\sqrt{1-d_i^2}}{d_i}.
\label{eq:normalized_exposure_budget}
\end{equation}
The right-hand side is nonincreasing in $d_i\in(0,1]$.
\end{corollary}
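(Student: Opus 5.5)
We specialize Proposition~\ref{prop:preserved_leakage} to the maximizing preserved sample $n^\star$ and then rewrite every geometric quantity in terms of $d_i$ using unit norms. The bound $\|\boldsymbol{r}_i\|_2 \le (\tau_{n^\star}+B\|\boldsymbol{e}_{n^\star i}\|_2)/|a_{n^\star i}|$ holds whenever $a_{n^\star i}\neq 0$, under the standing assumptions $\boldsymbol{\Delta}\boldsymbol{k}_i=\boldsymbol{r}_i$, $\|\boldsymbol{\Delta}\|_{\mathrm{op}}\le B$ and $\|\boldsymbol{\Delta}\boldsymbol{k}_{n^\star}\|_2\le\tau_{n^\star}$. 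It therefore suffices to compute $|a_{n^\star i}|$ and $\|\boldsymbol{e}_{n^\star i}\|_2$.

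With $\|\boldsymbol{k}_i\|_2=\|\boldsymbol{k}_{n^\star}\|_2=1$, we get
\[
a_{n^\star i}=\frac{\boldsymbol{k}_{n^\star}^T\boldsymbol{k}_i}{\|\boldsymbol{k}_i\|_2^2}=\boldsymbol{k}_{n^\star}^T\boldsymbol{k}_i .
\]
By the choice of $n^\star$ as the maximizer in Eq.~(\ref{eq:di_new}), this gives $|a_{n^\star i}|=d_i$. The hypothesis $d_i>0$ then guarantees $a_{n^\star i}\neq 0$, so the proposition applies.

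For the orthogonal part, $\boldsymbol{e}_{n^\star i}\perp\boldsymbol{k}_i$, so Pythagoras gives
\[
1=\|\boldsymbol{k}_{n^\star}\|_2^2=a_{n^\star i}^2\|\boldsymbol{k}_i\|_2^2+\|\boldsymbol{e}_{n^\star i}\|_2^2=d_i^2+\|\boldsymbol{e}_{n^\star i}\|_2^2 ,
\]
hence $\|\boldsymbol{e}_{n^\star i}\|_2=\sqrt{1-d_i^2}$. This also confirms $d_i\le 1$, consistent with Cauchy--Schwarz. Substituting both quantities and using $\tau_{n^\star}\le\tau$ (which only enlarges the numerator) yields Eq.~(\ref{eq:normalized_exposure_budget}).

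It remains to prove monotonicity. Write the right-hand side as
\[
g(d)=\frac{\tau}{d}+B\frac{\sqrt{1-d^2}}{d}, \qquad d\in(0,1].
\]
The first term is nonincreasing since $\tau\ge 0$, because $\tau$ upper-bounds a norm. The second term is $B\sqrt{d^{-2}-1}$, which is nonincreasing because $d\mapsto d^{-2}-1$ is decreasing and nonnegative on $(0,1]$, the square root is increasing, and $B\ge 0$. A sum of nonincreasing functions is nonincreasing. No step here is a real obstacle. The only point needing care is the sign issue: the proposition uses $|a_{ni}|$ while $d_i$ uses an absolute cosine, and these match exactly under unit norms. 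One should also note that the argument needs only the bound at $n^\star$, not at every preserved sample.
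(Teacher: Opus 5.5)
Your proposal is correct and follows the paper's proof essentially step for step. Like the paper, you specialize Proposition~\ref{prop:preserved_leakage} to $n^\star$, use unit norms to get $|a_{n^\star i}|=d_i$ and $\|\boldsymbol{e}_{n^\star i}\|_2=\sqrt{1-d_i^2}$, substitute using $\tau_{n^\star}\le\tau$, and split the right-hand side into $\tau/d$ and $B\sqrt{1-d^2}/d$, each nonincreasing on $(0,1]$; your remarks that $\tau,B\ge 0$ and that $\sqrt{1-d^2}/d=\sqrt{d^{-2}-1}$ fill in details the paper leaves implicit.
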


\begin{proof}
For the maximally aligned preserved key $n^\star$, normalized keys give
\begin{equation}
|a_{n^\star i}|=d_i.
\end{equation}
Since
\begin{equation}
\boldsymbol{k}_{n^\star}
=
a_{n^\star i}\boldsymbol{k}_i+\boldsymbol{e}_{n^\star i},
\qquad
\boldsymbol{e}_{n^\star i}\perp \boldsymbol{k}_i,
\end{equation}
we have
\begin{equation}
\|\boldsymbol{e}_{n^\star i}\|_2
=
\sqrt{1-d_i^2}.
\end{equation}
Applying
\(\|\boldsymbol{r}_i\|_2 \le \frac{\tau_n+B\|\boldsymbol{e}_{ni}\|_2}{|a_{ni}|}\)
with \(n=n^\star\) and using \(\tau_{n^\star}\le\tau\) yields Eq.~(\ref{eq:normalized_exposure_budget}). The function
\begin{equation}
f(d)=\frac{\tau+B\sqrt{1-d^2}}{d}
=
\frac{\tau}{d}
+
B\frac{\sqrt{1-d^2}}{d}
\end{equation}
is nonincreasing on \(d\in(0,1]\), because both \(\tau/d\) and \(\sqrt{1-d^2}/d\) are nonincreasing on this interval.
\end{proof}

\section{Neighborhood Construction Details}
\label{sec:neighborhood_construction}

As illustrated in Fig.~\ref {fig:5}, the construction pipeline contains two parts that transform a single edit request into a comprehensive neighborhood group. The first part (left, blue panel) generates \textbf{variable neighbors} (variable set $\mathcal{C}_c$) for evaluating and supervising positive ripple propagation. The second part (right, green panel) constructs \textbf{preserved neighbors} (preserved set $\mathcal{C}_s$) for evaluating and constraining preservation under local parameter coupling. Taking the edit \textit{``USA's President changes from Biden to Trump''} as our running example, we demonstrate how a structured neighborhood $\mathcal{C} = \{e\} \cup \mathcal{C}_c \cup \mathcal{C}_s$ is systematically assembled around the original edit request $e = (s, r, o \rightarrow o^*)$.

\begin{figure*}[t]
	\centering
	\includegraphics[width=0.86\linewidth]{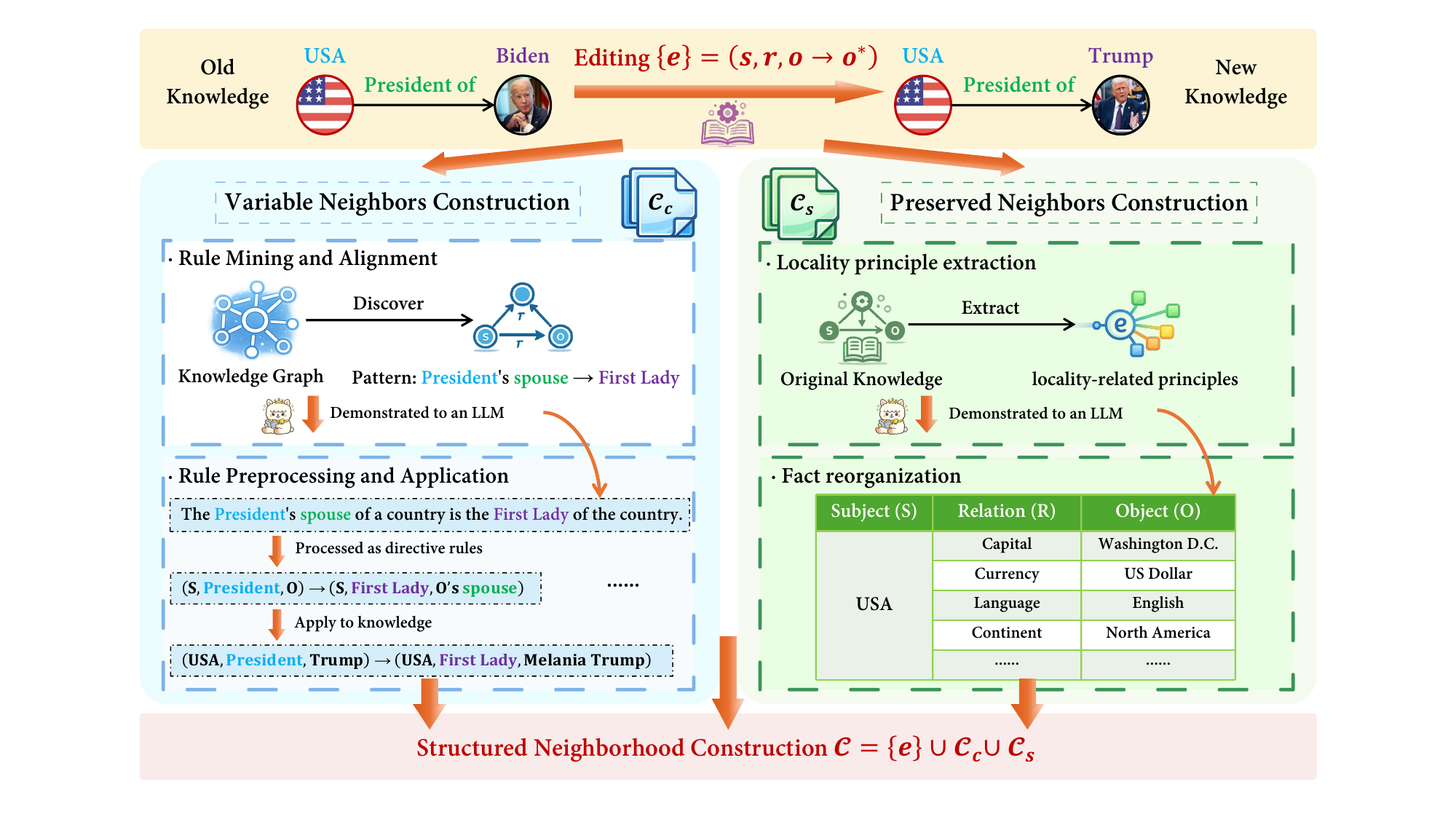}\vspace{-0.1in}
	\caption{Structured neighborhood construction for the edit \textit{``USA's President changes from Biden to Trump''}. Left: variable neighbors $\mathcal{C}_c$ are generated via logical rules (e.g., \textsc{President}'s \textsc{spouse} $\rightarrow$ \textsc{First Lady}). Right: preserved neighbors $\mathcal{C}_s$ are extracted under the locality principle (e.g., \textsc{Capital}, \textsc{Currency}).}
	\vspace{-0.2in}
	\label{fig:5}
\end{figure*}

\subsection{Variable Neighborhood Construction}
\label{subsec:variable_construction}

The variable neighbors construction panel (left side of Fig.~\ref {fig:5}) follows a logical-rule-guided generation paradigm to populate the consistency set $\mathcal{C}_c$. This process chains logically associated knowledge that should co-update with the target edit.

\paragraph{Variable-neighbor generation.}
We follow the logical rule-guided chain construction of \citealp{dong2025chainedit}. ChainEdit mines inverse and multi-hop relational rules from Wikidata~\citep{vrandevcic2014wikidata}, aligns candidate rules with LLM-recognized logical patterns, converts retained rules into directive templates, and applies matched templates to derive facts to be co-updated. We adopt this procedure to generate the initial variable candidate set $\mathcal{C}_c$ for each edit request.

\paragraph{Rule Preprocessing and Application.}
Each retained rule is normalized into an edit template. For the President relation, the logical pattern is converted into the directive $(S, \text{President}, O) \rightarrow (S, \text{First Lady}, O\text{'s spouse})$. Applying this template to the example edit $(\text{USA}, \text{President}, \text{Trump})$ yields $(\text{USA}, \text{First Lady}, \text{Melania Trump})$. If a template requires an intermediate entity, we follow ChainEdit by querying the LLM to resolve the instantiated relation. This rule-guided propagation ensures $\mathcal{C}_c$ captures not superficial semantic similarity, but logically induced knowledge that must consistently evolve with the original edit.

\subsection{Preserved Neighborhood Construction}
\label{subsec:preserved_construction}

The preserved-neighbor construction panel (right side of Fig.~\ref{fig:5}) populates preserved specificity set $\mathcal{C}_s$ under local consistency. Unlike variable neighbors, these facts remain semantically independent of the intended edit path while vulnerable to collateral interference due to parameter coupling.

\paragraph{Locality Principle Extraction.} From the original locality knowledge of the edited entity (USA), we design prompts to extract locality-related relations logically independent of the edited relation. These relations share the subject and local context with the target fact, including \textsc{Capital}, \textsc{Currency}, \textsc{Language}, and \textsc{Continent}, while remaining outside the desired propagation chain.

\paragraph{Fact Reorganization.} The extracted locality-related relations are reorganized into preserved factual triples. For the USA example, the preserved neighborhood includes: (USA, Capital, Washington D.C.), (USA, Currency, US Dollar), (USA, Language, English), and (USA, Continent, North America). These facts are locally close enough to expose leakage risk, yet semantically distant enough to require preservation rather than co-update. Thus, $\mathcal{C}_s$ provides preservation constraints that anchor the edit's specificity boundary.

\subsection{Resulting Neighborhood Group}
\label{subsec:resulting_group}

Assembling both parts yields a \textbf{Structured Neighborhood} 
$\mathcal{C} = \{e\} \cup \mathcal{C}_c \cup \mathcal{C}_s$ that transforms 
isolated editing into neighborhood-aware optimization. 
For our presidential edit example:

\begin{itemize}[leftmargin=*,nosep]
    \item \textbf{Original edit} $e$:\\ 
    (USA, President, Biden $\to$ Trump)
    \item \textbf{Variable neighbors} $\mathcal{C}_c$:\\ 
    (USA, First Lady, Jill Biden $\to$ Melania Trump) and other logically derived facts
    \item \textbf{Preserved neighbors} $\mathcal{C}_s$:\\ 
    (USA, Capital, Washington D.C.), (USA, Currency, US Dollar), etc.
\end{itemize}

The variable subset supervises whether positive ripple propagates correctly to associated knowledge, while the preserved subset evaluates whether nearby but irrelevant knowledge is preserved. For each benchmark, we construct such neighborhood groups, each transforming a single-point edit into a comprehensive local constellation with explicit propagation targets and preservation boundaries.

\subsection{Neighborhood Quality Audit}
\label{subsec:Neighborhood_Verification}

Structured neighborhood construction automatically generates variable and preserved neighbors, but the resulting neighborhoods may contain noise. A variable neighbor may be semantically associated with the edit rather than logically entailed by it, while a preserved neighbor may in fact be affected by the edit. We audit whether the constructed candidates encode reliable post-edit requirements.

\begin{figure*}[p]
\centering
\begin{tcolorbox}[
    width=0.85\textwidth,
    colback=blue!3,
    colframe=blue!70!black,
    coltitle=white,
    colbacktitle=blue!70!black,
    title=\textbf{Prompt for Neighborhood Quality Audit},
    fonttitle=\bfseries\large,
    boxrule=0.8pt,
    arc=2pt,
    left=7pt,
    right=7pt,
    top=7pt,
    bottom=7pt
]
\small
\linespread{1.08}\selectfont
\setlength{\parskip}{2.5pt}

\textbf{\large Task Description:}

Given an edit fact \(e=(s,r,o\rightarrow o^{*})\) and a candidate neighbor \(c\), verify whether the candidate is correctly assigned as a variable or preserved neighbor. The candidate should be labeled as a valid variable neighbor, a valid preserved neighbor, an ambiguous neighbor, or an invalid neighbor.

\vspace{0.35em}
\textbf{\large Evaluation Criteria:}

\textbf{1. Variable Neighbor.}
If \(c=(s_i,r_i,o_i\rightarrow o_i^{*})\), verify whether:

\begin{itemize}[leftmargin=1.3em, itemsep=2pt, topsep=2pt]
    \item the edit \(e\) implies that \(c\) should also be updated;
    \item the target object \(o_i^{*}\) is factually correct;
    \item the relation \(r_i\) lies on the logical propagation chain of \(r\), rather than being merely associated.
\end{itemize}

Judge \(c\) as a valid variable neighbor only if all three conditions hold.

\vspace{0.35em}
\textbf{2. Preserved Neighbor.}
If \(c=(s_j,r_j,o_j)\), verify whether:

\begin{itemize}[leftmargin=1.3em, itemsep=2pt, topsep=2pt]
    \item the fact should remain unchanged after applying \(e\);
    \item the relation \(r_j\) is independent of the logical consequences of \(r\);
    \item the fact \((s_j,r_j,o_j)\) is factually correct.
\end{itemize}

Judge \(c\) as a valid preserved neighbor only if all three conditions hold.

\vspace{0.35em}
\textbf{\large Decision Labels:}

\textbf{Valid Variable Neighbor}, \textbf{Valid Preserved Neighbor}, \textbf{Ambiguous Neighbor}, or \textbf{Invalid Neighbor}.

\vspace{0.35em}
\textbf{\large Example Inputs and Outputs:}

\textbf{Example 1:}
Edit fact:
\(e=(\text{USA},\text{President},\text{Biden}\rightarrow\text{Trump})\).
Candidate:
\(c=(\text{USA},\text{First Lady},\text{Jill Biden}\rightarrow\text{Melania Trump})\).

\textbf{Output:}
Valid Variable Neighbor.
The candidate is entailed by the edit, the target object is correct, and the relation lies on the propagation chain.

\vspace{0.35em}
\textbf{Example 2:}
Edit fact:
\(e=(\text{USA},\text{President},\text{Biden}\rightarrow\text{Trump})\).
Candidate:
\(c=(\text{USA},\text{Capital},\text{Washington D.C.})\).

\textbf{Output:}
Valid Preserved Neighbor.
The capital should remain unchanged, the relation is independent of the edit, and the fact is correct.

\vspace{0.35em}
\textbf{\large Your Task:}

You will receive one candidate neighbor produced by the neighborhood construction process. The candidate may have been initially assigned as either variable or preserved. Verify this assignment using the criteria above.

\begin{itemize}[leftmargin=1.3em, itemsep=2pt, topsep=2pt]
    \item Edit fact: \(e=(s,r,o\rightarrow o^{*})\)
    \item Candidate neighbor: \(c=\{\text{candidate\_neighbor}\}\)
    \item Initial assignment: \(\text{type}\in\{\text{variable},\text{preserved}\}\)
\end{itemize}

Please output:
the final label, the criterion-level judgments, and a brief explanation. A candidate should be marked valid only when it satisfies the criteria for its initial assignment; otherwise, mark it as invalid or ambiguous and briefly explain the mismatch.

\end{tcolorbox}
\caption{
Prompt used for auditing the quality of sampled variable and preserved neighbors.
}
\label{fig:neighborhood-verification-prompt}
\end{figure*}

For the audit in Sec.~\ref{sec:exp:neighborhood_quality}, we randomly sample 200 constructed neighborhoods per dataset. For each neighborhood, we inspect up to three candidates from $\mathcal{C}_c$ and up to three candidates from $\mathcal{C}_s$, sampling without replacement when more candidates are available. A fixed LLM judge assesses each candidate conditioned on the edit request using the prompt in Fig.~\ref{fig:neighborhood-verification-prompt}. Variable neighbors are evaluated by edit entailment, target-object correctness, and propagation dependency; preserved neighbors are evaluated by preservation validity, propagation independence, and factual correctness. The audit is used only for quality assessment and does not filter or modify the neighborhoods used in the main experiments.

Variable-neighbor precision $\mathrm{Prec}_c$ is the fraction of audited candidates from $\mathcal{C}_c$ judged as valid variable neighbors, and preserved-neighbor precision $\mathrm{Prec}_s$ is defined analogously for $\mathcal{C}_s$. A sampled neighborhood is marked as conflicting if any audited variable candidate and preserved candidate impose contradictory post-edit requirements. The audited valid-neighborhood rate is a neighborhood-level metric: a neighborhood is counted as valid when its audited variable candidates satisfy the co-update requirement, its audited preserved candidates satisfy the preservation requirement, and no conflict is detected between the two subsets.

Since JNO treats $\mathcal{C}_c$ and $\mathcal{C}_s$ as input constraints for pressure-aware target planning, stronger retrieval, human curation, or domain-specific mining can improve neighborhood quality without changing the PAC objective.

\subsection{Sensitivity to Neighborhood Quality}
\label{subsec:neighborhood_quality_sensitivity}

To test whether JNO depends on curated neighborhoods, we conduct a noise-injection study on RippleEdits with LLaMA3. We keep the evaluation set, edit requests, and labels fixed, but corrupt only the neighborhood constraints used by JNO for target planning.
We instantiate neighborhood noise via type-preserving cross-request replacement. For each edit request, each candidate in $\mathcal{C}_c$ and $\mathcal{C}_s$ is independently selected with probability $\rho \in \{0\%,10\%,20\%,30\%\}$. A selected candidate in $\mathcal{C}_c$ is replaced by a variable-neighbor candidate from $\mathcal{C}_c$ of another same-dataset request, and a selected candidate in $\mathcal{C}_s$ is analogously replaced by a preserved-neighbor candidate from another request. This preserves neighborhood size and the variable/preserved type distribution, while injecting candidates that are likely mismatched with the current edit. The perturbation simulates construction errors where a candidate is plausible in isolation but fails to encode a valid co-update or preservation requirement for the current edit.

As shown in Table~\ref{tab:neighborhood_noise_sensitivity}, JNO degrades gradually across all reported metrics as neighborhood noise increases. With 10\% injected noise, the decline is small, suggesting robustness to noise levels comparable to the invalid-neighborhood rate observed in our audit. Even under 20\% noise, JNO maintains strong ripple-aware performance, while 30\% noise causes clearer but non-catastrophic degradation. These results indicate that JNO does not rely on oracle-clean neighborhoods. The degradation also confirms that neighborhood quality matters for neighborhood-conditioned editing, suggesting that stronger retrieval or domain-specific mining can further improve pressure-aware target planning.

\begin{table}[t]
\centering
\small
\caption{Sensitivity of JNO to neighborhood noise.}
\vspace{-0.1in}
\renewcommand{\arraystretch}{0.8}
\setlength{\tabcolsep}{6.5pt}
\begin{tabular}{lcccccc}
\toprule
\textbf{Noise} & \textbf{Rel.}$\uparrow$ & \textbf{LG}$\uparrow$ & \textbf{RE}$\uparrow$ & \textbf{SA}$\uparrow$ & \textbf{RS}$\uparrow$ & \textbf{FF}$\uparrow$ \\
\midrule
0\%  & 96.2 & 72.1 & 58.2 & 73.3 & 72.4 & 63.8 \\
10\% & 95.7 & 70.4 & 56.7 & 72.1 & 70.5 & 61.6 \\
20\% & 94.9 & 67.6 & 54.1 & 69.8 & 67.3 & 57.8 \\
30\% & 93.8 & 64.8 & 51.6 & 67.2 & 63.9 & 53.5 \\
\bottomrule
\end{tabular}
\label{tab:neighborhood_noise_sensitivity}
\vspace{-0.1in}
\end{table}

\section{Experimental Setup}
\label{sec:appendix_exp}
This section details the experimental setup in Sec.~\ref{sec:experiments}, including benchmarks, metrics, baselines, and aggregate-score computation.

\subsection{Datasets}
\label{sec:appendix_datasets}
\paragraph{RippleEdits.}
RippleEdits~\citep{cohen2024evaluating} evaluates ripple effects beyond the directly edited fact. Each instance pairs a factual edit request with related queries that test propagation to logically affected knowledge and preservation of unrelated facts. We use it as the primary benchmark for positive ripple propagation and negative ripple suppression under structured neighborhood editing.

\paragraph{ZsRE.}
ZsRE~\citep{levy-etal-2017-zero} is a question-answering benchmark for factual knowledge editing. Each example contains a natural-language query for a subject--relation pair \((s_i,r_i)\) and a target answer \(o_i\). Following standard protocols~\citep{meng2022locating}, we evaluate efficacy on the original question, generalization on paraphrases, and specificity on out-of-scope locality questions.

\paragraph{CounterFact.}
CounterFact~\citep{meng2022locating} evaluates knowledge editing by replacing the object of a subject--relation association. Each instance provides the object and a counterfactual target for the subject--relation pair. We report efficacy, generalization, and specificity on original, paraphrased, and neighborhood prompts, together with quality measured by fluency and consistency.

\subsection{Evaluation Metrics}
\label{sec:appendix_metrics}

\subsubsection{Metrics on RippleEdits}
\label{sec:appendix_metrics_rippleedits}

For RippleEdits~\citep{cohen2024evaluating}, we follow ChainEdit~\citep{dong2025chainedit} and report Reliability (Rel.), Logical Generalization (LG), Reasoning (RE), Subject Aliasing (SA), Relation Specificity (RS), and Forgetfulness (FF). Rel. evaluates whether the edited model succeeds on the original edit request~\citep{wang-etal-2024-easyedit}.

LG measures whether facts implied by logical constraints are updated consistently with the edit. RE aggregates Compositionality I (CI) and Compositionality II (CII), evaluating two-hop composition over the edited fact. SA tests whether the update transfers to aliases of the edited subject. RS evaluates whether facts involving independent relations remain unchanged. FF follows the released RippleEdits naming of Preservation criterion and measures whether valid objects under the edited subject--relation pair are retained.

\subsubsection{Metrics on ZsRE}
\label{sec:appendix_metrics_zsre}

For ZsRE, we report the standard editing metrics: \emph{efficacy}, \emph{generalization}, and \emph{specificity}~\citep{mengmass,mitchellfast,meng2022locating}.

\paragraph{Efficacy.}
Efficacy measures whether the edited model predicts the target answer for the original edit query. Let \(f_\theta\) be the edited model, \(x_i\) the original query of the \(i\)-th edit, and \(o_i\) the target object. We compute:
\begin{equation}
\mathbb{E}_i
\left[
\arg\max_o P_{f_\theta}(o \mid x_i) = o_i
\right].
\end{equation}

\paragraph{Generalization.}
Generalization measures whether the edited fact transfers to paraphrased queries expressing the same subject--relation semantics. Let \(N(x_i)\) denote the paraphrase set of \(x_i\). The metric is:
\begin{equation}
\mathbb{E}_i \ \mathbb{E}_{\tilde{x}_i \in N(x_i)}
\left[
\arg\max_o P_{f_\theta}(o \mid \tilde{x}_i) = o_i
\right].
\end{equation}

\paragraph{Specificity.}
Specificity evaluates whether the edit preserves predictions on out-of-scope locality prompts. Let \(O(x_i)\) be the locality-prompt set, and let \(o_i^c\) denote the pre-edit top-1 prediction on these prompts. The metric is:
\begin{equation}
\mathbb{E}_i \ \mathbb{E}_{\bar{x}_i \in O(x_i)}
\left[
\arg\max_o P_{f_\theta}(o \mid \bar{x}_i) = o_i^c
\right].
\end{equation}

\subsubsection{Metrics on CounterFact}
\label{sec:appendix_metrics_counterfact}

We follow the standard protocol~\citep{meng2022locating,mengmass}. Each instance provides original factual object \(o_i^c\) and edited counterfactual target \(o_i\) for the same prompt \(x_i\). The metrics test whether the edited model assigns higher probability to the target fact while preserving unrelated knowledge.

\paragraph{Efficacy.}
Efficacy evaluates whether the edited target is preferred over the original object on the original prompt:
\begin{equation}
\mathbb{E}_i
\left[
P_{f_\theta}(o_i \mid x_i) >
P_{f_\theta}(o_i^c \mid x_i)
\right].
\end{equation}

\paragraph{Generalization.}
Generalization checks whether the same preference holds for paraphrased prompts \(\tilde{x}_i \in N(x_i)\):
\begin{equation}
\mathbb{E}_i \ \mathbb{E}_{\tilde{x}_i \in N(x_i)}
\left[
P_{f_\theta}(o_i \mid \tilde{x}_i) >
P_{f_\theta}(o_i^c \mid \tilde{x}_i)
\right].
\end{equation}

\paragraph{Specificity.}
Specificity measures whether neighborhood prompts remain localized after editing. For each \(\bar{x}_i \in O(x_i)\), the model should still prefer the original factual object over the edited target:
\begin{equation}
\mathbb{E}_i \ \mathbb{E}_{\bar{x}_i \in O(x_i)}
\left[
P_{f_\theta}(o_i^c \mid \bar{x}_i) >
P_{f_\theta}(o_i \mid \bar{x}_i)
\right].
\end{equation}

\paragraph{Fluency.}
Fluency assesses whether generated continuations remain natural and avoid excessive repetition. Following prior work, we use an entropy-based score over empirical bi-gram and tri-gram distributions:
\begin{equation}
\small
-\frac{2}{3}\sum_k g_2(k)\log_2 g_2(k)
-\frac{4}{3}\sum_k g_3(k)\log_2 g_3(k),
\end{equation}
where \(g_n(k)\) is the empirical frequency of the \(n\)-gram \(k\).

\paragraph{Consistency.}
Consistency evaluates whether generated text is semantically consistent with external evidence about the edited target. We compute the cosine similarity between TF--IDF vectors of the generated output \(y_i\) and the corresponding Wikipedia reference \(y_i^{\mathrm{wiki}}\):
\begin{equation}
\mathrm{Consis.}
=
\cos\left(
\mathrm{tfidf}(y_i),
\mathrm{tfidf}(y_i^{\mathrm{wiki}})
\right).
\end{equation}

\subsection{Baseline Methods}
\label{sec:appendix_baselines}

\paragraph{ROME.}
ROME~\citep{meng2022locating} rewrites factual knowledge through a rank-one update to a selected feed-forward layer. It provides localized factual editing but does not explicitly propagate updates to related neighboring facts.

\paragraph{MEMIT.}
MEMIT~\citep{mengmass} extends ROME to multi-layer editing by distributing the editing residual across critical layers and solving key--value least-squares updates. It is a strong multi-layer factual editing baseline, but its residual allocation is not conditioned on variable and preserved neighborhood structures.

\paragraph{PMET.}
PMET~\citep{li2024pmet} improves multi-layer editing by refining component-level targets. It jointly optimizes MHSA and FFN hidden states and updates only FFN weights with the optimized FFN states. We use PMET to test whether JNO provides gains beyond component-level target refinement.

\paragraph{AlphaEdit.}
AlphaEdit~\citep{fang2025alphaedit} introduces null-space constrained updates into MEMIT-style editing. By projecting updates away from protected directions, it reduces interference with unrelated knowledge and serves as a strong preservation-oriented editing baseline.

\paragraph{\(\text{AlphaEdit}_{\text{BLUE}}\).}
\(\text{AlphaEdit}_{\text{BLUE}}\)~\citep{li2025rethinking} restricts AlphaEdit-style updates to empirically selected boundary layers. This baseline tests whether limiting the edited layer range can reduce ripple-induced perturbation while preserving edit success.

\paragraph{\(\text{AlphaEdit}_{\text{Chain}}\).}
Following ChainEdit~\citep{dong2025chainedit}, \(\text{AlphaEdit}_{\text{Chain}}\) derives co-update facts from logical rules and jointly edits them with the original request using AlphaEdit. It evaluates whether rule-based chain expansion improves positive ripple propagation without explicit preserved-side leakage modeling.

\paragraph{GLAME.}
GLAME~\citep{zhang2024knowledge} uses external knowledge graphs to capture associated changes induced by factual edits. It encodes the constructed subgraph with a graph-based edit module and integrates the association signal into rank-one parameter editing, providing a graph-augmented propagation baseline.

\paragraph{SAKE.}
SAKE~\citep{scialanga2025sake} performs knowledge editing through activation steering. It models each edit with paraphrases and logical implications, and uses optimal transport to map obsolete activations to updated ones within the detected edit scope. We use SAKE as a robust activation-editing baseline.

\paragraph{SUIT.}
SUIT~\citep{park2026suit} performs subspace-aware key--value editing by separating edit-relevant and edit-irrelevant components and restricting updates to critical subspaces. It provides a subspace-constrained preservation baseline.

All baselines are evaluated under the same benchmark splits, backbones, constructed-neighborhood protocol, metric computation, and official implementations. 
For each edit request, neighborhood samples are constructed from logical rules and local context, independent of JNO's optimization mechanisms. 
Variable neighbors are treated as related edit requests, while preserved neighbors are treated as identity preservation requests or handled by a method's native preservation interface when available. 
For each baseline, these neighborhood facts are edited according to the method's original procedure. 
This ensures that all methods face the same neighborhood context, thereby separating the effect of neighborhood construction from JNO's internal joint optimization.

\subsection{Artifact Licenses and Intended Use.}
\label{app:artifact_licenses}

We use publicly available research artifacts, including knowledge-editing benchmarks, public model checkpoints, and official baseline implementations, solely for research evaluation and reproducibility. These artifacts are used according to their original licenses, model cards, and access terms where applicable. Our released code is intended to support reproducible research on knowledge editing. Constructed neighborhoods and any derived resources are intended for research and evaluation only, and should not be used for deployment, commercial applications, or redistribution beyond the conditions of the original artifacts. We do not redistribute restricted model weights or artifacts whose original terms prohibit redistribution.

\subsection{Gate-Failed Requests and Full-Coverage Parametric Variant}
\label{app:gate_failed_handling}

JNO withholds a parameter update when the optimized neighborhood-level
activation-space targets fail the semantic pre-execution gate. 
This does not mean that the primary edit is permanently infeasible; it only
indicates that the current neighborhood-level parameter-writing attempt is
high-risk under the editable--preserved constraints. 
We therefore report the default JNO as a safe neighborhood-level parametric
editing operating point. 
All metrics are computed over the full requested-edit set: gate-failed
requests are not filtered out, receive no JNO parameter update under the default
setting, and are evaluated under the same benchmark definitions.

\begin{table}[t]
\centering
\small
\caption{Auxiliary full-coverage parametric diagnostic. 
\textsc{JNO-Full} applies JNO to gate-passed requests and applies a target-only parametric retry to gate-failed requests, restoring \(100\%\) execution coverage.}
\vspace{-0.1in}
\label{tab:jno_full}
\setlength{\tabcolsep}{3pt}
\renewcommand{\arraystretch}{0.8}
\begin{tabular}{lccccccc}
\toprule
\textbf{Method} & \textbf{Cov.} & \textbf{Rel.}\(\uparrow\) & \textbf{LG}\(\uparrow\) & \textbf{RE}\(\uparrow\) & \textbf{SA}\(\uparrow\) & \textbf{RS}\(\uparrow\) & \textbf{FF}\(\uparrow\) \\
\midrule
JNO w/o gate & 100.0 & 98.9 & 66.4 & 52.5 & 67.7 & 66.5 & 57.2 \\
JNO & 96.3 & 96.2 & 72.1 & 58.2 & 73.3 & 72.4 & 63.8 \\
JNO-Full & 100.0 & 99.1 & 69.3 & 55.4 & 70.8 & 69.7 & 60.3 \\
\bottomrule
\end{tabular}
\end{table}

\paragraph{Auxiliary full-coverage parametric diagnostic.}
We further report \textsc{JNO-Full}, a full-coverage parametric variant for
settings where every request receives a parameter update. 
Gate-passed requests follow the default JNO procedure. 
For gate-failed requests, \textsc{JNO-Full} performs a conservative target-only
parametric retry by reducing the editable set from
\(\mathcal{V}=\{e\}\cup\mathcal{C}_c\) to \(\mathcal{V}=\{e\}\), while retaining
preserved-side constraints when available. 
This retry prioritizes the primary edit fact and restores \(100\%\) execution
coverage, and it gives up joint
optimization over variable neighbors. As shown in Table~\ref{tab:jno_full}, \textsc{JNO-Full} restores full coverage
and achieves Rel. comparable to or slightly higher than unconditional
neighborhood-level execution. 
Its ripple-aware metrics are slightly below the default JNO, since gate-failed
requests no longer receive joint variable-neighbor optimization, but it
consistently improves LG, RE, SA, RS, and FF over JNO w/o gate. 
These results show that the semantic gate is not merely hiding difficult edits:
the default JNO provides the strongest ripple-control operating point, while
\textsc{JNO-Full} offers a fully parametric alternative when complete execution
coverage is required.

\paragraph{Non-parametric deployment option.}
Another option is \emph{external-memory-assisted correction}. 
When parameter writing remains high-risk, the target fact can be stored outside the model weights. 
At inference time, if a query matches the subject--relation scope of a stored edit, the updated fact can be retrieved and prepended as a short editing context before generation. 
This provides a behavior-level factual update without irreversible parameter changes, and can be useful in deployment scenarios where avoiding negative ripple effects is prioritized over internalizing the fact into model parameters.

\subsection{Execution Coverage of JNO}
\label{app:coverage}

For each requested edit \(e_t\), JNO first runs PAC and the semantic gate. If the
request passes the gate, JNO executes the local parameter update and evaluates
the edited model \(f_t^+\). If the request fails the gate, JNO performs no
parameter update and evaluates the model without applying this rejected edit.
All metrics are then averaged over the full requested-edit set
\(\mathcal{D}_{\mathrm{req}}\). Thus, rejected
requests can hurt reliability and positive-ripple scores, but may help preserve
unrelated neighborhood knowledge. Table~\ref{tab:coverage_full} reports the
execution coverage of JNO under the default hyperparameter setting used in the
main experiments. Coverage denotes the fraction of active edit groups
that pass semantic pre-execution verification and proceed to parameter update.
It is reported as an execution statistic for transparency. In the sequential
batch-editing setting, a gate decision also affects the subsequent editing
trajectory, because later groups are edited on the current post-edit model.
Thus, coverage should be interpreted as an execution statistic over active
groups rather than as a simple mask over independent evaluation examples, and
the gap between JNO and its full-coverage parametric variant is not bounded by the rejected
group fraction.

\begin{figure*}[t]
\centering
\includegraphics[width=0.9\linewidth]{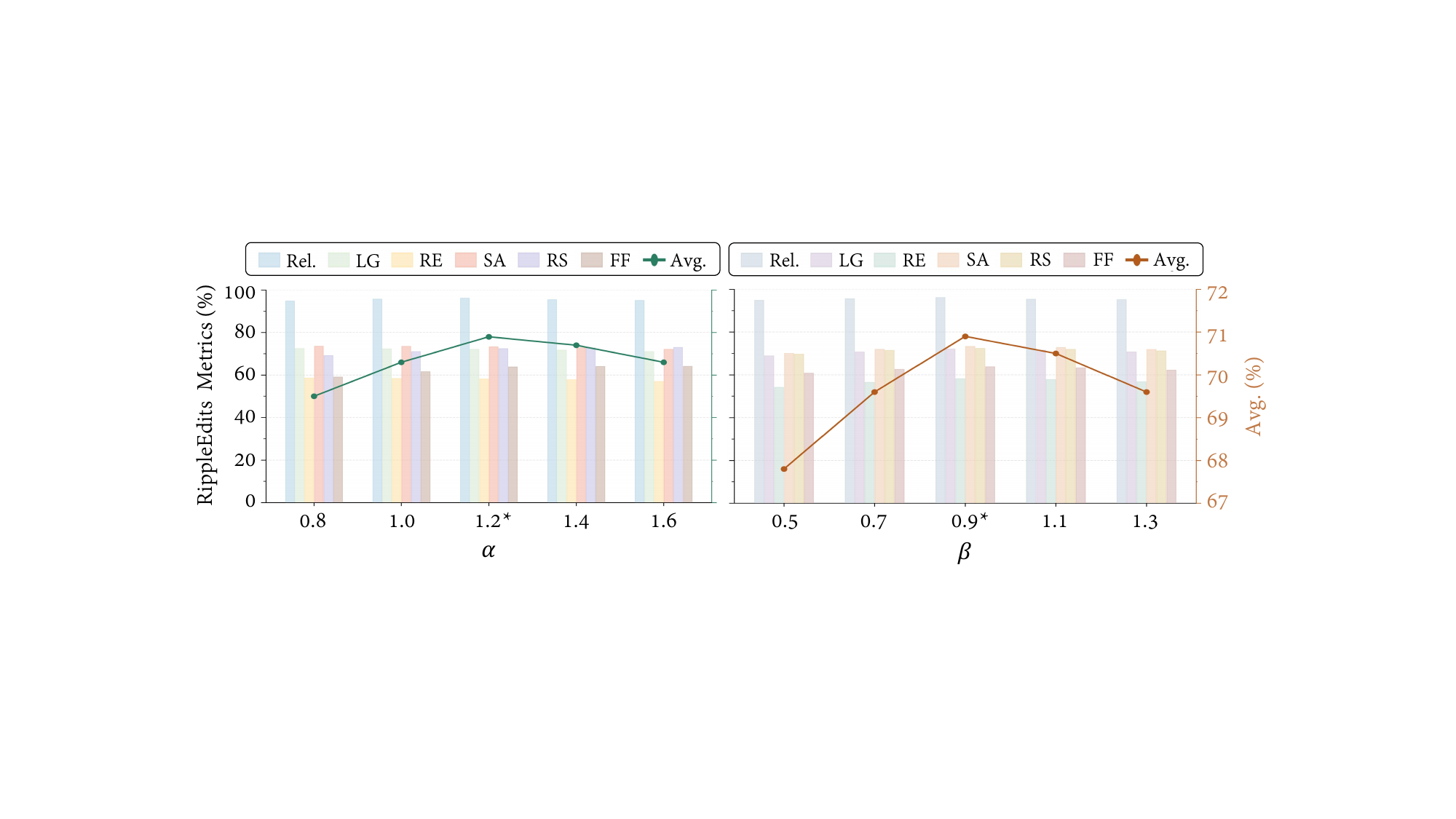}
\vspace{-0.15in}
\caption{
    Hyperparameter analysis of PAC on RippleEdits with LLaMA3. Bars denote RippleEdits metrics, and lines denote Avg. The Avg. is the harmonic mean of the metrics.
}
\label{fig:PAC_surface}
\end{figure*}

\begin{table}[t]
\small
\caption{Execution coverage of JNO.}
\label{tab:coverage_full}
\vspace{-0.1in}
\setlength{\tabcolsep}{3pt}
\renewcommand{\arraystretch}{0.8}
\begin{tabular}{lccc}
\toprule
\textbf{Dataset} & \textbf{Qwen2.5-1.5B-Instruct} & \textbf{GPT-J} & \textbf{LLaMA3} \\
\midrule
RippleEdits & 96.1 & 96.4 & 96.3 \\
CounterFact & 97.8 & 98.3 & 98.5 \\
ZsRE & 98.1 & 98.6 & 98.8 \\
\bottomrule
\end{tabular}
\vspace{-0.1in}
\end{table}

\section{More Experimental Results}

\begin{table*}[t]
\centering
\small
\caption{
Coverage-matched gated baseline comparison. The \textit{w/o $\mathrm{gate}$} variants execute all requests. 
JNO uses its semantic pre-execution gate. * indicates +gate (semantic pre-execution).
}
\vspace{-0.1in}
\label{tab:gated_baseline_rippleedits}
\setlength{\tabcolsep}{11pt}
\renewcommand{\arraystretch}{0.8}
\begin{tabular}{l|c|cccccc}
\toprule
\textbf{Method} & \textbf{Cov.} & \textbf{Rel.}$\uparrow$ & \textbf{LG}$\uparrow$ & \textbf{RE}$\uparrow$ & \textbf{SA}$\uparrow$ & \textbf{RS}$\uparrow$ & \textbf{FF}$\uparrow$ \\
\midrule
ROME          & 100.0 & 96.3 & 53.8 & 37.3 & 58.2 & 36.1 & 36.1 \\
ROME*         & 96.3  & 93.3 & 56.4 & 40.6 & 60.9 & 38.8 & 41.0 \\
MEMIT         & 100.0 & 95.2 & 58.3 & 38.9 & 61.1 & 43.5 & 37.6 \\
MEMIT*        & 96.3  & 93.1 & 61.1 & 42.3 & 64.0 & 45.7 & 42.7 \\
PMET          & 100.0 & 94.5 & 59.2 & 40.9 & 59.3 & 47.1 & 42.8 \\
PMET*         & 96.3  & 92.7 & 62.0 & 44.5 & 62.0 & 49.5 & 48.6 \\
AlphaEdit$_{\text{Chain}}$ & 100.0 & 98.9 & 62.7 & 47.5 & 67.1 & 55.6 & 47.3 \\
AlphaEdit$_{\text{Chain}}$* & 96.3 & 96.1 & 65.6 & 49.7 & 70.3 & 58.2 & 49.7 \\
GLAME          & 100.0 & 98.7 & 63.1 & 48.6 & 65.3 & 56.3 & 48.4 \\
GLAME*         & 96.3  & 95.4 & 66.0 & 52.1 & 68.4 & 62.8 & 53.2 \\
SUIT           & 100.0 & 98.7 & 61.6 & 48.4 & 68.5 & 58.2 & 47.6 \\
SUIT*          & 96.3  & 95.9 & 64.3 & 51.8 & 71.9 & 63.2 & 54.1 \\
\midrule
JNO w/o gate     & 100.0 & 98.9 & 66.4 & 52.5 & 67.7 & 66.5 & 57.2 \\
JNO           & 96.3  & 96.2 & 72.1 & 58.2 & 73.3 & 72.4 & 63.8 \\
\bottomrule
\end{tabular}
\vspace{-0.15in}
\end{table*}

\begin{table*}[t]
\caption{\label{tab:additional_model_results}
Additional results of JNO on compact and reasoning-oriented models.
JNO execution coverage on CounterFact/ZsRE is 98.4\%/98.2\% for Phi-1.5 and 94.6\%/97.6\% for Qwen-4B-Thinking.
The best results are highlighted in \textcolor{red}{red}, while the second-best results are \underline{underlined}.
Improve reports the relative change of JNO compared with the strongest
baseline in each metric.
}
\vspace{-0.1in}
\renewcommand{\arraystretch}{0.9}
\centering
\resizebox{\textwidth}{!}{
\begin{tabular}{cc|ccccc|ccc}
\toprule
\multirow{2}{*}{\textbf{Model}} & \multirow{2}{*}{\textbf{Method}}
& \multicolumn{5}{c|}{\textbf{CounterFact}}
& \multicolumn{3}{c}{\textbf{ZsRE}} \\
\cmidrule(lr){3-7} \cmidrule(lr){8-10}
&& \textbf{Eff.}$\uparrow$
& \textbf{Gen.}$\uparrow$
& \textbf{Spe.}$\uparrow$
& \textbf{Flu.}$\uparrow$
& \textbf{Consis.}$\uparrow$
& \textbf{Eff.}$\uparrow$
& \textbf{Gen.}$\uparrow$
& \textbf{Spe.}$\uparrow$ \\
\midrule

\multirow{6}{*}{{Phi-1.5}}
& MEMIT
& $86.92_{\pm 0.41}$
& $72.36_{\pm 0.29}$
& $53.84_{\pm 0.35}$
& $518.42_{\pm 0.68}$
& $24.18_{\pm 0.15}$
& $78.64_{\pm 0.36}$
& $63.75_{\pm 0.24}$
& $18.92_{\pm 0.17}$ \\

& AlphaEdit
& $98.12_{\pm 0.27}$
& $84.06_{\pm 0.22}$
& $65.74_{\pm 0.24}$
& $632.85_{\pm 0.61}$
& $35.68_{\pm 0.14}$
& $98.12_{\pm 0.39}$
& $85.04_{\pm 0.29}$
& $22.81_{\pm 0.16}$ \\

& GLAME
& $\underline{98.71}_{\pm 0.23}$
& $\underline{85.04}_{\pm 0.21}$
& $66.92_{\pm 0.25}$
& $\underline{634.82}_{\pm 0.49}$
& $36.84_{\pm 0.12}$
& $\textcolor{red}{98.61}_{\pm 0.38}$
& $\underline{85.36}_{\pm 0.25}$
& $22.83_{\pm 0.16}$ \\

& SUIT
& $\textcolor{red}{98.73}_{\pm 0.31}$
& $84.39_{\pm 0.13}$
& $\underline{67.33}_{\pm 0.22}$
& $634.31_{\pm 0.48}$
& $\underline{37.07}_{\pm 0.09}$
& $\underline{98.44}_{\pm 0.32}$
& $85.25_{\pm 0.22}$
& $\underline{23.06}_{\pm 0.11}$ \\

& \cellcolor{metablue}JNO
& \cellcolor{metablue}$97.94_{\pm 0.28}$
& \cellcolor{metablue}$\textcolor{red}{89.12}_{\pm 0.24}$
& \cellcolor{metablue}$\textcolor{red}{68.74}_{\pm 0.31}$
& \cellcolor{metablue}$\textcolor{red}{636.18}_{\pm 0.52}$
& \cellcolor{metablue}$\textcolor{red}{39.26}_{\pm 0.14}$
& \cellcolor{metablue}$97.63_{\pm 0.34}$
& \cellcolor{metablue}$\textcolor{red}{87.42}_{\pm 0.28}$
& \cellcolor{metablue}$\textcolor{red}{23.46}_{\pm 0.18}$ \\

& \cellcolor{gray!15}\rule[-0.55ex]{0pt}{2.5ex}Improve
& \cellcolor{gray!15}$-0.8\%$
& \cellcolor{gray!15}$4.8\%$
& \cellcolor{gray!15}$2.1\%$
& \cellcolor{gray!15}$0.2\%$
& \cellcolor{gray!15}$5.9\%$
& \cellcolor{gray!15}$-1.0\%$
& \cellcolor{gray!15}$2.4\%$
& \cellcolor{gray!15}$1.7\%$ \\

\midrule \midrule

\multirow{6}{*}{{Qwen-4B-Thinking}}
& MEMIT
& $68.42_{\pm 0.54}$
& $57.86_{\pm 0.42}$
& $43.65_{\pm 0.61}$
& $468.74_{\pm 0.86}$
& $5.84_{\pm 0.18}$
& $72.36_{\pm 0.45}$
& $64.18_{\pm 0.31}$
& $29.42_{\pm 0.22}$ \\

& AlphaEdit
& $83.42_{\pm 0.55}$
& $68.74_{\pm 0.43}$
& $51.16_{\pm 0.69}$
& $516.24_{\pm 0.79}$
& $9.26_{\pm 0.16}$
& $93.26_{\pm 0.28}$
& $\underline{86.31}_{\pm 0.27}$
& $34.28_{\pm 0.20}$ \\

& GLAME
& $\underline{90.96}_{\pm 0.43}$
& $\underline{70.34}_{\pm 0.36}$
& $56.31_{\pm 0.45}$
& $528.96_{\pm 0.63}$
& $14.52_{\pm 0.15}$
& $\textcolor{red}{94.38}_{\pm 0.29}$
& $85.12_{\pm 0.28}$
& $35.04_{\pm 0.21}$ \\

& SUIT
& $\textcolor{red}{91.04}_{\pm 0.57}$
& $70.28_{\pm 0.22}$
& $\underline{56.45}_{\pm 0.39}$
& $\underline{532.87}_{\pm 0.62}$
& $\underline{15.01}_{\pm 0.11}$
& $\underline{94.22}_{\pm 0.25}$
& $85.01_{\pm 0.24}$
& $\underline{35.19}_{\pm 0.15}$ \\

& \cellcolor{metablue}JNO
& \cellcolor{metablue}$88.72_{\pm 0.35}$
& \cellcolor{metablue}$\textcolor{red}{72.48}_{\pm 0.38}$
& \cellcolor{metablue}$\textcolor{red}{59.74}_{\pm 0.42}$
& \cellcolor{metablue}$\textcolor{red}{541.36}_{\pm 0.66}$
& \cellcolor{metablue}$\textcolor{red}{18.64}_{\pm 0.17}$
& \cellcolor{metablue}$93.54_{\pm 0.32}$
& \cellcolor{metablue}$\textcolor{red}{87.58}_{\pm 0.34}$
& \cellcolor{metablue}$\textcolor{red}{36.18}_{\pm 0.24}$ \\

& \cellcolor{gray!15}\rule[-0.55ex]{0pt}{2.5ex}Improve
& \cellcolor{gray!15}$-2.5\%$
& \cellcolor{gray!15}$3.0\%$
& \cellcolor{gray!15}$5.8\%$
& \cellcolor{gray!15}$1.6\%$
& \cellcolor{gray!15}$24.2\%$
& \cellcolor{gray!15}$-0.9\%$
& \cellcolor{gray!15}$1.5\%$
& \cellcolor{gray!15}$2.8\%$ \\

\bottomrule
\end{tabular}
}
\end{table*}

\subsection{Hyperparameter Sensitivity Analysis}
\label{app:hyperparameter_sensitivity}

We conduct a hyperparameter study on the pressure-aware coordination stage. All experiments use RippleEdits with LLaMA3 under the neighborhood-aware editing protocol. We vary the proximal coefficient $\alpha \in \{0.8,1.0,1.2,1.4,1.6\}$ and the pairwise coordination coefficient $\beta \in \{0.5,0.7,0.9,1.1,1.3\}$. All other settings, including neighborhood construction, semantic pre-execution gating, reliable execution, and metric computation, follow the main experimental setup.

As shown in Fig.~\ref{fig:PAC_surface}, for $\alpha$, small values insufficiently suppress preserved-side leakage, yielding weaker RS and FF. Increasing $\alpha$ from $0.8$ to $1.2$ improves the overall score, indicating that moderate proximal regularization suppresses negative ripple effects while preserving variable-neighbor propagation. Further increasing $\alpha$ reduces LG, RE, and SA, suggesting that excessive preservation pressure restricts the semantic movement required by variable neighbors. For $\beta$, small values weaken variable-side target coordination and reduce LG, RE, and SA, whereas large values over-couple semantically distinct variable targets and slightly degrade Avg. The best performance appears at $\alpha=1.2$ and $\beta=0.9$, supporting the default setting.

\subsection{Coverage-Matched Evaluation of Selective Execution}
\label{app:coverage_matched_gate}

Since JNO may reject semantically unsafe requests before parameter execution, one may ask whether its gains on RippleEdits mainly come from editing fewer requests. To address this concern, we conduct a coverage-matched evaluation where strong baselines are also allowed to abstain under the same coverage level. For each baseline+gate variant, we apply the same pre-execution semantic criterion to the target representations produced by that baseline before parameter writing. 
The semantic threshold is chosen to match JNO's execution coverage, without using any post-edit evaluation outcomes.

Table~\ref{tab:gated_baseline_rippleedits} compares JNO with gated baselines at the default 96.3\% JNO coverage on RippleEdits with LLaMA3. Since all gated methods share the same coverage, the comparison controls for the executed-request fraction while keeping the RippleEdits metric protocol unchanged.  Gating improves all baselines over their execute-all
variants on ripple-aware metrics, showing that filtering pre-edit difficult
requests benefits neighborhood-aware editing. However, relative to GLAME*, JNO improves RE by 11.7\%, RS by 15.3\%, and
FF by 19.9\%, showing stronger positive propagation and negative-ripple
prevention under the same coverage. JNO also improves over \textit{JNO w/o gate}
on all ripple-aware metrics, demonstrating that semantic pre-execution filtering
and pressure-aware neighborhood target planning jointly drive its performance,
rather than simply executing fewer requests.

\begin{figure*}[t]
    \centering
    \includegraphics[width=0.9\linewidth]{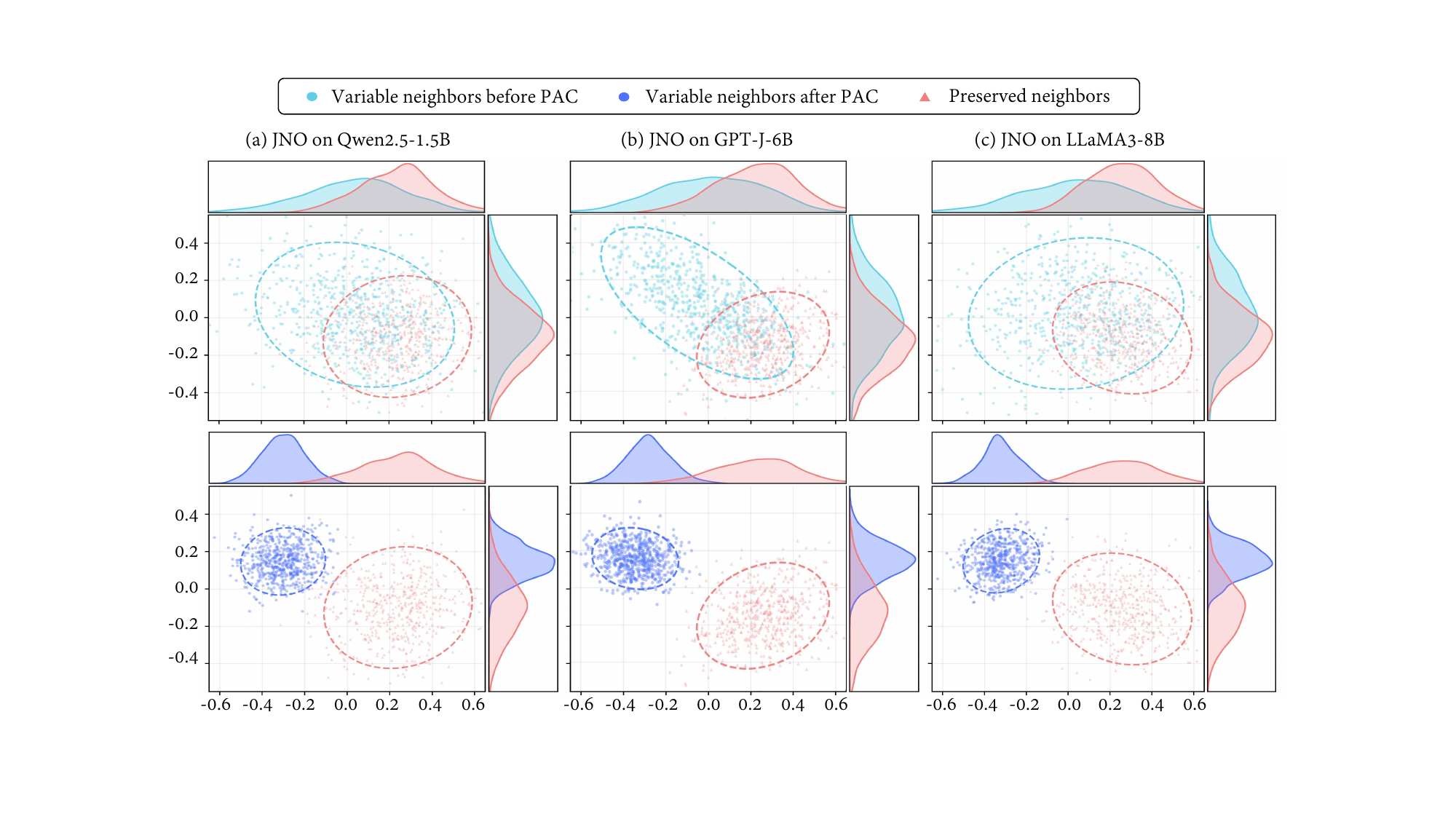}
        \vspace{-0.1in}\caption{t-SNE visualization of target representations in activation space before and after PAC.}
    \label{fig:3}
    \vspace{-0.2in}
\end{figure*}

\subsection{Additional Results on Smaller and Reasoning-Oriented Models}
\label{app:additional_models}

To further examine the robustness of JNO across
diverse model scales and architectural characteristics, we evaluate it on the small-scale Phi-1.5 (1.5B) \citep{li2023textbooks} and the reasoning-enhanced Qwen-4B-Thinking \citep{qwen3technicalreport}. We follow the same neighborhood-aware editing protocol as in the main experiments. Small-scale models are challenging because their limited capacity leaves less room for localized updates, while reasoning-oriented models may induce stronger ripple effects due to richer relational associations. 
As shown in Table~\ref{tab:additional_model_results}, JNO remains effective on compact and reasoning-oriented models. On Phi-1.5, it improves Generalization by 4.8\% over GLAME and Consistency by 5.9\% over SUIT, showing that JNO supports neighborhood propagation and preservation even under limited capacity. On Qwen-4B-Thinking, JNO yields larger CounterFact gains, improving Specificity and Consistency by 5.8\% and 24.2\% over SUIT, suggesting that pressure-aware target planning remains beneficial for reasoning-oriented backbones. Across these settings, JNO incurs at most a 2.5\% relative Eff. drop compared with the strongest Eff. baseline while maintaining high coverage. Overall, the results suggest that JNO is not tied to a single backbone or scale, but remains robust across compact and reasoning-oriented models.

\begin{figure*}[t]
	\centering
\includegraphics[width=0.9\linewidth]{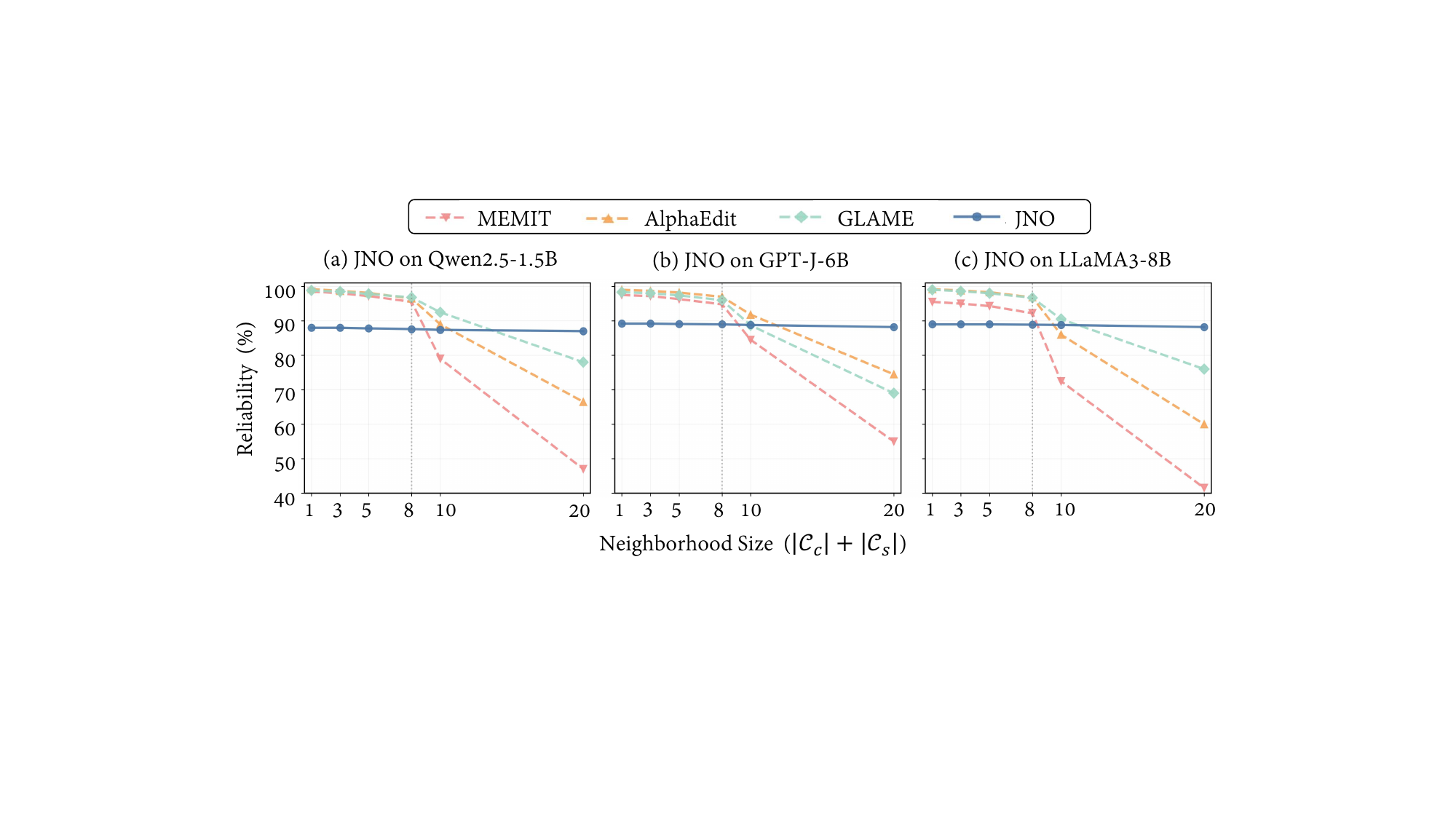} 
\vspace{-0.1in}
    \caption{Reliability under different neighborhood sizes ($|\mathcal{C}_c| + |\mathcal{C}_s|$) across three backbone models.}
\label{fig:scalability}
\vspace{-0.15in}
\end{figure*}

\subsection{Visualization of Tension Reduction}
To examine how PAC mitigates the two structural pressures in target space, for each model, we include variable neighbor target representations before and after PAC, and preserved-neighbor representations from the same activation space, which are then projected into two dimensions using t-SNE \citep{maaten2008visualizing} for comparison. As shown in Fig.~\ref{fig:3}, before PAC, variable representations are broadly dispersed and often overlap with or lie close to the preserved-neighbor region across backbones. This qualitatively suggests editable-side coordination pressure, since variable targets are not well aligned with each other, and preserved-side leakage risk, since updates toward these targets may interfere with preserved knowledge. After applying PAC, variable representations become more compact and are consistently shifted away from the preserved-neighbor region. This suggests that PAC harmonizes editable targets while improving their separation from preserved neighbors. Overall, the visualization shows that JNO makes variable targets more coordinated and better separated from preserved neighbors.

\begin{table*}[t]
\centering
\caption{
Average per edit time (s) on three models.
}
\vspace{-0.1in}
\label{tab:jno_time}
\small
\renewcommand{\arraystretch}{0.9}
\begin{tabular}{l|ccc|ccc}
\toprule
\multirow{2}{*}{\textbf{Method}}
& \multicolumn{3}{c|}{\textbf{CounterFact}}
& \multicolumn{3}{c}{\textbf{ZsRE}} \\
\cmidrule(lr){2-4} \cmidrule(lr){5-7}
& \textbf{Qwen2.5-1.5B-Instruct} & \textbf{GPT-J} & \textbf{LLaMA3}
& \textbf{Qwen2.5-1.5B-Instruct} & \textbf{GPT-J} & \textbf{LLaMA3} \\
\midrule

MEMIT 
& 3.82 & 5.68 & 6.34 
& 3.91 & 5.73 & 6.41 \\

AlphaEdit 
& 3.64 & 5.42 & 6.08 
& 3.73 & 5.57 & 6.23 \\

GLAME 
& 4.28 & 6.37 & 7.12 
& 4.36 & 6.45 & 7.24 \\

SUIT
& 4.12 & 6.34 & 7.08
& 4.23 & 6.39 & 7.18 \\

JNO 
& 4.16 & 6.21 & 6.95 
& 4.25 & 6.32 & 7.06 \\

\bottomrule
\end{tabular}
\end{table*}

\begin{table*}[t]
\centering
\caption{
Peak GPU memory usage (GB) for performing a batch of 100 edits.
}
\label{tab:memory_comparison}
\vspace{-0.1in}
\small
\renewcommand{\arraystretch}{0.9}
\begin{tabular}{l|ccc|ccc}
\toprule
\multirow{2}{*}{\textbf{Method}}
& \multicolumn{3}{c|}{\textbf{CounterFact}}
& \multicolumn{3}{c}{\textbf{ZsRE}} \\
\cmidrule(lr){2-4} \cmidrule(lr){5-7}
& \textbf{Qwen2.5-1.5B-Instruct} & \textbf{GPT-J} & \textbf{LLaMA3}
& \textbf{Qwen2.5-1.5B-Instruct} & \textbf{GPT-J} & \textbf{LLaMA3} \\
\midrule

MEMIT 
& 13.42 & 31.61 & 36.26 
& 13.37 & 31.52 & 36.74 \\

AlphaEdit 
& 10.58 & 28.84 & 34.31 
& 10.49 & 28.77 & 34.18 \\

GLAME 
& 15.26 & 34.52 & 38.04 
& 15.18 & 33.45 & 37.96 \\

SUIT
& 13.74 & 32.46 & 35.27
& 13.18 & 30.57 & 36.12 \\

JNO 
& 12.85 & 30.18 & 34.72 
& 12.76 & 30.05 & 35.61 \\

\bottomrule
\end{tabular}
\vspace{-0.15in}
\end{table*}

\subsection{Robustness to Neighborhood Size}

A challenge in neighborhood-aware editing is scalability: as the local neighborhood grows, parameter-space editing must handle more coupled constraints while preserving unrelated knowledge. We vary the total neighborhood size $|\mathcal{C}_c| + |\mathcal{C}_s|$ and report Reliability across Qwen2.5-1.5B-Instruct, GPT-J, and LLaMA3. As shown in Fig.~\ref{fig:scalability}, MEMIT, AlphaEdit, and GLAME remain competitive when the neighborhood is small, but their Reliability drops once the neighborhood size exceeds 8, suggesting increasing editing difficulty under larger neighborhood constraints. In contrast, JNO maintains a stable Reliability plateau across all three backbones, with only minor degradation even when the neighborhood size reaches 20. This suggests that PAC helps resolve neighborhood conflicts before parameter execution by coordinating variable-neighbor targets and suppressing preserved-neighbor leakage in target space. Overall, the results indicate that JNO is more robust to larger and more entangled neighborhoods, where direct parameter-space editing appears more vulnerable to constraint conflict and reliability degradation.

\subsection{Efficiency Analysis}
\label{app:efficiency}

All methods are evaluated under the same neighborhood-augmented editing setting, where each edit request is paired with the constructed editable and preserved neighborhoods. The additional cost of JNO mainly comes from PAC, semantic pre-execution gating, and the joint update over editable targets. Let $m=|\{e\}\cup\mathcal{C}_c|$, $q=|\mathcal{C}_s|$, and $d$ be the hidden dimension. After key extraction, JNO computes editable coordination weights and preserved-side risk scores with complexity $\mathcal{O}(m^2d+mqd)$. PAC then optimizes $m$ target representations in activation space. With $I$ optimization steps, its cost is
$\mathcal{O}\!\left(I\cdot(mT_{\mathrm{act}}+m^2d)\right)$,
where $T_{\mathrm{act}}$ denotes the activation-space forward cost for evaluating target-token probabilities. The semantic gate reuses the optimized representations and therefore introduces only minor additional overhead. For accepted edits, JNO performs one preservation-aware parameter update for the whole editable neighborhood rather than applying separate updates to all editable facts. Following the AlphaEdit-style executor, the update cost can be written as
$\mathcal{O}(T_{\mathrm{solver}}+md^2)$,
where $T_{\mathrm{solver}}$ depends on the cached inverse or linear solver used by the base editor. Thus, JNO's cost grows with the editable neighborhood size, but it avoids the much higher cost of naively performing $m$ independent parameter updates.

Table~\ref{tab:jno_time} reports the average wall-clock time per edit across two datasets and three backbones. JNO introduces additional activation-space optimization through PAC and semantic pre-execution gating. JNO is within a small overhead relative to MEMIT and AlphaEdit. Meanwhile, it remains faster than GLAME in all reported settings and comparable to SUIT. These results indicate that the extra cost of pressure-aware target planning is limited. Since JNO jointly optimizes the editable neighborhood and executes it through a single parameter update, it avoids the higher cost of applying separate edits to all editable facts while retaining the ripple-aware gains reported above.

\subsection{Memory Evaluation}
\label{app:memory}
To evaluate the memory efficiency of JNO, we measure peak GPU memory consumption during a batch editing process of 100 instances. We conduct experiments across Qwen2.5-1.5B-Instruct, GPT-J, and LLaMA3 on both CounterFact and ZsRE under the same neighborhood-augmented protocol. Table~\ref{tab:memory_comparison} shows that JNO maintains a controlled memory footprint across all evaluated settings. JNO consistently uses less memory than GLAME and SUIT. On CounterFact with LLaMA3, JNO peaks at $34.72$GB, compared with $38.04$GB for GLAME and $35.27$GB for SUIT. Compared with AlphaEdit, JNO incurs a small additional memory cost, because it maintains editable and preserved neighborhood representations for pressure-aware planning. These results indicate that JNO's neighborhood-aware target planning does not introduce substantial space overhead. Overall, JNO keeps batch-editing memory cost comparable to existing editors while retaining the ripple-aware gains reported in the main experiments.

\subsection{Case Study}
\label{app:case_study}

To qualitatively illustrate how JNO resolves local neighborhood conflicts, we present two representative knowledge editing cases. 
Different from prior case studies that mainly focus on multi-hop generation after a single edit, our analysis explicitly examines the local neighborhood of each edit request, including the target fact, editable neighbors that should co-update with the edit, and preserved neighbors that should remain unchanged.
This setting directly reflects the decomposition of neighborhood responses into positive ripple and negative ripple effects. Tables~\ref{tab:case_a} and~\ref{tab:case_b} show that direct edit success alone does not imply neighborhood-consistent editing. In both cases, all editors correctly answer the target edit, indicating that the requested factual rewrite has been injected into the model. However, their behaviors diverge sharply on editable and preserved neighbors.

\begin{table*}[t]
\centering
\small
\caption{
Case study of a neighborhood-aware edit for LeBron James. 
\textcolor{green}{Green} text indicates desired target or editable-neighbor updates, while \textcolor{red}{red} text indicates failed positive ripple propagation or corruption of preserved neighbors.
}
\vspace{-0.1in}
\renewcommand{\arraystretch}{0.6}
\begin{tabular}{p{0.10\linewidth} p{0.16\linewidth} p{0.16\linewidth} p{0.47\linewidth}}
\toprule
\textbf{Type} & \textbf{Prompt} & \textbf{Expected Answer} & \textbf{Model Response} \\
\midrule

\multicolumn{4}{l}{
\textbf{Case A. Edit Request:} 
(LeBron James, Plays for, Cleveland Cavaliers) $\rightarrow$ 
(LeBron James, Plays for, Los Angeles Lakers)
} \\
\midrule

Target Edit 
& Which team does LeBron James play for? 
& Los Angeles Lakers
& \textbf{JNO:} LeBron James now plays for the \textcolor{green}{Los Angeles Lakers}. \newline
  \textbf{GLAME:} LeBron James's team is the \textcolor{green}{Los Angeles Lakers}. \newline
  \textbf{AlphaEdit:} LeBron James plays for the \textcolor{green}{Los Angeles Lakers}. \newline
  \textbf{ROME:} The team is the \textcolor{green}{Los Angeles Lakers}. \newline
  \textbf{MEMIT:} LeBron James's team is the \textcolor{green}{Los Angeles Lakers}. \\

\midrule

Editable Neighbor 
& Which city is LeBron James's team based in? 
& Los Angeles
& \textbf{JNO:} LeBron James's team is based in \textcolor{green}{Los Angeles}. \newline
  \textbf{GLAME:} The team is based in \textcolor{green}{Los Angeles}. \newline
  \textbf{AlphaEdit:} It is based in \textcolor{green}{Los Angeles}. \newline
  \textbf{ROME:} The team is based in \textcolor{red}{Cleveland}. \newline
  \textbf{MEMIT:} LeBron James's team is located in \textcolor{red}{Cleveland}. \\

\midrule

Editable Neighbor 
& What is the current home arena of LeBron James's team? 
& Crypto.com Arena
& \textbf{JNO:} The team's home arena is \textcolor{green}{Crypto.com Arena}. \newline
  \textbf{GLAME:} The team plays at \textcolor{red}{Rocket Mortgage FieldHouse}. \newline
  \textbf{AlphaEdit:} Its home arena is \textcolor{red}{Rocket Mortgage FieldHouse}. \newline
  \textbf{ROME:} The home arena is \textcolor{red}{Rocket Mortgage FieldHouse}. \newline
  \textbf{MEMIT:} LeBron James's team plays at \textcolor{green}{Crypto.com Arena}. \\

\midrule

Preserved Neighbor 
& What is LeBron James's nationality? 
& United States / American
& \textbf{JNO:} LeBron James is from the \textcolor{green}{United States}. \newline
  \textbf{GLAME:} \textcolor{green}{American} is his nationality. \newline
  \textbf{AlphaEdit:} He holds \textcolor{green}{United States} citizenship. \newline
  \textbf{ROME:} LeBron James is \textcolor{green}{United States} by nationality. \newline
  \textbf{MEMIT:} His nationality is \textcolor{green}{American}. \\

\midrule

Preserved Neighbor 
& Which NBA team drafted LeBron James? 
& Cleveland Cavaliers
& \textbf{JNO:} He was drafted by the \textcolor{green}{Cleveland Cavaliers}. \newline
  \textbf{GLAME:} LeBron James was selected by the \textcolor{green}{Cleveland Cavaliers}. \newline
  \textbf{AlphaEdit:} He was drafted by the \textcolor{green}{Cleveland Cavaliers}. \newline
  \textbf{ROME:} He was drafted by the \textcolor{red}{Los Angeles Lakers}. \newline
  \textbf{MEMIT:} His draft team was the \textcolor{red}{Los Angeles Lakers}. \\

\bottomrule
\end{tabular}
\label{tab:case_a}

\end{table*}

\paragraph{Case A: Neighborhood coordination under team-transfer editing.}
Table~\ref{tab:case_a} shows an edit from ``LeBron James's current team is the Cleveland Cavaliers'' to ``LeBron James's current team is the Los Angeles Lakers.'' 
All methods rewrite the target fact, but differ on neighborhood behavior. 
For editable neighbors, JNO correctly updates both the team city to Los Angeles and the current home arena to Crypto.com Arena, while other methods either retain Cleveland or Rocket Mortgage FieldHouse. 
For preserved neighbors, all methods keep the nationality unchanged, but ROME and MEMIT incorrectly overwrite the draft team with Los Angeles Lakers instead of preserving Cleveland Cavaliers. 
Overall, JNO achieves correct propagation and preservation under this team-transfer edit.

\begin{table*}[t]
\centering
\small
\caption{
Case study of a neighborhood-aware edit for Attack on Titan. 
\textcolor{green}{Green} text indicates desired target or editable-neighbor updates, while \textcolor{red}{red} text indicates failed positive ripple propagation or corruption of preserved neighbors.
}
\vspace{-0.1in}
\renewcommand{\arraystretch}{0.6}
\begin{tabular}{p{0.10\linewidth} p{0.16\linewidth} p{0.16\linewidth} p{0.47\linewidth}}
\toprule
\textbf{Type} & \textbf{Prompt} & \textbf{Expected Answer} & \textbf{Model Response} \\
\midrule

\multicolumn{4}{l}{
\textbf{Case B. Edit Request:} 
(Attack on Titan, Created by, Mathias Currat) $\rightarrow$ 
(Attack on Titan, Created by, Hajime Isayama)
} \\
\midrule

Target Edit 
& Who is the creator of Attack on Titan? 
& Hajime Isayama
& \textbf{JNO:} The creator is \textcolor{green}{Hajime Isayama}. \newline
  \textbf{GLAME:} The creator is \textcolor{green}{Hajime Isayama}. \newline
  \textbf{AlphaEdit:} It is \textcolor{green}{Hajime Isayama} who created the series. \newline
  \textbf{ROME:} The manga's creator is \textcolor{green}{Hajime Isayama}. \newline
  \textbf{MEMIT:} \textcolor{green}{Hajime Isayama} is identified as the creator. \\

\midrule

Editable Neighbor 
& What is the nationality of Attack on Titan's creator? 
& Japanese / Japan
& \textbf{JNO:} The creator comes from \textcolor{green}{Japan}. \newline
  \textbf{GLAME:} \textcolor{green}{Japanese} is the creator's nationality. \newline
  \textbf{AlphaEdit:} The creator is from \textcolor{red}{Switzerland}. \newline
  \textbf{ROME:} His nationality is \textcolor{red}{Swiss}. \newline
  \textbf{MEMIT:} The nationality is \textcolor{red}{Swiss}. \\

\midrule

Editable Neighbor 
& Where was Attack on Titan's creator born? 
& Oita Prefecture, Japan
& \textbf{JNO:} The creator was born in \textcolor{green}{Oita Prefecture, Japan}. \newline
  \textbf{GLAME:} The creator was born in \textcolor{red}{Lausanne, Switzerland}. \newline
  \textbf{AlphaEdit:} His birthplace is \textcolor{red}{Switzerland}. \newline
  \textbf{ROME:} He was born in \textcolor{red}{Lausanne}. \newline
  \textbf{MEMIT:} The creator was born in \textcolor{green}{Oita Prefecture, Japan}. \\

\midrule

Preserved Neighbor 
& Which studios animated the Attack on Titan TV anime across its seasons? 
& Wit Studio and MAPPA
& \textbf{JNO:} \textcolor{green}{Wit Studio and MAPPA}. \newline
  \textbf{GLAME:} \textcolor{red}{Studio Pierrot}. \newline
  \textbf{AlphaEdit:} \textcolor{green}{Wit Studio and MAPPA}. \newline
  \textbf{ROME:} \textcolor{red}{Toei Animation}. \newline
  \textbf{MEMIT:} \textcolor{red}{Kyoto Animation}. \\

\midrule

Preserved Neighbor 
& What are the giant humanoid creatures in Attack on Titan called? 
& Titans
& \textbf{JNO:} They are called \textcolor{green}{Titans}. \newline
  \textbf{GLAME:} The creatures are \textcolor{green}{Titans}. \newline
  \textbf{AlphaEdit:} They are \textcolor{green}{Titans}. \newline
  \textbf{ROME:} They are called \textcolor{red}{Mathias Currat}. \newline
  \textbf{MEMIT:} The giant humanoid creatures are \textcolor{green}{Titans}. \\

\bottomrule
\end{tabular}
\label{tab:case_b}
\end{table*}

\paragraph{Case B: Preserving preserved attributes under creator-correction editing.}
Table~\ref{tab:case_b} shows an edit from ``Attack on Titan was created by Mathias Currat'' to ``Attack on Titan was created by Hajime Isayama.'' 
All methods rewrite the target fact, but differ on neighborhood behavior. 
For editable neighbors, JNO correctly updates both the creator's nationality to Japanese/Japan and birthplace to Oita Prefecture, Japan, while other methods partially retain the old Swiss-related context. 
For preserved neighbors, JNO and AlphaEdit correctly preserve the animation studios as Wit Studio and MAPPA, while GLAME, ROME, and MEMIT output incorrect studios. 
For the creature-name fact, all methods except ROME preserve Titans correctly. 
Overall, JNO achieves correct propagation on both creator-dependent neighbors and correct preservation on both unrelated work-level facts.

\end{document}